\newcommand{\blue}[1]{\textcolor{blue}{#1}} 
\newcommand{\prompt}{\mathrm{pt}}
\newcommand{\sm}{\texttt{softmax}}
\newcommand{\alg}{\mathtt{ALG}}
\newcommand{\SDR}{\textsc{SpecDecSub}}
\newcommand{\concat}{\mathrm{concat}}
\newcommand{\EOS}{\mathrm{EOS}}
\newcommand{\ST}{\mathrm{ST}}
\newcommand{\Reg}{\mathrm{Reg}}
\newcommand{\wtReg}{\widetilde{\mathrm{Reg}}}
\newcommand{\len}{\mathrm{len}}
\newcommand{\specucb}{\textsc{UCBSpec}}
\newcommand{\specexp}{\textsc{EXP3Spec}}
\newcommand{\specbandit}{\textsc{BanditSpec}}
\newcommand{\whatZ}{\widehat{Z}}
\newcommand{\simplexK}{\bDelta_{[K]}}
\newcommand{\return}{\textbf{return}}
\newcommand{\KL}{\mathrm{KL}}
\newcommand{\given}{\,|\,}
\newcommand{\nor}{\text{Norm}}
\newcommand{\rmcr}{\mathrm{cr}}
\newcommand{\openone}{\leavevmode\hbox{\small1\normalsize\kern-.33em1}} 
\def\UrlSpecials{\do\~{\kern -.15em\lower .7ex\hbox{~}\kern .04em}} \catcode`~=13 
\newcommand{\unif}{{\text{Unif}}}
\newcommand{\calE}{\mathcal{E}}
\newcommand{\calF}{\mathcal{F}}
\newcommand{\calH}{\mathcal{H}}
\newcommand{\calS}{\mathcal{S}}
\newcommand{\calX}{\mathcal{X}}
\newcommand{\rmc}{\mathrm{c}}
\newcommand{\rmD}{\mathrm{D}}
\newcommand{\rmH}{\mathrm{H}}
\newcommand{\bbE}{\mathbb{E}}
\newcommand{\bbF}{\mathbb{F}}
\newcommand{\bbN}{\mathbb{N}}
\newcommand{\bbP}{\mathbb{P}}
\newcommand{\bbR}{\mathbb{R}}
\DeclareMathAlphabet{\mathbsf}{OT1}{cmss}{bx}{n}
\DeclareMathAlphabet{\mathssf}{OT1}{cmss}{m}{sl}
\DeclareSymbolFont{bsfletters}{OT1}{cmss}{bx}{n}  
\DeclareSymbolFont{ssfletters}{OT1}{cmss}{m}{n}
\DeclareMathSymbol{\bsfGamma}{0}{bsfletters}{'000}
\DeclareMathSymbol{\ssfGamma}{0}{ssfletters}{'000}
\DeclareMathSymbol{\bsfDelta}{0}{bsfletters}{'001}
\DeclareMathSymbol{\ssfDelta}{0}{ssfletters}{'001}
\DeclareMathSymbol{\bsfTheta}{0}{bsfletters}{'002}
\DeclareMathSymbol{\ssfTheta}{0}{ssfletters}{'002}
\DeclareMathSymbol{\bsfLambda}{0}{bsfletters}{'003}
\DeclareMathSymbol{\ssfLambda}{0}{ssfletters}{'003}
\DeclareMathSymbol{\bsfXi}{0}{bsfletters}{'004}
\DeclareMathSymbol{\ssfXi}{0}{ssfletters}{'004}
\DeclareMathSymbol{\bsfPi}{0}{bsfletters}{'005}
\DeclareMathSymbol{\ssfPi}{0}{ssfletters}{'005}
\DeclareMathSymbol{\bsfSigma}{0}{bsfletters}{'006}
\DeclareMathSymbol{\ssfSigma}{0}{ssfletters}{'006}
\DeclareMathSymbol{\bsfUpsilon}{0}{bsfletters}{'007}
\DeclareMathSymbol{\ssfUpsilon}{0}{ssfletters}{'007}
\DeclareMathSymbol{\bsfPhi}{0}{bsfletters}{'010}
\DeclareMathSymbol{\ssfPhi}{0}{ssfletters}{'010}
\DeclareMathSymbol{\bsfPsi}{0}{bsfletters}{'011}
\DeclareMathSymbol{\ssfPsi}{0}{ssfletters}{'011}
\DeclareMathSymbol{\bsfOmega}{0}{bsfletters}{'012}
\DeclareMathSymbol{\ssfOmega}{0}{ssfletters}{'012}
\newcommand{\tilx}{\tilde{x}}
\newcommand{\barc}{\bar{c}}
\newcommand{\bDelta}{\bm{\Delta}}
\newcommand{\hmu}{\hat{\mu}}
\newcommand{\iid}{i.i.d.\ }
\DeclareMathOperator*{\argmax}{argmax}
\DeclareMathOperator*{\argmin}{argmin}
\def\##1\#{\begin{align}#1\end{align}}
\def\$#1\${\begin{align*}#1\end{align*}}
\def\BibTeX{{\rm B\kern-.05em{\sc i\kern-.025em b}\kern-.08em
    T\kern-.1667em\lower.7ex\hbox{E}\kern-.125emX}}
\definecolor{lightLavender}{HTML}{EEEBFA}
\acrodef{llm}[LLM]{Large Language Model}
\acrodef{mab}[MAB]{Multi-Armed Bandit}
\acrodef{ftrl}[FTRL]{Follow-the-Regularized-Leader}
\acrodef{olo}[OLO]{Online Learning Optimization Problem}
\theoremstyle{plain}
\newtheorem{theorem}{Theorem}[section]
\newtheorem{proposition}[theorem]{Proposition}
\newtheorem{lemma}[theorem]{Lemma}
\theoremstyle{definition}
\newtheorem{definition}[theorem]{Definition}
\newtheorem{assumption}[theorem]{Assumption}
\theoremstyle{remark}
\newtheorem{remark}[theorem]{Remark}
\icmltitlerunning{\textsc{BanditSpec}: Adaptive Speculative Decoding via Bandit Algorithms}
\begin{document}

\twocolumn[
\icmltitle{\textsc{BanditSpec}: Adaptive Speculative Decoding via Bandit Algorithms}



\icmlsetsymbol{equal}{*}
\icmlsetsymbol{projectlead}{$^\ddagger$}
\icmlsetsymbol{intern}{$^\dagger$}

\begin{icmlauthorlist}
\icmlauthor{Yunlong Hou}{equal,NUS}
\icmlauthor{Fengzhuo Zhang}{equal,intern,projectlead,NUS}
\icmlauthor{Cunxiao Du}{equal,Sea}
\icmlauthor{Xuan Zhang}{equal,SMU}
\icmlauthor{Jiachun Pan}{NUS}
\icmlauthor{Tianyu Pang}{Sea}
\icmlauthor{Chao Du}{Sea}
\icmlauthor{Vincent Y.~F.~Tan}{NUS}
\icmlauthor{Zhuoran Yang}{Yale}
\end{icmlauthorlist}

\icmlaffiliation{NUS}{National University of Singapore}
\icmlaffiliation{Sea}{Sea AI Lab}
\icmlaffiliation{SMU}{Singapore Management University}
\icmlaffiliation{Yale}{Yale University}

\icmlcorrespondingauthor{Cunxiao Du}{cnsdunm@gmail.com}
\icmlcorrespondingauthor{Zhuoran Yang}{zhuoran.yang@yale.edu}

\icmlkeywords{Machine Learning, ICML}

\vskip 0.3in
]



\printAffiliationsAndNotice{\icmlEqualContribution} 

\begin{abstract}
    Speculative decoding has emerged as a popular method to accelerate the inference of Large Language Models (LLMs) while retaining their superior text generation performance. 
    Previous methods either adopt a fixed speculative decoding configuration regardless of the prefix tokens or train draft models in an offline or online manner to align them with the context.
    This paper proposes a \emph{training-free} online learning framework to adaptively choose the configuration of the hyperparameters for speculative decoding as text is being generated. 
    We first formulate this hyperparameter selection problem as a Multi-Armed Bandit problem and provide a general speculative decoding framework \specbandit{}.
    Furthermore, two bandit-based hyperparameter selection algorithms, \specucb{} and \specexp{}, are designed and analyzed in terms of a novel quantity, the \emph{stopping time regret}. We upper bound this regret under both stochastic and adversarial reward settings. 
    By deriving an information-theoretic impossibility result, it is shown that the regret performance of \specucb{} is optimal up to universal constants. 
    Finally, extensive empirical experiments with LLaMA3 and Qwen2 demonstrate that our algorithms are effective 
    compared to existing methods,
    and  the throughput is close to the oracle best hyperparameter
    in simulated real-life LLM serving scenarios with diverse input prompts.
    
\end{abstract}
 
\section{Introduction}

\begin{figure}[t]
    \centering
    \includegraphics[width=0.48\textwidth]{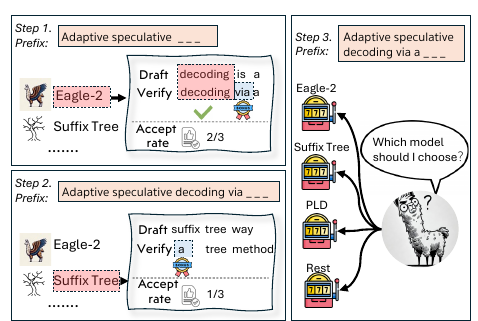}
    \vspace{-.5cm}
    \caption{Given the prefix tokens and the candidate hyperparameter configurations (e.g., models), which configuration should be selected to decode the next tokens? We formulate this problem as a bandit problem and propose a general framework \specbandit{}.}
    \label{fig:intro}
\end{figure}

A \ac{llm} is trained to predict the probability of the next token conditioned on all previous tokens~\citep{Brown2020LanguageMA,Touvron2023Llama2O}.
This autoregressive decoding approach involves \emph{multiple} forward passes, with each pass generating one token sequentially. Consequently, this process can lead to significant latency during inference.

Speculative decoding was introduced by~\citet{leviathan2023fast,chen2023accelerating}  to accelerate the inference of \ac{llm}s. The standard speculative decoding framework has been extended with improved performance since then.
A thorough overview is presented at Appendix~\ref{app:relatedworks}.
While 
the existing speculative decoding methods are diverse,
most previous works adopt a {\em fixed}
one across tasks, severely limiting their potential.
For instance, when dealing with code debugging or grammar-checking tasks, the generated tokens are expected to resemble most of the input tokens. Therefore, retrieval-based speculative decoding techniques are preferred~\citep{hu2024sam}.
In contrast, for story generation tasks, we expect the generated tokens to be more creative. Thus, a draft model with a high-temperature parameter is preferred over retrieval-based methods.
The potential of these speculative decoding methods can only be exploited when the configuration of hyperparameters is {\em well-aligned} to the given task. 
There are existing works that attempt to achieve this goal, e.g., \citet{zhou2024distillspec} distills the draft model during inference.
Furthermore,
even when the choice of the draft model is optimized, the associated hyperparameters can still be refined. For instance,
\citet{liu2024optimizing} and \citet{huang2024specdec++} aim to optimize the speculation length in a training and training-free manner.
Based on these observations, we ask the questions (see Figure~\ref{fig:intro}) :
given prefix prompts and candidate configurations of hyperparameters, is there a theoretically sound framework to model and solve the hyperparameters selection problem? Is there any \emph{training-free} method that can \emph{adaptively} choose the hyperparameters such that the latency of speculative decoding can be minimized? 

In this paper, we answer these questions affirmatively. We adopt a bandit framework to leverage its adaptivity in unknown environments to achieve this goal. 
Our contributions can be summarized as follows.

\noindent
$\bullet$ We formulate the hyperparameter selection problem in speculative decoding as a bandit problem and propose a general speculative decoding framework $\specbandit(\alg)$ (see Algorithm~\ref{algo:SDbandit}), where the hyperparameter selection algorithm $\alg$ selects the hyperparameters to be deployed in each round of speculative decoding. The objective is to minimize the \emph{stopping time regret}, which measures the latency of $\alg$ compared to that of the best hyperparameter.

$\bullet$ Under mild stochastic and adversarial reward assumptions, we devise two hyperparameter selection algorithms, \specucb{} and \specexp{}, respectively. By deriving upper bounds on the stopping time regret, we prove that the inference latency between the proposed algorithms and the best hyperparameter under a given initial prompt vanishes asymptotically. 
In addition, we show, via deriving an information-theoretic impossibility result, that the regret performance of \specucb{} is \emph{optimal} up to constants.

$\bullet$ Extensive empirical experiments with LLaMA3 and Qwen2 are conducted to demonstrate the efficacy of the proposed framework. 
When the batch size is $1$, the adaptive selection of models via \specucb{} and \specexp{} can greatly improve the latency, exhibiting competitive performance against the existing methods. 
Under simulated real-life scenarios where LLMs are implemented for diverse prompts simultaneously, the adaptive selection of speculation length via \specucb{} achieves comparable throughput with the oracle best. 




\section{Preliminaries}\label{sec:prelim}
\textbf{\ac{llm} Decoding} 
We denote an \ac{llm} as $P:\calX^{*}\rightarrow\bDelta_{\calX}$, where $\calX$ and $\calX^{*}$ are the space of tokens and the space of all token sequences, respectively. Most \ac{llm}s predict this conditional probability via predicting the \emph{logits} of the next token. 
Concretely, the \ac{llm}s predict $\log P(x_{t}\given x_{1:t-1})$, where $x_{t}\in\calX$ and $x_{1:t-1}\in\calX^{*}$ are respectively the $t$-th token and first $t-1$ tokens. In the inference stage, \ac{llm}s use an additional \emph{temperature} parameter $\gamma > 0$ to predict the next token's probability as $\sm(\gamma^{-1}\log P(\cdot\given x_{1:t-1}))$, where $\sm$ is the softmax operator. The results in our work hold for any $\gamma > 0$, and we just denote $\sm(\gamma^{-1}\log P)$ as $P$ for ease of notation. 
When $\gamma>0$, we sample the next token from the predicted distribution, which is called \emph{sampling} (sampling decoding). 
When $\gamma \downarrow 0$, the next token will be the token that corresponds to the highest logit value; this is called \emph{greedy decoding}. 
We note that the greedy decoding is deterministic, i.e., the token is sampled from a degenerate distribution. 
These two families of decoding methods can be unified as Algorithm~\ref{algo:CanDec}, where \ac{llm}s autoregressively generate tokens until the $\EOS$ token. 
\begin{algorithm}[t]
    \textbf{Inputs:} initial prompt $\prompt_0 = \prompt\in\calX^*$, target model $P$.\\
    \textbf{Procedures:}
    \caption{\textsc{Canonical Decoding}}\label{algo:CanDec}    
    \begin{algorithmic}[1]
    \STATE Set $t=0$.
    \WHILE{$t\neq 0$ and $x_t \neq \EOS$} 
    \STATE $t = t + 1$.
    \STATE $x_{t} \sim P(\cdot\mid \prompt_{t-1})$.
    \STATE $\prompt_{t} = \concat(\prompt_{t-1},x_{t})$.
    \ENDWHILE
    \STATE \return{}  $t , \prompt_{t}$
    \end{algorithmic}
\end{algorithm}

\textbf{Speculative Decoding}
As shown in Algorithm~\ref{algo:CanDec}, the autoregressive decoding feature requires multiple forward inferences of \ac{llm}s $P$ \emph{sequentially}. 
To reduce the number of forward inferences, \citet{leviathan2023fast,chen2023accelerating} proposed the vanilla speculative decoding algorithm, which implements a draft model $Q$ to generate draft tokens and let the target model $P$ verify them in \emph{parallel}.
For completeness, we present and describe the vanilla speculative decoding algorithm in Appendix~\ref{app:vanilla_SD}.
This vanilla speculative decoding is then extended by some existing works, e.g., 
\citet{miao2023SpecInferAG,cai2024medusa} organises the draft tokens as a tree, which improves the number of accepted tokens. 
The speculative decoding algorithm contains many hyperparameters, e.g., the draft model $Q$, and the tree structure in \citet{miao2023SpecInferAG,cai2024medusa}. 
Most existing works keep these hyperparameters fixed for all the tasks. 
Some other works optimise the draft model in an online or offline manner~\citep{liu2023OnlineSD}  and the size of the tree~\citep{chen2024SequoiaSR}, which are designed for specific considerations. 
In contrast, our work aims to derive a unified online hyperparameter selection algorithm that can be applied for {\em any} type of hyperparameters.


\textbf{Multi-Armed Bandits} The \ac{mab} is a fundamental online decision-making problem (see Algorithm~\ref{alg:mab} for its dynamics). In its classical \textbf{stochastic} form, an agent chooses from $K$ arms, each of which delivers a reward sampled \iid from an unknown but fixed distribution when pulled~\citep{lattimore2020bandit}. The goal is to select arms over $T$ rounds to maximise cumulative rewards. Two primary classes of algorithms---UCB-type~\citep{auer2002finite} and sampling-based methods~\citep{russo2017ATO}---have been developed and proven optimal in this setting. 
 In the \textbf{adversarial} formulation, there are no assumptions on the reward distributions; rewards can evolve arbitrarily over time and may be correlated across arms~\citep{auer2002nonstochastic}. Several algorithms, such as EXP3 and EXP4~\citep{auer2002nonstochastic}, are known to achieve optimal performance under these conditions.  In this work, we frame the hyperparameter selection problem as an \ac{mab} problem and develop algorithms tailored to both stochastic and adversarial settings.


\textbf{Notations:} Let $[N]:=\{1,\cdots,N\}$. For a finite set $\calX$, we denote the set of   distributions supported on it as $\bDelta_\calX=\{P:\calX\rightarrow [0,1] \,|\, \sum_{x\in\calX}P(x)=1, P(x)\geq 0 \text{ for all }x\in\calX\}$. The space of all finite-length sequences whose components belong to  $\calX$ is denoted as $\calX^{*}$, and we use $x_{1:L}\in\calX^{L}\subseteq\calX^{*}$ to denote a  length-$L$ sequence. 
The Kullback–Leibler ($\KL$) divergence between two distributions $P$ and $Q$ is denoted as $\KL(P,Q)$.

\section{Bandits for Adaptive Speculative Decoding}\label{sec:AdaSpec}

In the section, we formally formulate the \emph{hyperparameter selection problem} in speculative decoding using the parlance of multi-armed bandits.
The goal of this online decision-making process is to decode as soon as possible, i.e., minimizing the \emph{latency} of the \ac{llm} decoding. Different from the classical multi-armed bandit problem, this problem involves two stochastic processes that march at various paces. 
In fact, 
as described in Appendix~\ref{app:vanilla_SD},
each (vanilla) speculative decoding subroutine produces several tokens, where the number of accepted tokens itself is also a random variable. Thus, the selection of hyperparameters of each speculative decoding subroutine and the token generation processes are evolving at different paces.

\begin{algorithm}[t]
\textbf{Inputs:} $\prompt\in\calX^*$, target model $P$, the hyperparameters $S$, maximum speculation length $L$.\\
\textbf{Procedures:}
\caption{\textsc{Speculative Decoding Subroutine} (\SDR)}\label{algo:SDR}    
\begin{algorithmic}[1]
\STATE  Call a  standard speculative decoding algorithm with $(\prompt, P, S, L)$.\label{lne:decode}
\STATE \return{} the accepted and bonus tokens $x_{1:\tau}$, where $\tau\!\geq\! 1$.
\end{algorithmic}
\end{algorithm}

To put the problem in a mathematically sound way, we first specify a general speculative decoding subroutine (\SDR) in Algorithm~\ref{algo:SDR}. The input of this subroutine is a prompt $\prompt\in\calX^{*}$, a target model $P$, a specification of hyperparameters $S$, and the maximum speculation length $L$, and the output is the accepted token sequence $x_{1:\tau}\in\calX^{*}$. 
We provide two examples of the hyperparameter sets here. 
(1) If we adopt the vanilla speculative decoding (Algorithm~\ref{algo:a_sd}) as Line~\ref{lne:decode}, $S$ can be different draft models $Q:\calX^{*}\rightarrow\bDelta_{\calX}$, and $\calS$ is the set of all the provided draft models. We would like to choose a draft model according to its training context, e.g. math, creative writing, to decode the current prefix. Then the problem we consider is how to adaptively select a proper draft model for speculative decoding via bandit algorithms. 
(2) If we adopt Medusa~\citep{cai2024medusa} as Line~\ref{lne:decode}, $S$ can be different tree structures, and $\calS$ is the set of plausible tree structures. In this problem, we would like to adaptively adjust the speculation tree structure according to the context. 

\begin{algorithm}[t]
\textbf{Inputs:} arm selection algorithm $\alg$, initial prompt $\prompt_0 = \prompt\in\calX^*$, bandit configuration $\nu =(P,\calS=\{S_i\}_{i\in[K]},L)$.\\
\textbf{Procedures:}
    \caption{\textsc{Speculative Decoding with Bandits (\specbandit)} }\label{algo:SDbandit}    
    \begin{algorithmic}[1]
    \STATE $t=0, \calH_0=\emptyset,I_0=1, x_{I_0,0} = \emptyset$.
    \WHILE{$\EOS\notin x_{I_t,t}$} 
    \STATE $t = t + 1$.
    \STATE Select a hyperparameter index $I_t = \alg(\calH_{t-1})$.
    \label{lne:arm_select}
    \STATE $x_{I_t,t} = \SDR(\prompt_{t-1},P,S_{I_t},L)$.\label{lne:spec}
    \STATE $\prompt_{t} = \concat(\prompt_{t-1},x_{I_t,t})$.
    \label{lne:pt_update}
    \STATE $\calH_{t} = \concat(\calH_{t-1},(I_t,x_{I_t,t}))$.\label{lne:his_update}
    \ENDWHILE
    \STATE \return{}  $\ST(\alg,\prompt,\nu) = t,\; \prompt_{\ST(\alg,\prompt,\nu)} = \prompt_{t}$.
    \end{algorithmic}
\end{algorithm}

With the help of \SDR, the speculative decoding with a bandit framework, \specbandit, can be specified in Algorithm~\ref{algo:SDbandit} and as illustrated in Figure~\ref{fig:intro}. The \emph{bandit configuration} $\nu =(P,\calS=\{S_i\}_{i\in[K]},L)$ consists of three components: the target model $P$, the set of $K$ candidate hyperparameter specifications $\calS$, and the maximum speculation length $L$. Each hyperparameter specification $S_{i}\in\calS$ corresponds to an arm in the bandit problem. Given a prompt $\prompt$ and an arm selection algorithm $\alg$, a hyperparameter specification is chosen according to the history $\calH_{t-1}$ in Line~\ref{lne:arm_select}. Then $\SDR$ is invoked with selected hyperparameters $S_{I_{t}}$ as input. The output of $\SDR$, $x_{I_t,t}$,\footnote{We abbreviate the notation $x_{I_t,t,1:\tau_t}$ as $x_{I_t,t}$ to represent the accepted tokens generated by $S_{I_t}$ at time step $t$.}
is then adopted to update the prompt (Line~\ref{lne:pt_update}) and the history information (Line~\ref{lne:his_update}). The whole process stops when the $\EOS$ token appears in the prompt. We denote the number of calls to $\SDR$ (the stopping time) and the generated token sequence as $\ST(\alg,\prompt,\nu)$ and  $\prompt_{\ST(\alg,\prompt,\nu)}$, respectively. 
To minimize the decoding latency, we aim to design $\alg$ to minimize $\ST(\alg,\prompt,\nu)$. 
Since the position of the $\EOS$ token itself is a random variable, we would like to minimize the expectation of $\ST(\alg,\prompt,\nu)$. 
The performance of $\alg$ is measured via the \emph{stopping time regret} 
\begin{align}
    \Reg(\alg,\prompt,\nu)\!&:= \bbE\big[\ST(\alg,\prompt,\nu)\mid \prompt,\nu\big] \label{equ:regret}
    \\
    &\qquad- 
    \bbE\big[\ST(\alg_{i^*(\prompt,\nu)},\prompt,\nu)\mid \prompt,\nu\big],\!\!\!
\end{align}
where $\alg_i$ is the arm selection algorithm which adopts $S_i$ in all rounds, i.e., $i = \alg_i(\calH_t)$ for all $\calH_t$ and $t$,
and $i^*(\prompt,\nu) = \argmin_{i\in[K]} \bbE\left[\ST(\alg_i,\prompt,\nu)\mid \prompt,\nu\right]$ denotes the index of the best hyperparameter for prompt $\prompt$ under bandit configuration $\nu$.

For ease of notation, when $\nu$ and $\prompt$ are clear from the context,  $\ST(\alg,\prompt,\nu)$, $\Reg(\alg,\prompt,\nu)$, and $i^*(\prompt,\nu)$ will be abbreviated as $\ST(\alg),\Reg(\alg)$, and $i^*$, respectively. 
We will use $\specbandit(\alg)$ to specify the choice of $\alg$ in Algorithm~\ref{algo:SDbandit}.
For simplicity, we regard the bonus token as the last accepted token. 
Thus, the length of the accepted tokens $x_{I_t,t}$ at each round, $y_{I_t,t}$, is between $[1,L+1]$.

Before going to the algorithm design and the theoretical analysis, we would like to specify some important properties that are shared by any arm selection algorithm $\alg$ and clarify the intuitions about our theoretical analysis. 
We denote the stopping time of the canonical decoding (Algorithm~\ref{algo:CanDec}) and the generated sequence as $\tau_{\rmc}$ and $\prompt_{\tau_{\rmc}}$, respectively. 
\begin{proposition}\label{prop:basic_property}
    For any arm selection algorithm $\alg$ that selects an arm according to the history, the generated prompt $\prompt_{\ST(\alg)}$ is equal to $\prompt_{\tau_{\rmc}}$ \emph{in distribution}, i.e.,
    \begin{align}
        \prompt_{\ST(\alg)}\overset{d}{=}\prompt_{\tau_{\rmc}}, \text{ and }\len(\prompt_{\ST(\alg)})\overset{d}{=}\len(\prompt_{\tau_{\rmc}}) .\label{eqn:SpecProperty}
    \end{align}

    The stopping time $\ST(\alg)$ can be bounded as 
    \begin{align}
        &
        \frac{\len(\prompt_{\ST(\alg)})}{L+1}\leq \ST(\alg)\leq \len(\prompt_{\ST(\alg)}), \; 
        a.s.
        \label{equ:property_1}
     \end{align}
\end{proposition}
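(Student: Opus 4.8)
The plan is to handle the two displayed assertions separately, since \eqref{eqn:SpecProperty} is a distributional statement that rests on the unbiasedness of speculative decoding, whereas \eqref{equ:property_1} is a deterministic, pathwise counting bound.

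For the distributional identity \eqref{eqn:SpecProperty}, the fact I would invoke is the correctness guarantee of speculative decoding proved by \citet{leviathan2023fast,chen2023accelerating}: for \emph{any} hyperparameter specification $S$ and any fixed prefix $\prompt'$, the block of tokens returned by $\SDR(\prompt',P,S,L)$ is distributed exactly as the corresponding block obtained by autoregressive sampling from the target $P$ conditioned on $\prompt'$. Crucially, this conditional law is \emph{independent of $S$}. I would then induct on the round index $t$ via the tower property: conditioned on the history $\calH_{t-1}$, which determines both $\prompt_{t-1}$ and the selected arm $I_t=\alg(\calH_{t-1})$, the freshly appended tokens $x_{I_t,t}$ have the same conditional distribution as the next block drawn from $P$, no matter which arm was chosen. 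Concatenating across rounds and appealing to the chain rule shows that the sequence produced by \specbandit{} agrees, at every prefix, in law with canonical decoding. Since both processes terminate at the first occurrence of $\EOS$, applying the same stopping rule to two identically distributed token streams yields $\prompt_{\ST(\alg)}\overset{d}{=}\prompt_{\tau_{\rmc}}$; the length identity then follows because $\len(\cdot)$ is a measurable function of the sequence.

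For the pathwise bounds \eqref{equ:property_1}, I would decompose the generated length as $\len(\prompt_{\ST(\alg)})=\sum_{t=1}^{\ST(\alg)} y_{I_t,t}$, where $y_{I_t,t}$ is the number of tokens appended in round $t$. Each $\SDR$ call returns at least one token and at most $L+1$ tokens, so $1\le y_{I_t,t}\le L+1$ for every $t$ almost surely. Summing the lower bound gives $\len(\prompt_{\ST(\alg)})\ge \ST(\alg)$, i.e. $\ST(\alg)\le \len(\prompt_{\ST(\alg)})$, and summing the upper bound gives $\len(\prompt_{\ST(\alg)})\le (L+1)\,\ST(\alg)$, i.e. $\len(\prompt_{\ST(\alg)})/(L+1)\le \ST(\alg)$. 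Both inequalities hold on every sample path and are therefore almost sure.

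The main obstacle is the distributional identity, not the counting bound. The delicate point is that $I_t$ is chosen \emph{adaptively} from the past, so the rounds are neither independent nor identically distributed, and one cannot argue marginally. The argument must make explicit that the per-block unbiasedness holds \emph{conditionally} on $\calH_{t-1}$ and is invariant to the history-measurable choice $S_{I_t}$, so that the inductive hypothesis closes and the random stopping time at $\EOS$ is treated consistently across the two processes. Once this conditioning is set up correctly, \eqref{equ:property_1} is routine.
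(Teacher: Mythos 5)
Your proof is correct, and for the counting bound \eqref{equ:property_1} it coincides with the paper's argument: each call to $\SDR$ appends at least one and at most $L+1$ tokens, so summing over the $\ST(\alg)$ rounds gives both inequalities on every sample path. The difference lies in the distributional identity \eqref{eqn:SpecProperty}. The paper disposes of it in one line by citing Theorem 1 of \citet{yin2024theoretical}, which states that speculative decoding reproduces the target law for any specification of the hyperparameters; you instead re-derive this from the per-round correctness guarantee of \citet{leviathan2023fast,chen2023accelerating} via induction on rounds with the tower property. Your route is more self-contained and makes explicit the one point the paper's citation glosses over: the cited correctness result concerns a fixed hyperparameter, whereas in \specbandit{} the arm $I_t$ is a history-measurable random choice, so one must argue conditionally on $\calH_{t-1}$ that the law of the appended continuation is invariant to the selected arm, close the induction, and only then apply the common $\EOS$ stopping rule to the two identically distributed streams. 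One caution on phrasing: the joint law of a round's output --- its length together with its contents --- is emphatically \emph{not} independent of $S$; different arms have different accepted-length distributions, which is the entire premise of the bandit formulation. What is invariant is the token-level conditional law: each generated token, given all previously generated tokens and marginalizing over acceptance/rejection and block boundaries, is distributed according to $P$. Your induction only uses this token-level statement, so the argument stands, but the sentence claiming the ``block'' is distributed as the ``corresponding block'' of autoregressive sampling should be stated at the level of the concatenated stream to avoid asserting something false about the pair (length, contents).
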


The proof of Proposition~\ref{prop:basic_property} is provided in Appendix~\ref{app:basic_property}. This proposition states that the distribution of the generated prompt is the same as that of the prompt generated by Algorithm~\ref{algo:CanDec}. The stopping time $\ST(\alg)$ is equal to the length of the generated prompt up to a constant. To facilitate our theoretical understanding, we pose the following question.

\textbf{Question:}
\emph{
    Whether it is possible to devise an arm selection algorithm $\alg$ to achieve \textbf{sublinear} regret in terms of the length of the generated token sequence, i.e., is $\Reg(\alg,\prompt,\nu) = o(\bbE[\len(\prompt_{\ST(\alg)})])$?
}

\textbf{Interpretation of the Desired Result.}
Given a prompt $\prompt$ and bandit configuration $\nu$, \specbandit{} adaptively selects the hyperparameter via $\alg$ and learns the context.
The stopping time regret~\eqref{equ:regret} measures how the stopping time of $\specbandit(\alg)$ compares to that of the (agnostic) best one $\specbandit(\alg_{i^*})$. By minimizing $\Reg(\alg,\prompt,\nu)$, we want to devise an $\alg$ to (approximately) match the performance of $\alg_{i^*}$. In particular, if $\Reg(\alg,\prompt,\nu) = o(\bbE[\len(\prompt_{\ST(\alg)})])$, it implies that $\specbandit(\alg)$ requires the same number of speculative decoding rounds as $\specbandit(\alg_{i^*})$ asymptotically even though the information about $S_{i^*}$ is not revealed at the beginning. In other words, $\specbandit(\alg)$ learns the identity of $S_{i^*}$ quickly and the price for this learning process can be amortized over time.
Additionally, when $\specbandit(\alg)$ is deployed over diverse prompt inputs, we expect a significant acceleration of token generation compared to any fixed single speculative decoding method.

\textbf{Why do we consider stochastic and adversarial settings?} To derive efficient algorithms and meaningful theoretical analysis, it is necessary to make certain plausible assumptions of the problem. For the $\specbandit$ problem, we need to model the \emph{stochasticity} of the number of accepted tokens for each hyperparameter specification. We highlight that in real-world applications, they are far from identically and independently distributed. The stochastic case (Section~\ref{sec:stoc}) models it as random variables and only assumes that each hyperparameter will have a \emph{stationary} mean acceptance length (Assumption~\ref{ass:mean}) without the independence assumption. The adversarial case (Section~\ref{sec:adv})   removes this stationarity assumption and does not make any distributional assumption of the number of accepted tokens for each hyperparameter. We highlight that there is no explicit \emph{adversary} in the speculative decoding, but we model the stochasticity of the number of accepted tokens as the randomness from an (imaginary) adversary.



\section{Modeling Tokens Stochastically}\label{sec:stoc}
\begin{figure}[t]
    \centering
    \includegraphics[width=0.48\textwidth]{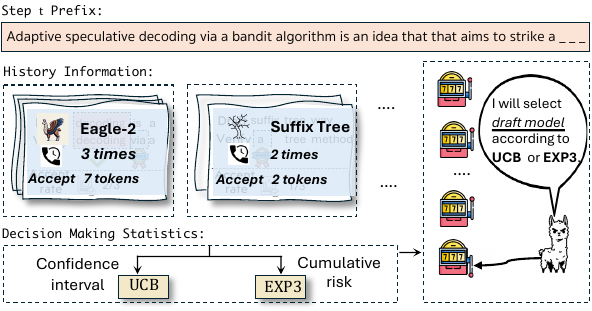}
    \caption{Illustration of our bandit model for choosing configurations to decode the next token, where UCB and EXP3 refer to \specucb{} and \specexp{}, respectively.}
    \label{fig:main}
\end{figure}

In this section, we model the length of the accepted tokens as random variables.

\begin{assumption}[Stationary Mean Values]\label{ass:mean}
    There exist $K$ values  
    $\{\mu_i\}_{i\in[K]}\!\subset\! [1,L\!+\!1]$, such that conditioned on the history $\calH_{t-1}$ and the chosen arm $I_t$ at time   $t$, the expected number of the accepted tokens $\bbE[Y_{I_t,t}\!\mid\! \calH_{t-1},I_t] \!=\! \mu_{I_t}$.
\end{assumption}
This assumption assumes that the conditional expectation of the number of accepted tokens for each hyperparameter is equal to a fixed number conditioned on the previous tokens. We emphasize that this assumption does \emph{not} require independence between the number of accepted tokens across implementations of $\SDR$, which would be unrealistic in real-world applications. More discussions are provided in Appendix~\ref{app:add_discussion_assumption}.

\subsection{Upper Bounds for the Stochastic Case}
\textbf{Algorithm Design}
We design a UCB-type arm selection algorithm $\specucb$, as shown in Algorithm~\ref{alg:SpecDecUCB}.
To avoid additional terms, we call the aggregated algorithm, 
$\specbandit(\specucb)$,
as \specucb. The full version of \specucb{} is detailed in Algorithm~\ref{alg:SpecDecUCB_full}.

This aggregated algorithm, \specucb, is adapted from the classical UCB-1 algorithm in \citet{auer2002finite}. The main differences are the confidence radius design $\rmcr_{i,t}$ and the stopping rule. We highlight that the form of $\rmcr_{i,t}$ is designed to fit the weak assumption of the number of accepted tokens. In fact, the proof of the regret of UCB-1 assumes that the values of each arm are generated \emph{before} the pull of arms~\citep{auer2002finite,lattimore2020bandit}, which bifurcates from practical \ac{llm} inference scenarios. In contrast, we remove this strong restriction. 
The stopping rule of $\specucb$ makes the analysis of our algorithm rather different from that of UCB-1. The stochasticity of the total number of arm pulls requires a novel regret decomposition analysis that is not presented in previous works. 


\begin{algorithm}[t]
    \caption{$\specucb$}\label{alg:SpecDecUCB}    
    \textbf{Inputs:} number of hyperparameter specifications $K$, history $\calH_t=\big((I_s,X_{I_s,s})\big)_{s=1}^t$, confidence parameter $\delta$.\\
    \textbf{Procedures:}
    \begin{algorithmic}[1]
    \STATE \textbf{if} $t\leq K-1$ \textbf{then} \return{} $I_{t+1}=t+1$.
    \STATE Compute the lengths $Y_{I_s,s}=\len(X_{I_s,s})$ for all $s\in[t].\!\!\!$
    \STATE Set the statistics $\{\hmu_{i,t}\}_{i\in[K]},\{ \mathrm{UCB}_{i,t}\}_{i\in[K]}$, where
    \begin{align}
        &
        n_{i,{t}} =\! \sum_{s=1}^{t}\mathbbm{1}\{I_s=i\},\;
        \hmu_{i,{t}} = \frac{\sum_{s=1}^{t} Y_{i,s}\mathbbm{1}\{I_s=i\}}{n_{i,t}},
        \\
        &\rmcr_{i,t} = \frac{L}{2}\sqrt{\frac{1+n_{i,t}}{n_{i,{t}}^2}\bigg(1+2\log\frac{Kt^2(1+n_{i,t})^{\frac{1}{2}}}{\delta}\bigg)},
        \\
        &\mathrm{UCB}_{i,{t}} = \hmu_{i,{t}} + \rmcr_{i,{t}}.
    \end{align}  \vspace{-.2in}
    \STATE \return{}  index $I_{t+1} = \argmax_{i\in[K]}\mathrm{UCB}_{i,t}$.
    \end{algorithmic}
\end{algorithm}

\textbf{Theoretical Analysis} 
We first state an assumption.
\begin{assumption}[Finite Generation Length]\label{ass:finiteLen}
    Given any prompt $\prompt\in\calX^*$, the expected length of the output sequence of the canonical decoding algorithm (Algorithm~\ref{algo:CanDec}) is finite, i.e.,
    $
        \bbE[\len(\prompt_{\tau_{\rmc}})] < \infty .
    $
\end{assumption}

This assumption states that the expected length of the generated prompt is finite. In real-world applications, the length of the generated prompt is always finite due to the limits of computation and storage.

 To state our main result, we denote the \emph{suboptimality gap} between the best arm $i^*:=\argmax_{i\in[K]} \mu_i$ and arm $i$ as $\Delta_i:=\mu_{i^*}-\mu_i$. Define the \emph{hardness parameter}
     $\rmH(\prompt,\nu):=\sum_{i\neq i^*}
     1/(\mu_{i^*}\Delta_i)$,
 which captures the difficulty of acceleration given the initial prompt $\prompt$ and bandit configuration $\nu$.
\begin{restatable}[Upper Bound]{theorem}{StocUp}\label{thm:stoc_up}
    Under Assumptions~\ref{ass:mean} and~\ref{ass:finiteLen}, 
    given any prompt $\prompt\in\calX^*$ and bandit configuration $\nu =(P,\calS=\{S_i\}_{i\in[K]},L)$,
    the expected stopping time regret of 
    Algorithm~\ref{algo:SDbandit} with $\alg=\,$Algorithm~\ref{alg:SpecDecUCB} ($\specucb$)
    is upper bounded as
    \begin{align}
        \Reg(\alg,\prompt,\nu) = 
    O\Big(
        \rmH(\prompt,\nu)\cdot L^2
        \log\bbE[\len(\prompt_{\tau_{\rmc}})]
    \Big).
    \end{align}
\end{restatable}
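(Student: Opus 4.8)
The plan is to first collapse the stopping-time regret into a weighted sum of suboptimal pull counts via an optional-stopping identity, and then bound each pull count by a UCB argument tailored to a random horizon and to non-independent rewards.

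I would begin with the decomposition. Set $M_n:=\sum_{s=1}^n(Y_{I_s,s}-\mu_{I_s})$; since $I_s$ is $\calH_{s-1}$-measurable, Assumption~\ref{ass:mean} gives $\bbE[Y_{I_s,s}-\mu_{I_s}\mid\calH_{s-1}]=0$, so $(M_n)$ is a martingale with increments bounded by $L$ (because $Y_{I_s,s},\mu_i\in[1,L+1]$). The stopping time is integrable, as $\ST(\alg)\le\len(\prompt_{\ST(\alg)})$ almost surely by Proposition~\ref{prop:basic_property} and $\bbE[\len(\prompt_{\ST(\alg)})]=\bbE[\len(\prompt_{\tau_{\rmc}})]<\infty$ by Assumption~\ref{ass:finiteLen}; hence optional stopping yields $\bbE[M_{\ST}]=0$. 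Writing $n_{i,\ST}$ for the number of pulls of arm $i$, this reads
\begin{align}
\bbE[\len(\prompt_{\ST(\alg)})]=\bbE\Big[\textstyle\sum_{s=1}^{\ST}Y_{I_s,s}\Big]=\sum_{i\in[K]}\mu_i\,\bbE[n_{i,\ST}].
\end{align}
The same identity applied to $\alg_{i^*}$ gives $\bbE[\ST(\alg_{i^*})]=\bbE[\len(\prompt_{\tau_{\rmc}})]/\mu_{i^*}$, and substituting both into~\eqref{equ:regret} collapses the regret to
\begin{align}
\Reg(\alg,\prompt,\nu)=\sum_{i\in[K]}\Big(1-\tfrac{\mu_i}{\mu_{i^*}}\Big)\bbE[n_{i,\ST}]=\sum_{i\neq i^*}\frac{\Delta_i}{\mu_{i^*}}\,\bbE[n_{i,\ST}].
\end{align}

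Next I would bound $\bbE[n_{i,\ST}]$ for each $i\neq i^*$. Define the good event on which $|\hmu_{i,t}-\mu_i|\le\rmcr_{i,t}$ holds for all arms and all rounds. Because the rewards are only conditionally mean-stationary, concentration must come from a time-uniform Azuma--Hoeffding bound for $\sum_s(Y_{i,s}-\mu_i)\mathbbm{1}\{I_s=i\}$, peeled over the random sample size $n_{i,t}\le t$; the $\tfrac{L}{2}$ prefactor of $\rmcr_{i,t}$ matches the range $L$ of the martingale differences, while the term $Kt^2(1+n_{i,t})^{1/2}/\delta$ inside the logarithm is exactly what makes $\sum_t\sum_i\bbP(\text{failure at }t)=O(\delta)$. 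On the good event the usual UCB inequality applies: if arm $i$ is selected at round $s$ then $\mathrm{UCB}_{i,s-1}\ge\mathrm{UCB}_{i^*,s-1}\ge\mu_{i^*}$, hence $\mu_i+2\rmcr_{i,s-1}\ge\mu_{i^*}$, i.e.\ $2\rmcr_{i,s-1}\ge\Delta_i$. Inverting the radius shows $n_{i,s-1}\le u_{i,s}:=CL^2\Delta_i^{-2}\log(Ks^2/\delta)$ for a universal constant $C$. A threshold-plus-failures split then gives $n_{i,\ST}\le u_{i,\ST}+\sum_{s\le\ST}\mathbbm{1}\{\text{failure at }s-1\}$, whose last term has expectation $O(\delta)$.

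The only horizon dependence now sits in $\bbE[u_{i,\ST}]$. Using $\ST\le\len(\prompt_{\ST})$ almost surely together with $\len(\prompt_{\ST})\overset{d}{=}\len(\prompt_{\tau_{\rmc}})$, I would bound $u_{i,\ST}\le CL^2\Delta_i^{-2}\log(K\len(\prompt_{\tau_{\rmc}})^2/\delta)$ and then apply Jensen's inequality to the concave logarithm to pass from $\bbE[\log\len(\prompt_{\tau_{\rmc}})]$ to $\log\bbE[\len(\prompt_{\tau_{\rmc}})]$. This produces $\bbE[n_{i,\ST}]=O\big(L^2\Delta_i^{-2}(\log\bbE[\len(\prompt_{\tau_{\rmc}})]+\log(K/\delta))\big)$; inserting it into the decomposition and recalling $\rmH(\prompt,\nu)=\sum_{i\neq i^*}1/(\mu_{i^*}\Delta_i)$, with $\delta$ set to a universal constant, yields precisely $\Reg(\alg,\prompt,\nu)=O\big(\rmH(\prompt,\nu)\,L^2\log\bbE[\len(\prompt_{\tau_{\rmc}})]\big)$.

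The main obstacle is the interplay between the random stopping time and the two non-classical features of the model: the rewards are dependent (only their conditional means are pinned by Assumption~\ref{ass:mean}), forcing a martingale, time-uniform confidence radius in place of the i.i.d.\ Hoeffding bound used for UCB-1, and the horizon $\ST$ is itself random and statistically coupled to those rewards. The optional-stopping identity of the first step and the Jensen step of the third are exactly what decouple the (algorithm-independent) generation length $\len(\prompt_{\tau_{\rmc}})$ from the learning cost; I expect the two technical lemmas requiring the most care to be the verification of the integrability and boundedness conditions for optional stopping, and the proof that the bespoke radius $\rmcr_{i,t}$ yields summable per-round failure probabilities.
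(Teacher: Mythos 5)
Your proposal is correct and follows essentially the same route as the paper's proof: the same optional-stopping/martingale decomposition reducing the stopping-time regret to $\sum_{i\neq i^*}\frac{\Delta_i}{\mu_{i^*}}\bbE[n_{i,\ST(\alg)}]$, the same time-uniform (self-normalized) confidence-radius good event with summable per-round failure probabilities, and the same threshold-plus-failures split with Jensen's inequality to convert the random horizon into $\log\bbE[\len(\prompt_{\tau_{\rmc}})]$. The only cosmetic difference is that the paper instantiates the concentration step by citing the self-normalized bound of Abbasi-Yadkori et al.\ rather than a peeling argument, which yields the identical radius.
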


Theorem~\ref{thm:stoc_up} answers the proposed question in Section~\ref{sec:AdaSpec} in the affirmative under Assumptions~\ref{ass:mean} and~\ref{ass:finiteLen}.
To interpret the results of the theorem, for each hyperparameter $S_i$, it requires $n_i=O(L^2\log \bbE[\len(\prompt_{\ST(\alg)})]/\Delta_i^2))$ pulls to identify that $S_i$ is suboptimal under the current prompt $\prompt$ and bandit configuration $\nu$, resulting in $n_i\Delta_i$ token loss compared to the case where $S_{i^*}$ had been adopted. Additionally, this loss could be compensated by $n_i\Delta_i/\mu_{i^*}$ pulls of $S_{i^*}$, which constitutes the final stopping time regret bound.
The proof is postponed to Appendix~\ref{app:stoc_up} with more discussions in Appendix~\ref{app:add_discussion_stoc_up}. 

\subsection{Lower Bound for the Stochastic Case}
We further provide an information-theoretic lower bound of the regret under the \textbf{greedy decoding} strategy to indicate how the upper bound is in Theorem~\ref{thm:stoc_up}.
More details and the proof of Theorem~\ref{thm:stocLwBd} are deferred to Appendix~\ref{app:stocLwBd}.

\begin{restatable}[Lower Bound]{theorem}{StocLwBd}\label{thm:stocLwBd}
    Given any sequence of initial prompts $(\prompt^m)_{m=1}^\infty\subset\calX^*_{\mathrm{init}}$ with $\len(\prompt^m_{\tau_{\rmc}})\to \infty,m\to\infty $ and
    a bandit configuration $\nu =(P,\calS=\{S_i\}_{i\in[K]},L)$, 
    under Assumption~\ref{ass:meanAugmented}, the greedy decoding strategy and the dynamics represented in Algorithm~\ref{algo:SDbandit}, for any non-anticipatory and consistent arm selection algorithm $\alg$, the expected regret satisfies
    \begin{align}
        \liminf_{m\to\infty}
        \frac{\Reg(\alg,\prompt,\nu)}{\log(\len(\prompt_{\tau_{\rmc}}^m))}
        \geq
        \sum_{i\neq i^*}
        \frac{\Delta_i}{\mu_{i^*}}\cdot \frac{1}{\mathrm{kl}_i},
    \end{align}

    where 
    $\mathrm{kl}_i:=\inf_{S\in\calS }\{\KL(P_{S_i},P_{S}):\bbE_{X\sim P_S}[  X] > \mu_{i^*}\}$.
\end{restatable}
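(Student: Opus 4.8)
The plan is to reduce the stopping-time regret to the expected number of selections of each suboptimal arm and then run a change-of-measure argument in the spirit of the Lai--Robbins lower bound, adapted to the random stopping horizon. Write $N_i(\alg):=\sum_{t=1}^{\ST(\alg)}\mathbbm{1}\{I_t=i\}$ and $n_m:=\len(\prompt^m_{\tau_{\rmc}})$, which is a deterministic constant under greedy decoding. By Proposition~\ref{prop:basic_property} the concatenated output has the same (degenerate) law as the canonical output, so $\sum_{t=1}^{\ST(\alg)}Y_{I_t,t}=n_m$ almost surely; since $\ST(\alg)\le n_m$ is a bounded stopping time and $Y_{I_t,t}-\mu_{I_t}$ is a martingale difference by Assumption~\ref{ass:mean}, optional stopping gives $n_m=\sum_i\mu_i\,\bbE[N_i(\alg)]$. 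Together with $\bbE[\ST(\alg)]=\sum_i\bbE[N_i(\alg)]$ and $\bbE[\ST(\alg_{i^*})]=n_m/\mu_{i^*}$ this yields the exact identity
\[
    \Reg(\alg,\prompt^m,\nu)=\sum_{i\neq i^*}\frac{\Delta_i}{\mu_{i^*}}\,\bbE[N_i(\alg)],
\]
so it suffices to prove $\liminf_{m\to\infty}\bbE[N_i(\alg)]/\log n_m\ge 1/\mathrm{kl}_i$ for every $i\neq i^*$ and then invoke superadditivity of $\liminf$.

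Next I would fix a suboptimal arm $i$ and $\epsilon>0$, and pick an alternative hyperparameter $S$ with $\bbE_{X\sim P_S}[X]>\mu_{i^*}$ and $\KL(P_{S_i},P_S)\le\mathrm{kl}_i+\epsilon$, which exists by the infimum defining $\mathrm{kl}_i$. Let $\nu'$ agree with $\nu$ except that arm $i$ now draws acceptance lengths from $P_S$, so that arm $i$ becomes the unique optimal arm under $\nu'$. Because $\alg$ is non-anticipatory, the arm-selection steps contribute no relative entropy, and a round-by-round chain rule combined with optional stopping over the bounded horizon $\ST$ gives the divergence-decomposition identity
\[
    \KL(\bbP_\nu,\bbP_{\nu'})=\bbE_\nu[N_i(\alg)]\cdot\KL(P_{S_i},P_S),
\]
where $\bbP_\nu,\bbP_{\nu'}$ denote the laws of the full trajectory. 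Applying the data-processing inequality to the event $A_m:=\{N_i(\alg)>n_m/(2(L+1))\}$ then bounds $\bbE_\nu[N_i(\alg)]\,\KL(P_{S_i},P_S)$ below by the binary relative entropy $\mathrm{kl}(\bbP_\nu(A_m),\bbP_{\nu'}(A_m))$.

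I would then control the two probabilities through consistency. Since $n_m/(L+1)\le\ST(\alg)\le n_m$ almost surely by~\eqref{equ:property_1}, consistency under $\nu$ forces $\bbE_\nu[N_i(\alg)]=o(n_m^\alpha)$ for every $\alpha>0$ (via the identity above), whence Markov gives $\bbP_\nu(A_m)\to0$; symmetrically, consistency under $\nu'$ forces $\bbE_{\nu'}[\ST(\alg)-N_i(\alg)]=o(n_m^\alpha)$, and because $N_i\le n_m/(2(L+1))$ forces $\sum_{j\neq i}N_j\ge n_m/(2(L+1))$, Markov gives $\bbP_{\nu'}(A_m^c)=o(n_m^{\alpha-1})$. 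Feeding these into $\mathrm{kl}(p,q)\ge(1-p)\log\frac{1}{1-q}-\log 2$ and dividing by $\log n_m$ produces $\liminf_m\bbE_\nu[N_i(\alg)]/\log n_m\ge(1-\alpha)/(\mathrm{kl}_i+\epsilon)$, and letting $\alpha,\epsilon\downarrow0$ finishes the per-arm bound.

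The main obstacle I expect is justifying the divergence-decomposition identity with a data-dependent horizon: unlike the textbook fixed-$T$ Lai--Robbins setup, the number of rounds $\ST$ is itself random and history-adapted, so the $\bbE_\nu[N_i]\cdot\KL$ formula must be obtained from an optional-stopping/Wald argument for relative entropy (chain rule per round plus the boundedness $\ST\le n_m$) rather than from a fixed-length product measure. A secondary point needing care is that the normalization in the conclusion is the token length $n_m$ rather than the round count $\ST$; the two are interchangeable inside the logarithm because $\ST=\Theta(n_m)$ almost surely, but this equivalence must be tracked so that the $\log n_m$ scaling is exactly right.
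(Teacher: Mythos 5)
Your proposal is correct, and it follows the same architecture as the paper's proof: the same regret decomposition $\Reg(\alg,\prompt,\nu)=\sum_{i\neq i^*}(\Delta_i/\mu_{i^*})\,\bbE[n_{i,\ST(\alg)}]$ obtained via optional stopping, the same divergence-decomposition identity $\KL(\bbP_{\nu},\bbP_{\nu'})=\bbE_{\nu}[n_{i,\ST(\alg)}]\cdot\KL(P_{S_i},P_{S})$ for the random, history-adapted horizon (this is exactly the paper's Lemma~\ref{lem:divDecop}, proved by the per-round chain rule plus Doob's optional stopping argument you flag as the main obstacle), the same change of measure to an alternative configuration in which arm $i$ is made strictly optimal with $\KL(P_{S_i},P_S)\le\mathrm{kl}_i+\epsilon$, and the same use of consistency under both configurations. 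The only genuine difference is the concluding inequality. The paper applies the Bretagnolle--Huber inequality (its Lemma~\ref{lem:BretHuber}) to the event $A_i=\{n_{i,\ST(\alg)}>\ST(\alg)/2\}$ and lower-bounds both $\Reg(\nu)$ and $\Reg(\nu_i)$ by $\len(\prompt_{\tau_{\rmc}})/(2(L+1))$ times $\bbP_{\nu}[A_i]$ and $\bbP_{\nu_i}[A_i^c]$ respectively, so that consistency forces $\exp(-\bbE_{\nu}[n_{i,\ST(\alg)}](\mathrm{kl}_i+\epsilon))$ to decay polynomially in $\len(\prompt^m_{\tau_{\rmc}})$; you instead pass through the data-processing inequality to the binary relative entropy and control the two probabilities by Markov's inequality under $\nu$ and $\nu'$ separately, in the style of Garivier--M\'enard--Stoltz. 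The two routes are interchangeable and yield the same constant $1/\mathrm{kl}_i$: yours requires the extra (correct) observation that $\ST(\alg)\geq \len(\prompt_{\tau_{\rmc}})/(L+1)$ forces $\sum_{j\neq i}n_{j,\ST(\alg)}\geq \len(\prompt_{\tau_{\rmc}})/(2(L+1))$ on the complement of your event, together with the limit exchange in $\alpha$; the paper's version avoids Markov at the price of conditioning the regret on $A_i$ and keeping both regrets in the bound before invoking consistency.
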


To provide a more concrete example of the lower bound,
consider the  \emph{truncated geometric distribution} (TGD) on $[1,L+1]$ with parameter $p\in(0,1)$, i.e.,
\begin{align}\label{equ:truncatedGeo}
    P_{S}(x)=
    \left\{\begin{aligned}
        &   p^{x-1} (1-p),&&x=1,2,\ldots,L,\\
        &   p^L,&&x=L+1.
    \end{aligned}\right.
\end{align}

This TGD was considered in the seminal works on speculative decoding~\citep{leviathan2023fast,chen2023accelerating}. 

\begin{restatable}[Tightness Result]{proposition}{StocTight}\label{prop:stoc_tightness}
    Let $\calS_{\mathrm{TGD}}=\{S:P_S \mbox{ satisfies \eqref{equ:truncatedGeo}}\}$.
    Let $\{S_i\}_{i=1}^K\subset \calS_{\mathrm{TGD}}$ and 
    $S_i$ satisfies~\eqref{equ:truncatedGeo} with $p_i$ (Line~\ref{pseudo:accept} in Algorithm~\ref{algo:SDR}),
    then 
    \begin{align}
        \liminf_{m\to\infty}
        \frac{\Reg(\alg,\prompt^m,\nu)}{\log(\len(\prompt_{\tau_{\rmc}}^m))}
        \geq
        \rmH(\prompt,\nu) \cdot \frac{p_{i^*}(1-p_{i^*}^L)}{(1-p_{i^*})}.
    \end{align}
    
    Therefore, the upper and lower bound match up absolute constants and a $\frac{L^2(1-p_{i^*})}{p_{i^*}(1-p_{i^*}^L)}$ factor. 
    In particular, if $p_{i^*}\in \big(2^{-1/L},1\big)$, they match up to absolute constants and $L$.
\end{restatable}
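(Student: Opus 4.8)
The plan is to specialize the general information-theoretic lower bound of Theorem~\ref{thm:stocLwBd} to the truncated-geometric family and then reduce the whole claim to a single scalar inequality controlling $\mathrm{kl}_i$. First I would record the mean of a TGD: summing the tail probabilities $P_S(X\ge k)=p^{k-1}$ gives $\mu(p):=\bbE_{X\sim P_S}[X]=\sum_{k=0}^{L}p^{k}=\tfrac{1-p^{L+1}}{1-p}$, which is strictly increasing on $(0,1)$. In particular $\mu(p_{i^*})-1=\tfrac{p_{i^*}(1-p_{i^*}^{L})}{1-p_{i^*}}$, so the right-hand side of the proposition is exactly $\rmH(\prompt,\nu)\cdot(\mu_{i^*}-1)$. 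Plugging $\Delta_i=\mu_{i^*}-\mu_i$ and $\rmH(\prompt,\nu)=\sum_{i\ne i^*}1/(\mu_{i^*}\Delta_i)$ into Theorem~\ref{thm:stocLwBd} (invoked under its stated hypotheses: Assumption~\ref{ass:meanAugmented}, greedy decoding, non-anticipatory and consistent $\alg$), the desired statement follows term by term once I establish
\begin{equation}\label{eqn:crux}
\mathrm{kl}_i\;\le\;\frac{\Delta_i^{2}}{\mu_{i^*}-1}\qquad\text{for every }i\ne i^*.
\end{equation}

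Next I would pin down $\mathrm{kl}_i$ explicitly. Since $\mu(\cdot)$ is increasing, within $\calS_{\mathrm{TGD}}$ the constraint $\bbE_{P_S}[X]>\mu_{i^*}$ is equivalent to $p>p_{i^*}$, and $i^*=\argmax_i p_i$. Writing $d(a,b):=a\log\tfrac ab+(1-a)\log\tfrac{1-a}{1-b}$ for the binary relative entropy, I would use the representation of a TGD as a stopped Bernoulli sequence: $\mathrm{TGD}(p)$ is the law of $X=\min(G,L+1)$ with $G\sim\mathrm{Geom}(p)$, equivalently $X-1$ is the length of the initial run of successes in i.i.d.\ $\mathrm{Bern}(p)$ trials, capped at $L$. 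The number of trials revealed to determine $X$ is $N=\min(X,L)$, and the chain rule for Kullback--Leibler divergence along this stopping time (Wald's identity for relative entropy) yields $\KL(\mathrm{TGD}(p_i)\,\|\,\mathrm{TGD}(p))=\bbE_{p_i}[N]\,d(p_i,p)$. As $d(p_i,p)$ is increasing in $p$ for $p>p_i$ while $\bbE_{p_i}[N]=\mu_i-p_i^{L}$ is independent of $p$, the infimum defining $\mathrm{kl}_i$ over $p>p_{i^*}$ is attained as $p\downarrow p_{i^*}$, giving the clean identity $\mathrm{kl}_i=(\mu_i-p_i^{L})\,d(p_i,p_{i^*})$.

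The crux, and the step I expect to be the main obstacle, is the scalar inequality~\eqref{eqn:crux}. A natural first attempt combines the $\chi^2$ upper bound $d(p_i,p_{i^*})\le (p_{i^*}-p_i)^2/\big(p_{i^*}(1-p_{i^*})\big)$ with the convexity lower bound $\Delta_i=\int_{p_i}^{p_{i^*}}\mu'(t)\,dt\ge \mu'(p_i)(p_{i^*}-p_i)$; however, both estimates are sharp only as $p_i\uparrow p_{i^*}$ and are simultaneously too lossy when $p_{i^*}-p_i$ is large, so they do not by themselves close~\eqref{eqn:crux}. Instead I would treat both sides as functions of $p_i$ with $p_{i^*}$ fixed, setting $F(p_i):=\Delta_i^2/(\mu_{i^*}-1)-(\mu_i-p_i^{L})\,d(p_i,p_{i^*})$. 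Both $F(p_{i^*})=0$ and $F'(p_{i^*})=0$ hold, since $\Delta_i$, $d(p_i,p_{i^*})$, and $\partial_{p_i}d(p_i,p_{i^*})$ all vanish at $p_i=p_{i^*}$; the local second-order comparison then reduces to $(\mu_{i^*}-p_{i^*}^L)(\mu_{i^*}-1)\le 2\,\mu'(p_{i^*})^2\,p_{i^*}(1-p_{i^*})$, which holds by the convexity of $\mu$. Establishing that $F$ stays nonnegative throughout $(0,p_{i^*}]$ is the genuine technical content; I expect the global bound to follow from the same monotonicity/convexity of $\mu$ together with the elementary estimate $-\log(1-p)\le p/(1-p)$ that governs the large-gap endpoint $p_i\downarrow 0$.

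Finally, feeding~\eqref{eqn:crux} into Theorem~\ref{thm:stocLwBd} and summing over $i\ne i^*$ delivers the stated $\liminf$ bound $\rmH(\prompt,\nu)\cdot\tfrac{p_{i^*}(1-p_{i^*}^L)}{1-p_{i^*}}$. For the matching statement I would divide the upper bound of Theorem~\ref{thm:stoc_up}, namely $O\big(\rmH(\prompt,\nu)\,L^2\log\bbE[\len(\prompt_{\tau_{\rmc}})]\big)$, by this lower bound, leaving the ratio $L^2\big/\tfrac{p_{i^*}(1-p_{i^*}^L)}{1-p_{i^*}}=\tfrac{L^2(1-p_{i^*})}{p_{i^*}(1-p_{i^*}^L)}$, which is the advertised gap factor. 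For the special case $p_{i^*}\in(2^{-1/L},1)$ I would write $\tfrac{1-p_{i^*}^L}{1-p_{i^*}}=\sum_{k=0}^{L-1}p_{i^*}^k$ and note that $p_{i^*}^{L-1}\ge p_{i^*}^{L}>\tfrac12$ forces every summand past $\tfrac12$, so $\sum_{k=0}^{L-1}p_{i^*}^k>L/2$ while $p_{i^*}>1/2$; hence the factor is at most $4L=O(L)$, as claimed.
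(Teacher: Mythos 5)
Your overall route is the paper's: specialize Theorem~\ref{thm:stocLwBd}, observe that the advertised right-hand side equals $\rmH(\prompt,\nu)\cdot(\mu_{i^*}-1)$ because $\mu_{i^*}-1=p_{i^*}(1-p_{i^*}^{L})/(1-p_{i^*})$, and reduce the proposition, term by term, to the single scalar bound $\mathrm{kl}_i\le \Delta_i^2/(\mu_{i^*}-1)$ for each $i\ne i^*$; your concluding matching-factor arguments (including the estimate $\sum_{k=1}^{L}p_{i^*}^{k}\ge L\,p_{i^*}^{L}>L/2$ when $p_{i^*}>2^{-1/L}$) are also the paper's. Your closed form $\mathrm{kl}_i=(\mu_i-p_i^{L})\,d(p_i,p_{i^*})$, with $d$ denoting the binary relative entropy, is correct --- it is an algebraic repackaging of the explicit KL formula the paper computes --- and the identification of the infimum at $p\downarrow p_{i^*}$ is fine.

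The genuine gap is that the scalar bound is never proved: you defer it as ``the genuine technical content,'' but that bound \emph{is} the proof of the proposition --- the paper supplies it by applying $\log x\le x-1$ to the closed-form KL (yielding \eqref{equ:tightness_div}) and combining with the mean-gap estimate \eqref{equ:tightness_mu}; no local expansion appears anywhere. Worse, the local verification you do offer is incorrect. With $F(p_i):=\Delta_i^2/(\mu_{i^*}-1)-(\mu_i-p_i^{L})\,d(p_i,p_{i^*})$ one indeed has $F(p_{i^*})=F'(p_{i^*})=0$, and $F''(p_{i^*})\ge 0$ holds iff $(\mu_{i^*}-p_{i^*}^{L})(\mu_{i^*}-1)\le 2\mu'(p_{i^*})^{2}p_{i^*}(1-p_{i^*})$, but this last inequality is \emph{not} implied by convexity of $\mu$ and is false for $p_{i^*}$ near $1$: for $L=1$ it reads $p_{i^*}\le 1/2$, and in general its left side tends to $L^{2}$ while its right side tends to $0$ as $p_{i^*}\to 1$. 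Thus, exactly in the regime $p_{i^*}\in(2^{-1/L},1)$ singled out by the proposition, $F$ is strictly negative in a left neighborhood of $p_{i^*}$, so the plan ``show $F\ge 0$ on $(0,p_{i^*}]$'' cannot be carried out; indeed your own formula shows the target bound fails outright there, e.g.\ for $L=1$, $p_i=1/2$, $p_{i^*}=9/10$ one has $\mathrm{kl}_i=d(1/2,9/10)=\tfrac12\log\tfrac{25}{9}\approx 0.51$, whereas $\Delta_i^2/(\mu_{i^*}-1)=0.16/0.9\approx 0.18$. So the proposal is missing its key step, and the route sketched for supplying it would fail rather than merely require more work.
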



The proof is deferred to Appendix~\ref{app:stoc_tightness}. Proposition~\ref{prop:stoc_tightness} indicates $\specucb$ is \emph{optimal} up to constants and $L$ when considering the TGD.
In other words, the additional speculative decoding rounds of $\specucb$ not only achieves $O(\log\bbE[\len(\prompt_{\tau_{\rmc}})])$ compared to $\alg_{i^*}$, but is also among the best possible for any arm selection algorithm (up to constants). 

For the tightness of our algorithm, according to Note 15.3 in \citet{lattimore2020bandit}, $\mathrm{kl}_i=O(\Delta_i^2)$ when $\Delta_i$ is small. This indicates the dominating terms in the upper bound in Theorem~\ref{thm:stoc_up} match the lower bound in Theorem~\ref{thm:stocLwBd} up to (possibly instance-dependent) constants.
Furthermore, because the truncated geometric distribution is closer to a sub-exponential family distribution, especially when $L$ is large,
bandit algorithms built upon UCB1~\citep{auer2002finite} are generally loose in some factors. 
In order to close the gap between the upper and lower bounds completely, KL-UCB~\citep{garivier2011KLUCB} can possibly be adapted to this problem out of theoretical interest. However, on the practical side, KL-UCB  demands solving an optimization problem at each round, which can be time-consuming during implementations. Thus, it does not perfectly align with our ultimate goal of \ac{llm} inference acceleration.

\section{Modeling Tokens Adversarially}\label{sec:adv}

In this section, we weaken Assumption~\ref{ass:mean} in Section~\ref{sec:stoc} and consider a more general case.
Specifically, we make the following assumption on the number of accepted tokens.
\begin{assumption}[Adversarial Mean Values]\label{ass:adv_mean}
    Let the number of accepted tokens generated by hyperparameter $S_i$ at time step $t$ be   $y_{i,t}=\len(X_{i,t})$. We assume $\{y_{i,t}\}_{i\in[K],t\in\bbN}$ is fixed by the environment before the algorithm starts.
\end{assumption}

The bandit problem with this assumption is often referred to as the \emph{oblivious adversarial bandits} in the online learning works~\citep{auer2002nonstochastic,lattimore2020bandit}.
It admits more general and practical setups compared to the stochastic \ac{mab}.
We find the greedy decoding strategy aligns more closely to this setup in the sense that the generated tokens by the models are (potentially) fixed given the initial prompt.
Hence, we present our result under the \textbf{greedy decoding} strategy in this section.\footnote{Our analysis can also be extended to cover the sampling decoding strategy (see Remark~\ref{rmk:adv_sampling}).}
Given a prompt $\prompt\in\calX^*$ and a bandits configuration $\nu$, the stopping time regret \eqref{equ:regret} of an arm selection algorithm $\alg$ becomes
\begin{align}\label{equ:adv_regret}
    \Reg(\alg):= \bbE[\ST(\alg)] 
    -
    \min_{i\in[K]}\ST(\alg_i)
\end{align}
It is worth pointing out that under the greedy decoding strategy, the stopping time of any proposed algorithm can still be random due to 
the internal randomness embedded in the algorithm. For instance, the choice of hyperparameter $S_{I_t}$ in Line~\ref{lne:It} in Algorithm~\ref{alg:SpecDecEXP3}.

\begin{algorithm}[t]
    \caption{$\specexp$}\label{alg:SpecDecEXP3}    
    \textbf{Inputs:} number of hyperparameter specifications $K$, history $\calH_t\!=\big((I_s,X_{I_s,s})\big)_{s=1}^t$.
    \\
    \textbf{Procedures:}
    \begin{algorithmic}[1]
    \STATE Compute the lengths $Y_{I_s,s}=\len(X_{I_s,s})$ for all $s\in[t].$
    \STATE Set the statistics for all $i\in[K]$
    \begin{align}\label{equ:whatZ}
    \whatZ_{i,t}=\mathbbm{1}\{i=I_t\}\cdot \frac{L+1-Y_{i,t}}{L\cdot p_{t,i}}.
    \end{align}
    \STATE Set learning rate $\eta_t=\sqrt{\log K/(t\cdot K)}$.
    \STATE Set probability vector $p_t\in\simplexK$ with for all $i\in[K]$
    \begin{align}\label{equ:exp3Prob}
        p_{t,i} = \frac{
        \exp\Big(
        -\eta_t \sum_{s=1}^{t-1} \whatZ_{i,s}
        \Big)
        }{\sum_{j=1}^K\exp\Big(
        -\eta_t \sum_{s=1}^{t-1} \whatZ_{j,s}
        \Big)}.
    \end{align}
    \STATE \return{}  hyperparameter index $I_{t+1}\sim p_t$.\label{lne:It}
    \end{algorithmic}
\end{algorithm}

\textbf{Algorithm Design}
We present our arm selection algorithm in Algorithm~\ref{alg:SpecDecEXP3}, which is an abridged version of the full version  $\specbandit(\specexp)$ delineated in Algorithm~\ref{alg:SpecDecEXP3_full}.
This algorithm modifies the anytime EXP3 algorithm~\citep{lattimore2020bandit} to suit the speculative decoding application.
In terms of the algorithm design, the main difference lies in the change of the stopping rule. We highlight that while the stopping time of the algorithm is random, the anytime feature of Algorithm~\ref{alg:SpecDecEXP3} does not require any information about the time horizon. This is achieved by the vanishing sequence of learning rates $\{\eta_t\}_{t\in\bbN}$ which can be elegantly adapted to the unknown stopping time.
With regard to the analysis, previous works~\citep{auer2002nonstochastic,bubeck2012regret,lattimore2020bandit} only consider the 
gap between the cumulated rewards
over the \emph{same} fixed horizon $T$, i.e., $\max_{i\in[K]}\bbE[\sum_{t\in[T]}y_{i,t} - y_{I_t,t}] $.
In contrast, we need to upper bound the stopping time regret  in~\eqref{equ:adv_regret} where the baseline $\alg_i$ and any proposed algorithm $\alg$  have \emph{different} termination times in general.
Thus, the analysis is much more involved.

\textbf{Theoretical Analysis} 
To ease the analysis, we make an assumption on the stopping time of Algorithm~\ref{alg:SpecDecEXP3}.
\begin{assumption}[Stopping Time assumption]\label{ass:stop}
    Given a prompt $\prompt\in\calX^*$ and   configuration $\nu$, let $i^*:=\argmin_{i\in[K]}\ST(\alg_i)$. We assume  that $\ST(\alg) \!>\! \ST(\alg_{i^*})$ almost surely.
\end{assumption}
In speculative decoding, when the initial prompt is given, there generally exists a hyperparameter that has the highest acceptance rate in most rounds compared to the rest of the hyperparameters. As bandit algorithms will explore those suboptimal hyperparameters, the termination time falls behind that of the optimal hyperparameter. Therefore, Assumption~\ref{ass:stop} is satisfied in practical applications.

\begin{restatable}{theorem}{AdvUpBd}\label{thm:adv_UpBd}
    Under Assumptions~\ref{ass:finiteLen}, ~\ref{ass:adv_mean} and~\ref{ass:stop}, 
    given any prompt $\prompt\in\calX^*$ and bandit configuration $\nu =(P,\calS=\{S_i\}_{i\in[K]},L)$,
    the expected stopping time regret of 
    Algorithm~\ref{algo:SDbandit} with $\alg=\,$Algorithm~\ref{alg:SpecDecEXP3} (\specexp{}),
    \begin{align}
            \Reg(\alg,\prompt,\nu) \le 2L\cdot 
        \min\bigg\{\sqrt{\len(\prompt_{\tau_\rmc}) K\log K},
        &\\*
        2L K\log K
                +
            \sqrt{\min_{i\in[K]}\ST(\alg_i)  K\log K} \bigg\}
          .
    \end{align}
\end{restatable}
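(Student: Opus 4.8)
The plan is to reduce the stopping-time regret to a standard bandit reward regret and then invoke a stopping-time version of the anytime EXP3 analysis. Throughout, write $T=\ST(\alg)$, let $i^*=\argmin_{i\in[K]}\ST(\alg_i)$ with $T^*=\ST(\alg_{i^*})$ (deterministic under Assumption~\ref{ass:adv_mean} and greedy decoding), and set $N=\len(\prompt_{\tau_{\rmc}})$. Since each round emits $y_{I_t,t}\in[1,L+1]$ tokens, the cumulative counts $A(n)=\sum_{t\le n}y_{I_t,t}$ and $B_i(n)=\sum_{t\le n}y_{i,t}$ satisfy $T=\min\{n:A(n)\ge N\}$ and $\ST(\alg_i)=\min\{n:B_i(n)\ge N\}$. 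The normalized losses $\ell_{i,t}=(L+1-y_{i,t})/L\in[0,1]$ make $\whatZ_{i,t}$ an unbiased importance-weighted loss estimator, and the reward gap equals $R_n:=B_{i^*}(n)-A(n)=L\sum_{t\le n}(\ell_{I_t,t}-\ell_{i^*,t})$.

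The first step is a pathwise reduction. From the stopping rule, $A(T-1)<N$ forces $A(T)\le N+L$, while $B_{i^*}(T^*)\ge N$ together with $y_{i^*,t}\ge 1$ gives $B_{i^*}(T)\ge N+(T-T^*)$ whenever $T\ge T^*$. Subtracting yields $T-T^*\le R_T+L$ on $\{T\ge T^*\}$, and Assumption~\ref{ass:stop} guarantees $T>T^*$ almost surely, so the inequality holds a.s. Taking expectations gives the clean reduction
\begin{align}
\Reg(\alg,\prompt,\nu)=\bbE[T]-T^*\le \bbE[R_T]+L .
\end{align}

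The central step is to control the random-horizon reward regret $\bbE[R_T]$, where $T$ is itself coupled to the algorithm's internal randomness; this is the key departure from textbook EXP3, which fixes the horizon in advance. I would take the per-round exponential-weights (mirror-descent) inequality, which holds pathwise for each $t$, and sum it up to the stopping time $T$. Taking expectations, optional stopping makes the importance-weighted estimators unbiased over the random horizon (valid because $T\le N$ with $\bbE[N]<\infty$ by Assumption~\ref{ass:finiteLen}). Bounding the second-order term by $\sum_i\ell_{i,t}^2\le K$ and summing the schedule $\eta_t=\sqrt{\log K/(tK)}$ through $\sum_{t\le T}t^{-1/2}\le 2\sqrt{T}$, the analysis yields $\bbE[R_T]\le 2L\,\bbE[\sqrt{T K\log K}]\le 2L\sqrt{\bbE[T]\,K\log K}$, the last step by Jensen's inequality.

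The two terms in the $\min$ then follow from this single estimate. For the first, each round emits at least one token, so $T\le N$ a.s. and $\bbE[T]\le N$; substituting gives $\bbE[R_T]\le 2L\sqrt{N K\log K}$ and hence $\Reg\le 2L\sqrt{\len(\prompt_{\tau_{\rmc}})\,K\log K}$ up to the lower-order additive $L$. For the second, combining the reduction with $\bbE[R_T]\le 2L\sqrt{\bbE[T]K\log K}$ produces the self-bounding inequality $\bbE[T]-T^*\le L+2L\sqrt{K\log K}\,\sqrt{\bbE[T]}$; solving this quadratic in $\sqrt{\bbE[T]}$ and simplifying gives $\Reg\le 4L^2K\log K+2L\sqrt{\min_{i\in[K]}\ST(\alg_i)\,K\log K}$, and taking the minimum completes the argument. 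I expect the central step to be the main obstacle: making the anytime regret bound rigorous at a data-dependent stopping horizon (optional stopping for the estimators, and the Jensen conversion $\bbE[\sqrt T]\le\sqrt{\bbE[T]}$), together with carefully tracking the overshoot $A(T)\le N+L$, is precisely where this problem diverges from the classical EXP3 analysis.
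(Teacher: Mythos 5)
Your proposal is correct, and its core machinery is the same as the paper's: the anytime EXP3/FTRL analysis with $\eta_t=\sqrt{\log K/(tK)}$ run up to the random horizon $\ST(\alg)$, Doob's optional stopping to take expectations at that horizon, Jensen's inequality $\bbE[\sqrt{T}]\le\sqrt{\bbE[T]}$, and the final self-bounding quadratic yielding the two branches of the $\min$. Where you genuinely differ is the reduction step. The paper (Appendix~\ref{app:adv_UpBd}) defines a \emph{cross-horizon} loss regret $\wtReg(\alg)$ comparing $\sum_{t\le\ST(\alg)}p_t^\top\whatZ_t$ with $\sum_{t\le\ST(\alg_{i^*})}e_{i^*}^\top\whatZ_t$, proves the exact identity $\wtReg(\alg)=\frac{L+1}{L}\Reg(\alg)$, and must then carry the horizon mismatch through the FTRL decomposition as the term $(\P)=\sum_{t=\ST(\alg_{i^*})+1}^{\ST(\alg)}e_{i^*}^\top\whatZ_t$, controlled by Lemma~\ref{lem:AdvOptStp}; the crucial $1/L$ gain only emerges at the end, when $\bbE[(\P)]\le\Reg(\alg)$ is cancelled against $\frac{L+1}{L}\Reg(\alg)$. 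You instead perform the conversion pathwise at the level of token counts ($A(T)\le N+L$, $B_{i^*}(T)\ge N+(T-T^*)$), which reduces the problem immediately to a \emph{same-horizon} comparison $R_T=L\sum_{t\le T}(\ell_{I_t,t}-\ell_{i^*,t})$, so your bandit step is the textbook one with a random horizon and no cross-horizon term. Note that your key facts are the paper's in disguise: $y_{i^*,t}\ge 1$ is exactly the bound $z_{i^*,t}\le 1$ the paper uses for $(\P)$, and your optional-stopping step for unbiasedness of $\whatZ_{i^*,t}$ and for the stability term is the content of Lemmas~\ref{lem:adv_sum_LR} and~\ref{lem:AdvOptStp}. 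Your route is arguably cleaner on one point: since $T\le\len(\prompt_{\tau_\rmc})<\infty$ almost surely under greedy decoding, the stopping time is bounded and condition (a) of Lemma~\ref{lem:DoobOptStopping} applies directly, which is simpler than the paper's verification of condition (b).

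One small discrepancy to flag: your bound carries an extra additive $+L$ (and correspondingly lower-order corrections in the second branch) coming from the overshoot allowance $A(T)\le N+L$, so strictly you prove $\Reg(\alg,\prompt,\nu)\le 2L\sqrt{\len(\prompt_{\tau_\rmc})K\log K}+L$ rather than the stated inequality. This is not a gap in reasoning but a bookkeeping mismatch: the paper removes exactly this overshoot by its footnote convention that the $\EOS$ token sits at the end of the last round's accepted tokens, under which $A(T)=N$ and $B_{i^*}(T^*)=N$ hold exactly and your $+L$ disappears. Adopting the same convention, your argument recovers the theorem verbatim.
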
 
Theorem~\ref{thm:adv_UpBd} also provides an affirmative answer to the question posed in Section~\ref{sec:AdaSpec}. The \textbf{first} term in the minimum provides a worst-case guarantee. Even if all hyperparameters in $\calS$ are not good or $K$ is large, \specexp{} will stop at no more than  $O(\sqrt{\len(\prompt_{\tau_c})})$ time steps after $S_{i^*}$ terminates.
The \textbf{second} term is an instance-dependent bound in terms of hyperparameters $\calS$. Specifically, when the best hyperparameter $S_{i^*}$ has small stopping time, \specexp{} will scale as $\ST(\alg_{i^*})+O(\sqrt{\ST(\alg_{i^*})})$.
This upper bound suggests that the number of speculative decoding rounds of $\specexp$ is almost the same as that of the best hyperparameter configuration $\alg_{i^*}$.

\section{Experiments}\label{sec:exp}

In this section, we conduct two sets of   experiments to demonstrate the efficacy of the proposed bandit framework \specbandit{}, along with 
\specucb{} and \specexp{}.
 In the first experiment, the candidate hyperparameters are different draft models. 
  In the second experiment, the candidate hyperparameters are different speculation lengths, where real-life \ac{llm} serving scenarios are simulated with diverse input prompts. 
    Additional experimental results on memory utilization and additional experiments on larger models and different hardware are provided in Appendix~\ref{app:experiment}. The code is accessible via \url{https://github.com/sail-sg/BanditSpec}.

\subsection{Experiment with Draft Models}\label{sec:exp_one}
\textbf{Experimental Setups}
We adopt the open-sourced LLaMA3-8B-Instruct~\citep{grattafiori2024llama3herdmodels} and Qwen2-7B-Instruct~\citep{yang2024qwen2technicalreport} as the target models.
The commonly-used existing speculative decoding methods PLD~\citep{saxena2023prompt}, Rest~\citep{he2024rest}, Suffix Tree~\citep{oliaro2024suffixdecoding,hu2024sam} and Eagle-2~\citep{li2024eagle2} are adopted as the baselines. Among these baselines, PLD, Rest, and Suffix Tree represent the non-parametric (or model-free) speculative decoding methods, whereas Eagle-2 represents the speculative decoding methods that utilize smaller draft models. Each of these methods corresponds to an arm in our problem.

The experiments are carried out on Spec Bench~\citep{xia2024unlocking}, Alpaca~\citep{Taori2023alpaca}, Code Editor~\citep{guo2024codeeditorbench} and Debug Bench~\citep{tian2024debugbench}. Among these benchmarks, Spec Bench and Alpaca encompass multiple topics, while Code Editor and Debug Bench focus on coding tasks, a representative scenario for specialized models.

We record the number of accepted tokens for each speculative decoding step, as well as the wall-time for generating each complete response. The Mean Accepted Tokens (MAT) and the 
throughput (Tokens/s) is computed.
These two metrics are widely adopted in the speculative decoding community and are positively correlated~\citep{xia2024unlocking}. In particular, Tokens/s measures the actual latency during decoding. The experiments are conducted on a single A100 and set the batch size to $1$.

\begin{table*}
\caption{Empirical Comparison between the proposed algorithms and the existing works, measured by Mean Accepted Tokens (MAT) ($\uparrow$) and Tokens/s 
  ($\uparrow$). The best result is highlighted in \textbf{bold}, while the second best result is \underline{underlined}. The proposed algorithms demonstrate unequivocal superior performance compared with the existing methods.}
\label{table:results}
\centering
\begin{tabular}{lccccccccc}
\toprule
  \multicolumn{1}{c}{\multirow{2}{*}{\textbf{Methods}}} &
  \multicolumn{2}{c}{\textbf{Spec Bench}} &
  \multicolumn{2}{c}{\textbf{Alpaca}} &
  \multicolumn{2}{c}{\textbf{Code Editor}} &
  \multicolumn{2}{c}{\textbf{Debug Bench}} \\ \cmidrule(lr){2-3}\cmidrule(lr){4-5}\cmidrule(lr){6-7}\cmidrule(lr){8-9}
  \multicolumn{1}{c}{} &
  \multicolumn{1}{c}{MAT($\uparrow$)} &
  \multicolumn{1}{c}{Tokens/s($\uparrow$)} &
  \multicolumn{1}{c}{MAT($\uparrow$)} &
  \multicolumn{1}{c}{Tokens/s($\uparrow$)} &
  \multicolumn{1}{c}{MAT($\uparrow$)} &
  \multicolumn{1}{c}{Tokens/s($\uparrow$)} &
  \multicolumn{1}{c}{MAT($\uparrow$)} &
  \multicolumn{1}{c}{Tokens/s($\uparrow$)} \\ \midrule
\multicolumn{6}{l}{\textit{\textbf{LLaMA3-8B-Instruct}}} \\\cmidrule(lr){0-1}
  Vanilla&
  1.00&
  35.73&
  1.00&
  35.92&
  1.00&
  36.32&
  1.00&
  36.89\\
  PLD &
  1.46 &
  43.96 &
  1.53 &
  53.06 &
  2.13 &
  82.61 &
  1.67 &
  82.76 \\
  Rest &
  1.29 &
  40.67 &
  1.48 &
  52.40 &
  1.33 &
  51.32 &
  1.29 &
  48.49 \\
  Suffix Tree &
  1.83 &
  55.10 &
  1.71 &
  64.02 &
  2.30 &
  90.21 &
  2.13 &
  77.56 \\
  Eagle-2 &
  \underline{3.94} &
  98.15 &
  4.04 &
  110.00 &
  \underline{4.79} &
  128.76 &
  \textbf{4.78} &
  119.12 \\
  \rowcolor{lightLavender}
  \specexp &
  3.65 &
  \underline{102.10} &
  \underline{4.23} &
  \underline{120.38} &
  4.36 &
  \underline{137.29} &
  4.50 &
  \underline{132.25} \\
  \rowcolor{lightLavender}
  \specucb &
  \textbf{3.98} &
  \textbf{105.72} &
  \textbf{4.35} &
  \textbf{125.78} &
  \textbf{4.83} &
  \textbf{138.27} &
  \underline{4.60} &
  \textbf{135.34} \\ \midrule
\multicolumn{6}{l}{\textit{\textbf{Qwen2-7B-Instruct}}} \\\cmidrule(lr){0-1}
Vanilla&
1.00&
38.71&
1.00&
39.32&
1.00&
39.30&
1.00&
39.57 \\
  PLD &
  1.55 &
  52.44 &
  1.42 &
  58.41 &
  1.89 &
  64.56 &
  2.15 &
  70.49 \\
  Rest &
  1.31 &
  46.42 &
  1.47 &
  59.01 &
  1.31 &
  53.79 &
  1.22 &
  50.51 \\
  Suffix Tree &
  1.96 &
  68.42 &
  1.46 &
  62.60 &
  2.18 &
  85.75 &
  2.49 &
  101.47 \\
  Eagle-2 &
  3.64 &
  97.82 &
  3.61 &
  104.43 &
  4.88 &
  138.58 &
  4.79 &
  126.01 \\
  \rowcolor{lightLavender}
  \specexp &
  \underline{3.76} &
  \underline{107.36} &
  \underline{3.83} &
  \underline{113.90} &
  \underline{4.90} &
  \underline{160.41} &
  \underline{4.86} &
  \textbf{151.73} \\
  \rowcolor{lightLavender}
  \specucb&
  \textbf{4.13} &
  \textbf{112.33} &
  \textbf{3.93} &
  \textbf{114.20} &
  \textbf{4.92} &
  \textbf{161.35} &
  \textbf{5.10} &
  \underline{151.37} \\ \bottomrule
\end{tabular}
\end{table*}

\textbf{Experimental Results}
We report the results of our experiments in Table~\ref{table:results}.
The proposed adaptive speculative decoding framework \specbandit{} exhibits superior performance compared to existing methods in the datasets we consider. In particular, the best performance measured by Token/s is always achieved by the proposed framework. We note that although the non-parametric methods are worse than Eagle-2 \emph{in average}, they are effective on a portion of prompts. Our proposed methods, \specucb{} and \specexp{}, automatically adapt to different prompts, i.e., suffering from a small stopping time regret on each prompt. Thus, they achieve better performance than all the methods that only use a fixed model. On Debug Bench, \specucb{} can even achieve improvements of $13\%$ for LLaMA3 and $19\%$ for Qwen2.
Moreover, as \specucb{} demonstrates better performance under almost all benchmarks with both target models, this suggests that speculative decoding in real-life environments tends to be closer to the stochastic (Assumption~\ref{ass:mean}) compared to the adversarial reward case (Assumption~\ref{ass:adv_mean}).

\begin{remark}
    The adversarial setting can be regarded as a means of comparison to the stationary setting. Prior to this work, it was {\em a priori} unclear how to use \ac{mab} to improve speculative decoding.
Should one employ a stochastic, adversarial or even more generalized model? We consider a range of such \ac{mab} models and do a comparison among them to provide the community with a
guide on which \ac{mab} model is best suited to the speculative decoding problem. As the empirical performance of \specucb{} is better than \specexp{} (Table~\ref{table:results}), it implies that real-life scenario tends to be benign and may be more aligned with the stationary mean assumption.
\end{remark}

\subsection{Experiment with Speculation Lengths}
\begin{figure}[t]
\centering
\subfigure[Target model: LLaMA3]{
		\includegraphics[width=.48\textwidth]{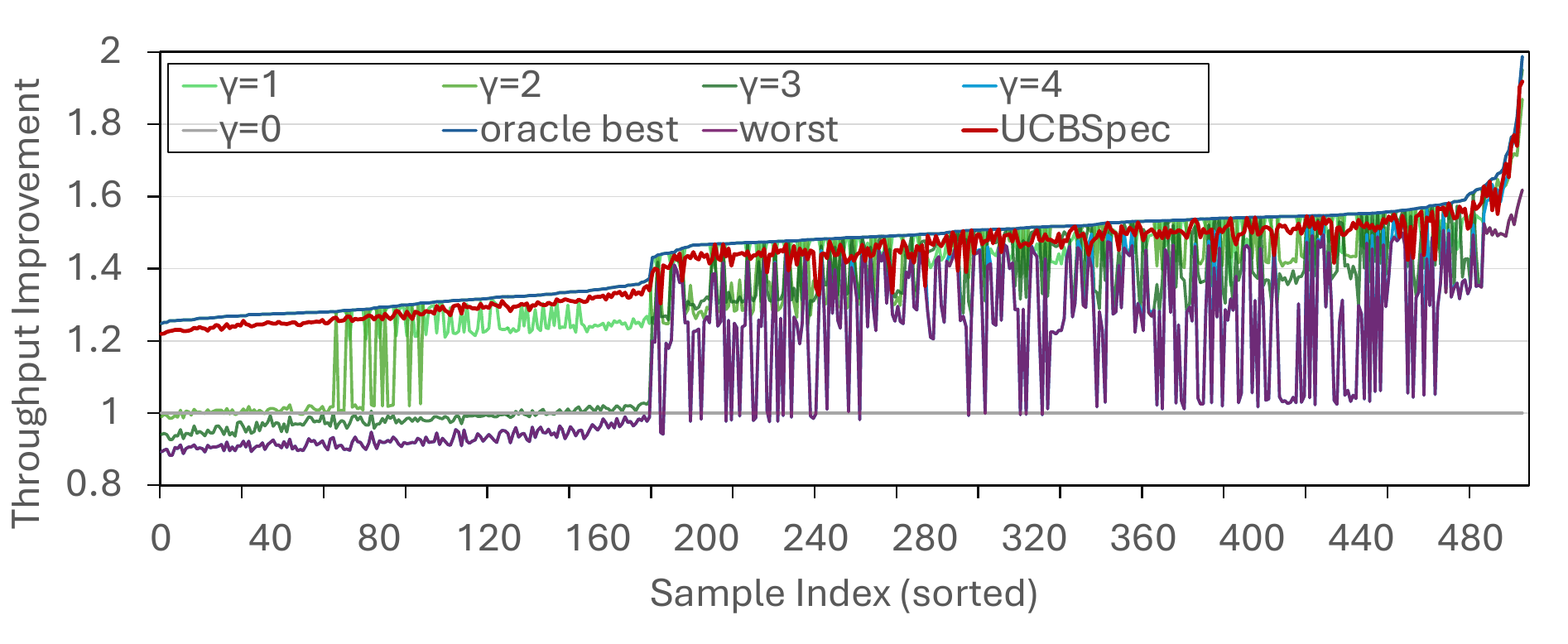}
        \vspace*{-1cm}
	}
\subfigure[Target model: Qwen2]{
		\includegraphics[width=.48\textwidth]{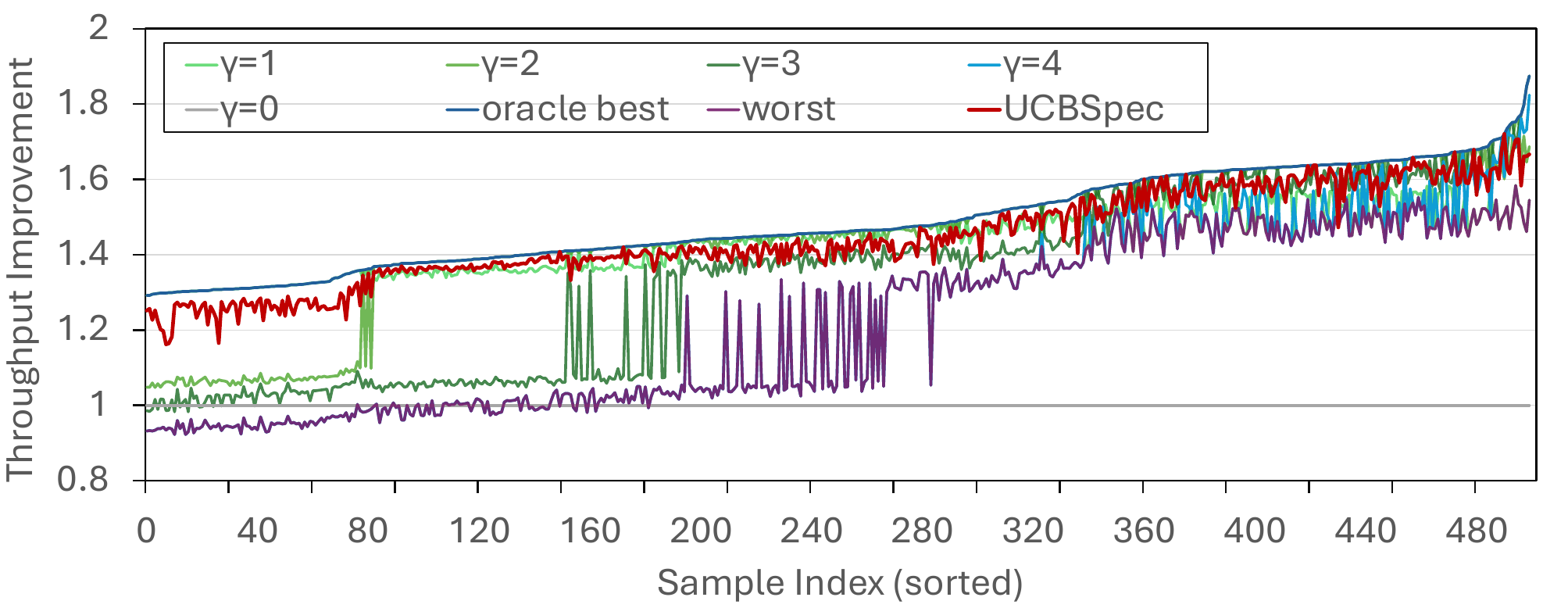}
	}
\vspace*{-1.7em}
\caption{We compare throughtput improvements with different speculative decoding lengths $\gamma\in[4]$ and the canonical decoding ($\gamma=0$). The performance of \specucb{} approaches that of the best hyperparameter across all samples for both target models LLaMA3 and Qwen2. The sample indices are sorted according to the best arm improvement for a clear demonstration.}\label{fig:throughput}
\label{fig:app_example}
\vspace{-1.6em}
\end{figure}

\textbf{Experimental Setups}
In addition to improving the latency when batch size is $1$, our proposed algorithms also improve the throughput in real LLM serving scenarios with different batch sizes. 
In practical serving environments, speculative decoding does not always yield performance gains due to variations in batch size and acceptance rate. As the batch size increases, the system rapidly becomes compute-bound, while a lower acceptance rate can lead to wasted computation resources of the GPU. 
Additionally, the execution time of the draft model contributes to an overall decrease in throughput. 
Given these confoundingly interrelated factors, along with latent variables such as the acceptance rate (which is unknown before verification and depends on the input prompts), we adopt a bandit-based approach to model the current throughput as the reward. 
Specifically, we employ \specucb{} to dynamically adjust the hyperparameter $\gamma$, the speculation length, to maximize the throughput, i.e., the number of generated tokens per second. We set the maximum speculation length $L$ as $4$, and $\gamma$ takes values in $\{0,\ldots,4\}$ where $\gamma=0$ corresponds to  the  canonical decoding (Algorithm~\ref{algo:CanDec}). 
As the first experiment suggests \specucb{} is more in line with the real-life speculative decoding environment than \specexp{}, we only evaluate \specucb{} in this experiment.
The experiments are conducted on a single A100.

Specifically, we use LLaMA3-8B-Instruct and Qwen2-7B-Instruct as the target models and adopt Eagle-1~\cite{li2024eagle}, the current state-of-the-art model, as the draft model. We do not use Eagle-2~\cite{li2024eagle2} because it does not support batch inference. For evaluation, we adopt Alpaca~\citep{Taori2023alpaca} as the test set, as it covers various topics, thereby simulating a realistic setting with diverse acceptance rates. 
To approximate real-world conditions, we randomly sample prompts from the test set to form a batch for inference, with batch sizes ranging from $1$ to $50$. 
As our evaluation metric, we measure the throughput improvement relative to the canonical decoding (non-speculative) baseline. Our result is averaged over $16$ independent runs to smooth the hardware-dependent factors.

\textbf{Experimental Results} The results are presented in Figure~\ref{fig:throughput}, where we reorder the $500$ \emph{sample indices} in ascending order of the performance of the best hyperparameter (blue line) for easy comparison. Otherwise, the lines in this figure will not be largely monotonic. Here the ``worst'' and ``best'' lines are calculated among results of $\gamma \in\{1,\cdots,4\}$ in \emph{hindsight}. Thus, we call the ``best'' line as the oracle best. \textbf{Firstly}, since the optimal hyperparameter $\gamma$ varies with different input prompts for either target model, fixing a single hyperparameter is suboptimal, e.g.,  in Figure~\ref{fig:throughput} (b), the best hyperparameter changes from $\gamma=1$ (light green) to $\gamma=2$ (green) at a sample index around $80$;
and the original Eagle-1~\citep{li2024eagle} ($\gamma = 4$ in purple) is even inferior to the canonical decoding ($\gamma=0$ in grey) for sample indices less than $80$.
This necessitates the use of adaptive hyperparameter selection.
\textbf{Next},
 \specucb{} demonstrates competitive throughput performance, outperforming the second-best hyperparameter in most cases and closely approaching the (varying) oracle best across experiments. 
These benefits are obtained thanks to the adaptivity of \specbandit{}.

\vspace{-0.4em}
\section{Conclusions and Discussions}
\vspace{-0.4em}
In this work, we propose a \ac{mab} framework together with two hyperparameter selection algorithms that adaptively choose appropriate hyperparameters to accelerate \ac{llm} inference under realistic assumptions.
Both theoretical guarantees and extensive experiments are provided to demonstrate that adaptive speculative decoding via bandit algorithms can boost the performance of existing methods in a training-free manner. For future work, we would like to point out some directions, improving the performance of the current algorithms.

Therefore, another direction is to design hyperparameter selection algorithms that can achieve the (near) optimal balance between these two goals based on practical needs.

\textbf{Structured Bandits}
Our current framework is based on the standard $K$-armed bandit model. However, broader classes of bandit problems with additional structures---such as linear bandits~\citep{Abbasi2011improved} and Lipschitz bandits~\citep{magureanu2014lipschitz}---can also be considered. This aligns more closely with practical scenarios, where the number of hyperparameters can be large, and the value of $K$ may be very high when modeling the problem as a $K$-armed \ac{mab}. By leveraging such structures in \ac{mab}s, we can expect to identify better hyperparameters more efficiently, thereby further accelerating the optimization process.

\textbf{Robust bandits and Non-stationary bandits}
As indicated by the experimental result, the real-life speculative decoding environment is closer to the stochastic reward case (Assumption~\ref{ass:mean}) than the adversarial reward case (Assumption~\ref{ass:adv_mean}). Therefore, one direction for future work is to consider the settings ``in between'', e.g., robust bandits in the presence of adversarial corruptions~\citep{Ding2022Robust,zhong2021probabilistic}, or non-stationary bandits~\citep{cao2019nearly,Besbes2014Stochastic,hou2024almost} where the mean number of accepted tokens can vary across time. These settings are more benign than the adversarial reward assumption and can be exploited to accelerate the inference.

\textbf{Contextual Bandits}
Another direction is to explore contextual bandits, where the environment reveals additional information that can be leveraged to reduce the learning burden~\cite{luo2018contextual,kato2021role}.
\vspace{-.05in}
\section*{Impact Statement}\vspace{-.05in}
This paper presents work whose goal is to advance the field of Machine Learning. There are many potential societal consequences of our work, none which we feel must be specifically highlighted here.

\vspace{-.05in}
\paragraph{Acknowledgements:} This work is supported by funding from the Singapore Ministry of Education Academic Research Fund (AcRF) Tier 1 grants under grant numbers A-8002934-00-00 and A-8000980-00-00. This research is also supported by the National Research Foundation, Singapore under its AI Singapore Programme (AISG Award No: AISG2-PhD-2023-08-044T-J), and is part of the programme DesCartes which is supported by the National Research Foundation, Prime Minister’s Office, Singapore under its Campus for Research Excellence and Technological Enterprise (CREATE) programme.
\bibliography{specbandit/Specbandit}
\bibliographystyle{icml2025}

\newpage
\onecolumn
\setcounter{page}{1}
\appendix
\section{Related Works}\label{app:relatedworks}
\textbf{Speculative Decoding} Speculative decoding is proposed in~\citet{leviathan2023fast,chen2023accelerating}, where the draft model only generates a single chain of draft tokens. Then a line of works extends the chain structure to the tree structure~\citep{miao2023SpecInferAG,cai2024medusa,du2024glide,li2024eagle,huaccelerated}. In these works, the draft tokens are organized as a connected tree. To further improve the number of accepted tokens, previous works propose to generate tokens in a batch manner, i.e., the draft tokens are organized as multiple disconnected parts. SpecTr~\citep{sun2024spectr} views this problem from the optimal transport perspective and derives the algorithm that is optimal up to a multiplicative constant. \citet{khisti2024multi} derives the canonical form of this problem and designs the relaxed optimization algorithms. All these algorithms verify the draft tokens in a token-by-token manner. \citet{sun2024block} proposes to verify all the draft tokens as a whole block, which further boosts the acceleration ratio. \citet{sun2024triforce} proposes to fit the speculative deciding into a hierarchical structure, where multiple draft models with various sizes are generating and verifying tokens. The smaller model will generate more tokens. This fine-grained behavior improves the overall performance of the system. \citet{liu2023OnlineSD} design algorithms to update the draft model parameters in an online manner, which makes the draft model adaptive to the current context. \citet{liu2024optimizing} and \citet{huang2024specdec++} aim to optimize the speculative length in a training and training-free manner (more discussions on SpecDec++~\citep{huang2024specdec++} are provided in Appendix~\ref{app:add_discussion_spec++}). \citet{chen2024SequoiaSR} optimizes the hyperparameters related to the hardware by dynamic programming in an offline manner. We also note that there is a series of non-parametric speculative decoding algorithms~\citep{hu2024sam,oliaro2024suffixdecoding}, i.e., the draft model itself does not require any training procedures. \citet{yin2024theoretical} derives the theoretical analysis of the speculative decoding. 

\textbf{Multi-Armed Bandit} 
The multi-armed bandit problem is a fundamental topic in decision theory and reinforcement learning, with various algorithms developed to address the exploration-exploitation trade-off. 
The standard stochastic $K$-armed bandit problem was first introduced by~\citet{robbins1952some} and then studied by~\citet{lai1985asymptotically}.
There has been a major theoretical advancement with the introduction of Upper Confidence Bound (UCB) algorithms~\citep{auer2002finite}. Various algorithms have been proposed to achieve improved theoretical guarantees and practical performance since then~\citep{garivier2011KLUCB,bubeck2012regret}.
Beyond UCB-type algorithms, sampling-based algorithms, such as Thompson Sampling~\citep{agrawal2012analysis,agrawal2017near,russo2017ATO} and sampling via bootstrap~\citep{kveton2019garbage,wan2023multiplier}, have also exhibited strong empirical performance with provable regret bounds.
Furthermore, the problem has been extended to the adversarial settings where the rewards are no longer stochastic~\cite{auer2002nonstochastic,bubeck2012regret}.
We refer to \citet{lattimore2020bandit} for a comprehensive introduction to the Multi-Armed Bandit problem.

\section{Additional Discussions and Remarks}\label{app:add_discussion}
\subsection{Discussion on the Assumptions}\label{app:add_discussion_assumption}
On the theoretical side,
the stationary mean assumption (Assumption~\ref{ass:mean}) is strictly weaker than the i.i.d.\ assumption. In particular, the number of accepted tokens can depend on the generated tokens. Therefore, this assumption is aligned with real-world scenarios in which different decoding steps are correlated. 
Furthermore, the basic Multi-Armed Bandits (MAB) model can be generalized to contextual bandits and non-stationary bandits. The proposed \specbandit{} framework provides a basic template to apply
these more general MAB setups to speculative decoding. Our formulations under the stationary/adversarial mean assumptions are just basic setups and we leave the more general/elaborate setups as future research.

On the experimental side, our experimental results (Table~\ref{table:results}) indicate, the performance of \specucb{} significantly outperforms one of the best speculative decoding methods, Eagle-2 (Li et al., 2024). This corroborates the stationary mean assumption in our formulation.


\subsection{Discussion on \specucb{}}\label{app:add_discussion_stoc_up}

We comment that \specucb{} is among the simplest UCB-type algorithms in the sense that only the empirical means and UCB's need to be maintained, and the hyperparameter to be selected can be directly determined via the UCB's. 

In contrast,
Thompson Sampling~\citep{agrawal2012analysis} and KL-UCB~\citep{garivier2011KLUCB} generally achieve better empirical regret bounds than UCB1~\citep{auer2002finite}. However, Thompson Sampling requires maintaining the posterior distribution and sampling from it to select the arm to pull; whereas KL-UCB involves solving an optimization problem for arm selection. These steps add additional complexity to the algorithms.

Given our goal of accelerating \ac{llm} inference, the simplicity of UCB1 is more in line with this objective.
Therefore, we propose \specucb{}, which redesigns the confidence radius and the stopping rule of UCB1 to adapt specifically to the speculative decoding application.


\subsection{Discussion on Adaptive Adversary}\label{app:add_discussion_adaptive_adv}

We consider the oblivious adversary in this paper where the number of accepted tokens generated by all hyperparameters at all time steps, i.e., $\{y_{i,t}\}_{i\in[K],t\in\bbN}$, is fixed before the decoding process starts. 
One may be interested in considering the \emph{adapative} adversary, where the environment (adversary) can choose the number of accepted tokens generated by $S_{I_t}$ based on (part of) the history information $\calH_{t-1}$ and $I_t$~\citep{arora2012online}.
This adversary is more malicious than the oblivious one and the regret is expected to be even larger than the current one in Theorem~\ref{thm:adv_UpBd}.
As our empirical experiments suggest that the practical scenario aligns more closely with the stochastic reward assumption (Assumption~\ref{ass:mean}) and deviates from the oblivious adversarial reward assumption (Assumption~\ref{ass:adv_mean}), we believe it is unnecessary to consider the adaptive adversarial reward.

\subsection{Discussion on SpecDec++~\citep{huang2024specdec++}}\label{app:add_discussion_spec++}
 We compare the proposed methods with an existing adaptive speculative decoding method, SpecDec++~\citep{huang2024specdec++}, in this section.

SpecDec++~\citep{huang2024specdec++} is adaptive in choosing the speculation length, achieving good performance compared to the vanilla speculative decoding method~\citep{leviathan2023fast,chen2023accelerating}. It trains an acceptance probability prediction head and stops drafting new tokens when the predicted rejection probability reaches a certain threshold. 

We compare it with the proposed methods as follows:
\begin{itemize}
    \item Firstly, we highlight that our proposed method is \emph{training-free} which can be deployed easily along with \emph{existing off-the-shelf methods}.
    In contrast, SpecDec++ focuses on \emph{training} of an acceptance prediction head. Currently, SpecDec++ is only available when using LLaMA-2-Chat-7B as the draft model and LLaMA-2-Chat-70B as the target model (bfloat 16).
    \item Secondly, the proposed \specbandit{} framework considers the more general hyperparameter selection problem that goes beyond merely the speculation length. Therefore, it is "orthogonal" to SpecDec++ in the sense that any methods with (or without) SpecDec++ can also be candidates for the hyperparameter in our framework, e.g., \{Eagle-2, LLaMA-2-Chat-7B\} with SpecDec++ can also be regarded as arms (if they are available).
\end{itemize}
\section{Additional Details}\label{app:add_detail}
In this section, we provide more details that complement the main paper.


\subsection{Vanilla Speculative Decoding}\label{app:vanilla_SD}
For completeness, we present and describe the vanilla speculative decoding algorithm \citep{leviathan2023fast,chen2023accelerating} in Algorithm~\ref{algo:a_sd} in this section.

We introduce some notations first. For any nonnegative function $f:\calX\rightarrow\bbR_{+}$ with $\sum_{x\in\calX}f(x)>0$, we define the distribution induced by it as $\nor(f(\cdot))=f(\cdot)/\sum_{x^{\prime}\in\calX}f(x^{\prime})$. 
The positive part of a function $f$ is denoted as $[f(\cdot)]_{+}=\max\{0,f(\cdot)\}$. 

Speculative decoding implements a draft model $Q$ to generate draft tokens and let the target model $P$ verify them in \emph{parallel}.
In practice, the draft model is much smaller than the target model. Thus, the draft token generation (Line~\ref{lne:gene}) can be achieved in a short time. Then we let the target model only forward inference \emph{once} with these draft tokens as inputs (Line~\ref{lne:veri}). The verification procedures (Lines~\ref{lne:veri_1} to \ref{lne:veri_2}) are designed to guarantee that the output tokens $x_{1:\tau+1}$ are distributed as the target model $P$. Here, the additional $(\tau+1)$-st accepted token is also called the {\em bonus token}.

\begin{algorithm}[H]
	\caption{Vanilla Speculative Decoding}\label{algo:a_sd}
        \textbf{Inputs:} base model $P$, draft model $Q$, prefix $\prompt$, maximum speculation length $L$\\
	\textbf{Procedures:}
	\begin{algorithmic}[1]
    \STATE Generate $L$ draft tokens $\tilx_{1:L}$ via $\tilx_{i}\sim Q(\cdot\,|\,\prompt,\tilx_{1:i-1})$ for $i\in[L]$.\label{lne:gene} \\
        \STATE Set $\tau=0$, and calculate the values of $P(\tilx_i\,|\,\prompt,\tilx_{1:i-1})$ for $i\in[L]$ in parallel.\label{lne:veri}\\
        \FOR{$k=1,\ldots,L$}
            \STATE Sample $r_i\sim\unif([0,1])$.\label{lne:veri_1}\\
            \IF{$r_{i}\leq \min\{1,P(\tilx_{i}\given \prompt,\tilx_{1:i-1})/Q(\tilx_{i}\given \prompt,\tilx_{1:i-1})\}$} \label{pseudo:accept}
                \STATE Set $\tau=i$ and $x_i=\tilx_i$. 
            \ELSE 
                \STATE Sample $x_i\!\sim\! \nor\big(\big[P(\cdot\given \prompt,\tilx_{1:i-1})\!-\!Q(\cdot\given \prompt,\tilx_{1:i-1})\big]_{+}\big)$. \\
                \STATE \textbf{break.}\label{lne:veri_2}
            \ENDIF
        \ENDFOR
        \STATE \textbf{if} $\tau=L$ \textbf{then} sample $x_{L+1}\sim P(\cdot\given\prompt,x_{1:L})$.
        \STATE \return{} $x_{1:\tau+1}$.
	\end{algorithmic}
\end{algorithm}

\subsection{Dynamics of \ac{mab}}\label{app:mab}
We provide a description of the dynamics of \ac{mab} in Algorithm~\ref{alg:mab}.
\begin{algorithm}[ht]
    \caption{Dynamics of \ac{mab}}\label{alg:mab}  
    \textbf{Inputs: $K$ arms, time horizon $T$.}
    \begin{algorithmic}[1]
    \STATE $\calH_0 = \emptyset$.
    \FOR{$t=1,2,\ldots,T$}
    \STATE Agent adopts an algorithm to select $I_t$ based on $\calH_{t-1}$.
    \STATE Environment reveals the reward $X_{I_t,t}$ to the agent.
    \STATE $\calH_t = \concat(\calH_{t-1},(I_t,X_{I_t,t}))$.
    \ENDFOR
    \end{algorithmic}
\end{algorithm}

The goal is to minimize the cumulative regret
\begin{align}\label{equ:mab_regret}
    \max_{i\in[K]} \bbE\bigg[\sum_{t=1}^T X_{i,t}\bigg] - \bbE\bigg[\sum_{t=1}^T X_{I_t,t}\bigg],
\end{align}
where the expectation is taken w.r.t. the randomness in the rewards (for the stochastic setup) and the possible internal randomness in the arm selection algorithm.

\subsection{Full Description of \specbandit{} with \specucb}
We provide the full description of $\specbandit(\alg)$ with $\alg=\specucb$ in Algorithm~\ref{alg:SpecDecUCB_full}.
\begin{algorithm}[H]
    \caption{$\specbandit(\specucb)$ (Full version of $\specucb$)}\label{alg:SpecDecUCB_full}    
    \textbf{Inputs:} initial prompt $\prompt_0 = \prompt\in\calX^*$, bandit configuration $\nu =(P,\calS=\{S_i\}_{i\in[K]},L)$.\\
    \textbf{Procedures:}
    \begin{algorithmic}[1]
    \STATE $t=0, \calH_0=\emptyset, I_0=1, x_{I_0,0} = \emptyset$.
    \WHILE{$\EOS\notin x_{I_t,t}$} 
    \STATE $t = t + 1$
    \IF{$t\leq K$}
    \STATE $I_t = t$. (Round-Robin)
    \ELSE
    \STATE Select index 
    $I_t = \argmax_{i\in[K]}UCB_{i,{t-1}}$.
    \ENDIF
    \STATE Observe 
        $
        X_{I_t,t} =x_{I_t,t}= \SDR(\prompt_{t-1},P,S_{I_t},L) 
        \quad\mbox{and}\quad
        Y_{I_t,t} =y_{I_t,t}=\len(X_{I_t,t}).
        $
    \STATE $\prompt_{t} = \concat(\prompt_{t-1},X_{I_t,t})$, $\calH_{t} = \concat(\calH_{t-1},(I_t,X_{I_t,t})). $
    \STATE Update the statistics $\{\hmu_{i,t}\}_{i\in[K]},\{\rmcr_{i,t}\}_{i\in[K]}$, where
    \begin{align}
        n_{i,{t}} &= \sum_{s=1}^{t}\mathbbm{1}\{I_s=i\},\;
        \hmu_{i,{t}} = \frac{\sum_{s=1}^{t} Y_{i,s}\mathbbm{1}\{I_s=i\}}{n_{i,t}},
        \\
        \rmcr_{i,t} &= \frac{L}{2}\sqrt{\frac{1+n_{i,t}}{n_{i,{t}}^2}\bigg(1+2\log\frac{Kt^2(1+n_{i,t})^{\frac{1}{2}}}{\delta}\bigg)}, \\
        \mathrm{UCB}_{i,{t}} &= \hmu_{i,{t}} + \rmcr_{i,{t}}.
    \end{align}
    \ENDWHILE
    \STATE \return{}  $t , \prompt_{t}$
    \end{algorithmic}
\end{algorithm}

\subsection{Full Description of \specbandit{} with \specexp}
We provide the full description of $\specbandit(\alg)$ with $\alg=\specexp$ in Algorithm~\ref{alg:SpecDecEXP3_full}.
\begin{algorithm}[H]
    \caption{$\specbandit(\specexp)$ (Full version of \specexp)}\label{alg:SpecDecEXP3_full}    
    \textbf{Inputs:} initial prompt $\prompt_0 = \prompt\in\calX^*$, bandit configuration $\nu =(P,\calS=\{S_i\}_{i\in[K]},L)$, learning rates $\eta_t=\sqrt{\frac{\log K}{t\cdot K}},t\in\bbN$.\\
    \textbf{Procedures:}
    \begin{algorithmic}[1]
    \STATE $t=0, \calH_0=\emptyset, I_0=1, x_{I_0,0} = \emptyset$.
    \WHILE{$\EOS\notin x_{I_t,t}$} 
    \STATE $t = t + 1$
    \STATE Set probability vector $p_t\in\simplexK$ with
    \begin{align}
        p_{t,i} = \frac{
        \exp\Big(
        -\eta_t \sum_{s=1}^{t-1} \whatZ_{i,s}
        \Big)
        }{\sum_{j=1}^K\exp\Big(
        -\eta_t \sum_{s=1}^{t-1} \whatZ_{j,s}
        \Big)}\quad,\forall i \in[K].
    \end{align}
    \STATE Select a hyperparameter index $I_t\sim p_t$.
    \STATE Observe $X_{I_t,t} =x_{I_t,t}= \SDR(\prompt_{t-1},P,S_{I_t},L)$ and $y_{I_t,t} = \len(X_{I_t,t})$.
    \STATE $\prompt_{t} = \concat(\prompt_{t-1},X_{I_t,t}),\;\calH_{t} = \concat(\calH_{t-1},(I_t,X_{I_t,t}))$. 
    \STATE Set the statistics
    \begin{align}
    \whatZ_{i,t}=\mathbbm{1}\{i=I_t\}\cdot \frac{L+1-y_{i,t}}{L\cdot p_{t,i}},\quad\forall i \in[K].
    \end{align}
    \ENDWHILE
    \STATE \return{} $t , \prompt_{t}$
    \end{algorithmic}
\end{algorithm}
\section{Proofs of Main Results}\label{app:proof_main}
\subsection{Proof of Proposition~\ref{prop:basic_property}}\label{app:basic_property}
To prove Proposition~\ref{prop:basic_property}, we note that we only need to prove
\begin{align*}
    \prompt_{\ST(\alg)}\overset{d}{=}\prompt_{\tau_{\rmc}}, \text{ and }\frac{\len(\prompt_{\ST(\alg)})}{L+1}\leq \ST(\alg)\leq \len(\prompt_{\ST(\alg)}), \quad a.s.
\end{align*}
The other results are implied by these two results. For equality, we note that this is already proved by Theorem 1 in \citet{yin2024theoretical}, where the equality holds for any specification of the hyperparameters. For the inequality, we note that each implementation of $\SDR$ generates at least one token and at most $L+1$ tokens. Thus, the inequality holds almost surely.

\subsection{Proof of Theorem~\ref{thm:stoc_up}}\label{app:stoc_up}
\StocUp*
\begin{proof}[Proof of Theorem~\ref{thm:stoc_up}]
    Our proof of Theorem~\ref{thm:stoc_up} consists of three steps.
    \begin{itemize}
        \item Reward and Stopping time decomposition.
        \item Construction of the high probability event.
        \item Concluding the proof.
    \end{itemize}
    As we mentioned in Section~\ref{sec:stoc}, the main difference lies at the two aspects: 
    \newline
    \textbf{Firstly}, the stopping time is now a random variable, which depends on the generated tokens. This causes trouble when we decompose the reward/regret, as both the rewards andthe  time horizon depend on the history. We tackle this problem in Step 1 by making use of the martingale structure of the rewards sequence.
    \newline
    \textbf{Secondly}, we consider the problem under Assumption~\ref{ass:mean}, where the number of accepted tokens can be dependent. This is practical as \ac{llm} generates tokens in an autoregressive manner.
    In contrast, under the commonly seen assumption for the $K$-armed \ac{mab}, the rewards are \iid and can be regarded as if they had been sampled before the algorithm starts (see Chapter 4 in \citet{lattimore2020bandit}). Thus, Chernoff-Hoeffding bound (Lemma~\ref{lem:ChernoffHoeffding}) can be directly applied, which cannot be used under Assumption~\ref{ass:mean}. 
    We solve this problem in Step 2, by adopting the so-called self-normalized confidence bounds~\citep{Abbasi2011improved}.

    \noindent
    \textbf{{Step 1: Reward and Stopping time decomposition.}}
    
    By the property of speculative decoding in \eqref{eqn:SpecProperty} and \eqref{equ:decompReward}, for any algorithm $\alg$,
    \footnote{We clarify that only the tokens up to the $\EOS$ token will be appended to the prefix token sequence in practice. Therefore, the actual number of accepted tokens is between $[1,Y_{I_{\ST(\alg)},\ST(\alg)}]$. While this mismatch can be solved, using $Y_{I_{\ST(\alg)},\ST(\alg)}$ as the number of accepted tokens in the last round will introduce an error of at most $L$ to the token sequence length $\len(\prompt_{\ST(\alg)})$, which is an error of at most $1$ to the stopping time $\ST(\alg)$. This error is negligible compared to the other values in the stopping time regret. Therefore, we assume the $\EOS$ token only appears at the end of accepted tokens for the sake of simplicity. }
\begin{align}
    \bbE\big[\len(\prompt_{\tau_{\rmc}})\big] 
    =
    \bbE[\len(\prompt_{\ST(\alg)})] 
    =
    \bbE\bigg[\sum_{t=1}^{\ST(\alg)}Y_{I_t,t}\bigg] 
\end{align}

    We wish to decompose the expected total token sequence in terms of each hyperparameter $i\in[K]$ in the first step, i.e.,
    \begin{align}
    \bbE\bigg[\sum_{t=1}^{\ST(\alg)}Y_{I_t,t}\bigg]
    =
    \sum_{i=1}^K \mu_{i}\cdot \bbE\big[ n_{i,\ST(\alg)}\big].
    \end{align}
    The standard regret analysis adopts Wald's equation to decompose the expected cumulative regret, or equivalently, the stopping time. However, as both the stopping time $\ST(\alg)$ and $Y_{I_t,t}$ depend on the history under our problem setup, Wald's equation fails.
    We propose a new and general approach to decomposing the reward.

    \noindent
    $\bullet$ Step 1.1:
     We first prove that $M_n:=\sum_{t=1}^{n}Y_{I_t,t}-\mu_{I_t},n=0,1,2,\ldots$ is a martingale with respect to $\{\calF_n\}_{n=0}^\infty$, where $M_0:=0,\calF_n:= \sigma(\calH_n)$. 
     
     By the definition of martingale, we only need to show (1) $\bbE[|M_n|]<\infty$, and (2) $\bbE[M_{n+1}|\calF_n] = M_n$.

    \noindent
    (1) \underline{$\bbE[|M_n|]<\infty$}: 
    As the number of the accepted tokens at each round is bounded as $Y_{I_t,t}\in[1,L+1]$  almost surely and $\mu_{I_t}\in[1,L+1]$, we have $|Y_{I_t,t}-\mu_{I_t}|\leq L $. Then the triangular inequality $|M_n|\leq \sum_{t=1}^n |Y_{I_t,t}-\mu_{I_t}|\leq L\cdot n<\infty$ indicates that
    \begin{align}\label{equ:mg_FiniteMean}
        \bbE[|M_n|]<\infty.
    \end{align}

    \noindent
    (2) \underline{$\bbE[M_{n+1}|\calF_n] = M_n$}: 
    The conditional expectation of $M_{n+1}$ can be calculated via tower property as 
    \begin{align}\label{equ:mg_Expectation}
        \bbE[M_{n+1}\mid\calF_n]
        =
        M_{n}+
        \bbE\left[ \bbE\left[ Y_{I_{n+1},n+1}-\mu_{I_{n+1}}\mid\calH_n,I_{n+1}\right]\mid\calH_n\right]
        =
        M_{n},
    \end{align}
    where the last equality results from Assumption~\ref{ass:mean}. 
    
    Based on \eqref{equ:mg_FiniteMean} and \eqref{equ:mg_Expectation}, $M_n,n=0,1,2,\ldots$ is a martingale with respect to  $\{\calF_n\}_{n=0}^\infty$.

\noindent
$\bullet$ Step 1.2: We then prove $\bbE[M_{\ST(\alg)}]=0$ via Doob’s optional stopping lemma (Lemma~\ref{lem:DoobOptStopping}).

We have already showed that $\sum_{t=1}^{n}Y_{I_t,t}-\mu_{I_t},n=1,2,\ldots$ is a martingale with respect to $\{\calH_n\}_{n=0}^\infty$. 
In order to apply Lemma~\ref{lem:DoobOptStopping}, we firstly verify the prerequisites listed in  Lemma~\ref{lem:DoobOptStopping} condition (b): (1) $\bbE[\ST(\alg)]<\infty$, and (2) there exists $c\in\bbR$, such that $\bbE[|M_{t}-M_{t-1}|\mid\calF_{t-1}]\leq c$ almost surely for $t\leq \ST(\alg)$.

\noindent
(1) \underline{$\bbE[\ST(\alg)]<\infty$}:
According to the property of speculative decoding~\eqref{equ:property_1} and Assumption~\ref{ass:finiteLen}, we have that $\bbE[\ST(\alg)]\leq\bbE[\len(\prompt_{\tau_{\rmc}})] < \infty$. 

\noindent
(2) \underline{$\bbE[|M_{t}-M_{t-1}|\mid\calF_{t-1}]\leq c$}:
As $M_{t}-M_{t-1} = Y_{I_t,t}-\mu_{I_t} $ and $|Y_{I_t,t}-\mu_{I_t}|\leq L $ almost surely, it holds that $\bbE[|M_{t}-M_{t-1}|\mid\calF_{t-1}]\leq L$. Taking $c=L$ finishes the verification.

Therefore, condition (b) in Lemma~\ref{lem:DoobOptStopping} is satisfied and we obtain
\begin{align}\label{equ:result_OptStop}
    \bbE\bigg[\sum_{t=1}^{\ST(\alg)}Y_{I_t,t}-\mu_{I_t}\bigg] = 0.
\end{align}

\noindent
\underline{Step 1.3:} We show that $\bbE\Big[\sum_{t=1}^{\ST(\alg)}Y_{I_t,t }\Big] = \sum_{i=1}^K \mu_{i}\cdot \bbE\big[ n_{i,\ST(\alg)}\big]$.

We firstly note that 
\begin{align}
    \bbE\bigg[\bigg|\sum_{t=1}^{\ST(\alg)}Y_{I_t,t}\bigg|\bigg]\leq (L+1)\bbE[\ST(\alg)]<\infty
    \quad\mbox{ and }\quad
    \bbE\bigg[\bigg|\sum_{t=1}^{\ST(\alg)}\mu_{I_t}\bigg|\bigg]\leq (L+1)\bbE[\ST(\alg)]<\infty,
\end{align}
so the expectations of $\sum_{t=1}^{\ST(\alg)}Y_{I_t,t}$ and $\sum_{t=1}^{\ST(\alg)}\mu_{I_t}$ exist and are finite (integrable). 

Furthermore, we have
\begin{align}
    \bbE\bigg[\sum_{t=1}^{\ST(\alg)}\mu_{I_t}\bigg]
    =
    \bbE\bigg[\sum_{i=1}^K \sum_{t=1}^{\ST(\alg)}\mathbbm{1}\{I_t=i\}\cdot\mu_{i}\bigg]
    =
    \sum_{i=1}^K \mu_{i}\cdot \bbE\big[ n_{i,\ST(\alg)}\big],\label{equ:decompMean}
\end{align}
where $n_{i,\ST(\alg)} =\sum_{t=1}^{\ST(\alg)}\mathbbm{1}\{I_t=i\}$ by definition.
Because $\sum_{t=1}^{\ST(\alg)}Y_{I_t,t}$ and $\sum_{t=1}^{\ST(\alg)}\mu_{I_t}$ are integrable,  \eqref{equ:result_OptStop} and \eqref{equ:decompMean} imply
\begin{align}\label{equ:decompReward}
    \bbE\bigg[\sum_{t=1}^{\ST(\alg)}Y_{I_t,t}\bigg]
    =
    \bbE\bigg[\sum_{t=1}^{\ST(\alg)}Y_{I_t,t}-\mu_{I_t}\bigg]  + \bbE\bigg[\sum_{t=1}^{\ST(\alg)}\mu_{I_t}\bigg]
    =
    \sum_{i=1}^K \mu_{i}\cdot \bbE\big[ n_{i,\ST(\alg)}\big].
\end{align}
This equality decomposes the cumulative reward (and stopping time) in terms of the arms.
    
\noindent
\textbf{Step 2: Construction of the high probability event.}

We then derive the concentration property for the number of accepted tokens. 
Define the good events:
\begin{align}
    &
    \calE_{t}:=
    \big\{
        \hmu_{i,t}\in [\mu_i-\rmcr_{i,t},\mu_i+\rmcr_{i,t}], \forall i\in[K] \mbox{ at round } t
    \big\}.
\end{align}
Since random variables supported on $[a,b]$ is $(b-a)^2/4$-sub-Gaussian and $1-\mu_{I_t}\leq Y_{I_t,t}-\mu_{I_t}\leq L+1-\mu_{I_t}$ under our problem setup. According to Lemma~\ref{lem:ConfInt}, we obtain
\begin{align}\label{equ:GoodEvent}
    \bbP\big(
            \calE_t
        \big)
        \geq
        1 - \frac{\delta}{t^2}
        \quad\mbox{ and }\quad
        \sum_{t=K+1}^\infty \bbP\big(
            \calE_t^c
        \big)
        \leq
        \frac{\pi^2\delta}{6},
\end{align}
where $\delta$ is a confidence parameter that will be specified later.
We remark that Lemma~\ref{lem:ConfInt} from \citet{Abbasi2011improved} adopts a self-normalized concentration bound for the martingale sequence, which generalizes the standard \iid reward assumption in the $K$-armed \ac{mab} problem.

As a result, we can now bound the number of times arm $i$ is pulled at any round $t\geq K$.
Conditional on the good event $\calE_t$, we have $\hmu_{i,t}\in[\mu_i-\rmcr_{i,t},\mu_i+\rmcr_{i,t}]$ and arm $i$ will not be pulled if $\rmcr_{i,t}<\Delta_i/2$. By adopting Lemma~\ref{lem:antos}, when arm $i$ is selected at time $t+1$, it must hold that
\begin{align}\label{equ:nitUpBd}
    n_{i,t}\leq
    4 + 
    \frac{2L^2}{\Delta_i^2}\Big(1+2\log\frac{LK\cdot t^2}{\Delta_i\delta}\Big).
\end{align}

\noindent
\textbf{Step 3: Concluding the proof.}

According to Step 1,
\begin{align}
    \bbE\big[\len(\prompt_{\tau_{\rmc}})\big] 
    =
    \bbE[\len(\prompt_{\ST(\alg)})] 
    =
    \bbE\bigg[\sum_{t=1}^{\ST(\alg)}Y_{I_t,t}\bigg] 
    =
    \sum_{i=1}^K \mu_{i}\cdot \bbE\big[ n_{i,\ST(\alg)}\big].
\end{align}
Therefore, 
\begin{align}
    \Reg(\alg)
    &=
    \frac{1}{\mu_{i^*}}\bigg(
        \sum_{i\in[K]} \mu_i\cdot\bbE\left[ n_{i,\ST(\alg)}\right]
        +
        \sum_{i\in[K]} \Delta_i\cdot\bbE\left[n_{i,\ST(\alg)}\right] - \mu_{i^*}\cdot\bbE\left[\ST(\alg_{i^*})\right]
    \bigg)
    \\
    &
    =
    \sum_{i\neq i^*}
    \frac{\Delta_i}{\mu_{i^*}}\cdot \bbE[n_{i,\ST(\alg)}].
    \label{equ:regretDecomp}
\end{align}
Under the \specucb{} algorithm, we have
\begin{align}
    &\sum_{i\neq i^*}
    \frac{\Delta_i}{\mu_{i^*}}\cdot \bbE\left[n_{i,\ST(\alg)}\right]
    \\
    &\quad= 
    \sum_{i\neq i^*}
    \frac{\Delta_i}{\mu_{i^*}}\cdot
    \bbE\bigg[
        \sum_{t=K+1}^{\ST(\alg)}\mathbbm{1}\bigg\{
        I_t=i,n_{i,t-1}\leq 4 + 
        \frac{2L^2}{\Delta_i^2}\Big(1+2\log\frac{LKt^2}{\Delta_i\delta}\Big)
        \bigg\}
    \bigg]
    \\&
    \qquad\quad
    +
    \sum_{i\neq i^*}
    \frac{\Delta_i}{\mu_{i^*}}\cdot
    \bbE\bigg[
        \sum_{t=K+1}^{\ST(\alg)}\mathbbm{1}\bigg\{
        I_t=i,n_{i,t-1}> 4 + 
        \frac{2L^2}{\Delta_i^2}\Big(1+2\log\frac{LK(t-1)^2}{\Delta_i\delta}\Big)
        \bigg\}
    \bigg]+K
    \\
    &\quad\leq 
    \sum_{i\neq i^*}
    \frac{\Delta_i}{\mu_{i^*}}\cdot
    \bbE\bigg[4 + 
        \frac{2L^2}{\Delta_i^2}\Big(1+2\log\frac{LK\cdot(\ST(\alg))^2}{\Delta_i\delta}\Big)
    \bigg]
    +
    \bbE\bigg[
        \sum_{t=K+1}^{\ST(\alg)}\mathbbm{1}\big\{
        \calE_t^c
        \big\}
    \bigg]+K
    \\
    &\quad\leq
    \sum_{i\neq i^*}
    \frac{\Delta_i}{\mu_{i^*}}\cdot
    \bigg(4 + 
        \frac{2L^2}{\Delta_i^2}\Big(1+2\log\frac{LK\cdot(\bbE[\ST(\alg)])^2}{\Delta_i\delta}\Big)
    \bigg)
    +
    \sum_{t=K+1}^\infty \bbP\big(
            \calE_t^c
        \big)+K,
    \\*
    &\quad\leq
    \sum_{i\neq i^*}
    \frac{\Delta_i}{\mu_{i^*}}\cdot
    \bigg(4 + 
        \frac{2L^2}{\Delta_i^2}\Big(1+2\log\frac{LK\cdot(\bbE[\len(\prompt_{\tau_{\rmc}}])^2}{\Delta_i\delta}\Big)
    \bigg)
    +
    \frac{\pi^2 \delta}{6}+K,
\end{align}
where the first inequality results from \eqref{equ:nitUpBd}, the second inequality utilizes Jensen's inequality, and the last inequality adopts the property of speculative decoding~\eqref{equ:property_1} and the upper bound on the error probability of good event~\eqref{equ:GoodEvent} in Step 2.
Taking $\delta=1/2$ in the above bound concludes the proof of this theorem.
\end{proof}

\subsection{Proof of Theorem~\ref{thm:adv_UpBd}}\label{app:adv_UpBd}
Fix any $b\in[K]$, the baseline algorithm is set to be $\alg_b$, i.e., Algorithm~\ref{algo:SDbandit} implements Line~\ref{lne:arm_select} of Algorithm~\ref{algo:SDbandit} with hyperparameter $S_b$ only. Let $\alg=\specexp$. we assume the $\specbandit(\alg)$ repeats the while loop in Algorithm~\ref{algo:SDbandit} until $\max\{\ST(\alg),\ST(\alg_b)\}$. To avoid any confusion, we restate the algorithm for the purpose of analysis in Algorithm~\ref{alg:SpecDecEXP3_analysis}. 
Algorithm~\ref{alg:SpecDecEXP3_analysis} takes $\alg$ and $\alg_b$ as an input and stops until $\prompt_{\ST(\alg)} $ and $\prompt_{\ST(\alg_b)}$ are generated. The two token sequences up to $\EOS$ token are output at the end of the algorithm. 
\begin{algorithm}[H]
    \caption{$\specbandit(\specexp)$ (For analysis purpose)}\label{alg:SpecDecEXP3_analysis}    
    \textbf{Inputs:} initial prompt $\prompt_0 = \prompt\in\calX^*$, speculative decoding configuration $\nu =(P,\calS=\{S_i\}_{i\in[K]},L)$, stopping time $\tau=\infty$,
    baseline hyperparameter $S_b$, initial prompt $\prompt_0^b = \prompt$, stopping time $\tau^b=\infty$.
    \\
    \textbf{Procedures:}
    \begin{algorithmic}[1]
    \STATE $t=0, \calH_0=\emptyset, I_0 = 1, x_{I_0,0} = \emptyset,x_{b,0}^b = \emptyset$.
    \WHILE{$\tau = \infty$ or $\tau^b=\infty$} 
    \STATE $t = t + 1$
    \STATE  \blue{$\slash\slash$ Procedures of the original \specexp}
    \IF{ $\tau=\infty$ and $\EOS\in x_{I_{t-1},t-1} $}
        \STATE $\tau=t-1$ and $\prompt_{\tau} = \prompt_{t-1}.$  
    \ENDIF
    \STATE Set probability vector $p_t\in\simplexK$ with
    \begin{align}\label{equ:SpecDecEXP3_analysis_Prob}
        p_{t,i} = 
            \frac{
            \exp\Big(
            -\eta_t \sum_{s=1}^{t-1} \whatZ_{i,s}
            \Big)
            }{\sum_{j=1}^K\exp\Big(
            -\eta_t \sum_{s=1}^{t-1} \whatZ_{j,s}
            \Big)}
        \; \mbox{with learning rate}\; \eta_t = \sqrt{\frac{\log K}{t\cdot K}}, \; \forall i\in[K].
    \end{align}
    \STATE Select a hyperparameter index $I_t\sim p_t$.
    \STATE Observe $X_{I_t,t} =x_{I_t,t}= \SDR(\prompt_{t-1},P,S_{I_t},L)$ and $y_{I_t,t} = \len(X_{I_t,t})$.
    \STATE $\prompt_{t} = \concat(\prompt_{t-1},X_{I_t,t}),\;\calH_{t} = \concat(\calH_{t-1},(I_t,X_{I_t,t}))$. 
    \STATE Set the statistics
    \begin{align}\label{equ:SpecDecEXP3_analysis_whatz}
    \whatZ_{i,t}=\mathbbm{1}\{i=I_t\}\cdot \frac{L+1-y_{i,t}}{L\cdot p_{t,i}},\quad\forall i \in[K].
    \end{align}
    \STATE  \blue{$\slash\slash$ Procedures of the baseline $\alg_b$}
    \IF{ $\tau^b=\infty$ and $\EOS\notin x_{I_{t-1},t-1}^b $}
        \STATE $\tau^b=t-1$ and $\prompt_{\tau^b}^b = \prompt_{t-1}^b.$  
    \ENDIF
    \STATE Observe $X_{b,t}^b =x_{I_t,t}^b= \SDR(\prompt_{t-1}^b,P,S_{b},L)$ and $y_{b,t} = \len(X_{b,t}^b)$.
    \STATE $\prompt_{t}^b = \concat(\prompt_{t-1}^b,X_{b,t}^b)$.
    \ENDWHILE
    \STATE \return{}  $\ST(\alg) = \tau , \prompt_{\ST(\alg)}= \prompt_{\tau}$ and $\ST(\alg_b) = \tau^b , \prompt_{\ST(\alg^b)}= \prompt_{\tau^b}^b$.
    \end{algorithmic}
\end{algorithm}

\AdvUpBd*
\begin{proof}[Proof of Theorem~\ref{thm:adv_UpBd}]
    For ease of presentation, we present Algorithm~\ref{alg:SpecDecEXP3_analysis}, where the while loop in \specbandit{} is repeated until $\max\{\ST(\alg),\ST(\alg_i)\}$.
    
    Our analysis of Algorithm~\ref{alg:SpecDecEXP3_analysis} is novel compared to the standard analysis of EXP3 algorithm~\citep{auer2002nonstochastic,lattimore2020bandit}. It requires more technical manipulations due to the fact that the termination times of the baseline algorithm $\alg_i$ and $\alg$ are \emph{different} and \emph{random}, and that our goal is to minimize the stopping time regret \eqref{equ:adv_regret}.
    
    For theoretical analysis, we regard Algorithm~\ref{alg:SpecDecEXP3_analysis} as an instantiation of the \ac{ftrl} algorithm~\citep{gordon1999regret,lattimore2020bandit}.

    The proof is decomposed into $5$ steps:
    \begin{itemize}
        \item Connection between \ac{ftrl} and Algorithm~\ref{alg:SpecDecEXP3_analysis}: we firstly introduce \ac{ftrl} and the problem where it is applicapable. The shared features and differences are highlighted.
        \item Transformation of the stopping time regret: the stopping time regret is related to the regret under \ac{ftrl} framework. In this case, the minimized regret by \ac{ftrl} can be translated to the stopping time regret.
        \item Regret decomposition: the \ac{ftrl} regret is decomposed for easier processing.
        \item Upper bound each term in the decomposed regret: we upper bound each term in the decomposed regret. 
        The main difficulty is to deal with the difference in the time scales $\ST(\alg)$ and $\ST(\alg_i)$ and the randomness in $\ST(\alg)$. Specifically, because both the loss vectors $\{\whatZ_t\}_{t\in\bbN}$ and the stopping time $\ST(\alg)$ are random, taking expectation of the  cumulative loss within $[\ST(\alg_i),\ST(\alg)]$ is non-trivial. 
        We devise Lemma~\ref{lem:adv_sum_LR} and Lemma~\ref{lem:AdvOptStp} to deal with this problem.
        \item Conclusion of the stopping time regret: we aggregate the results in the previous steps and derive the final bound for the stopping time regret.
    \end{itemize}

\noindent
\textbf{Step 1: Connection between \ac{ftrl} and Algorithm~\ref{alg:SpecDecEXP3}.}

We denote $\simplexK$ as the $K$-dimensional probability simplex.

\ac{ftrl} is often used in the \ac{olo}.
We firstly provide a brief introduction to \ac{olo} that operates on $\bDelta_{[K]}$. 
Let $\ell_1,\ell_2,\ldots\in\bbR^K$ be a sequence of unknown loss vectors. The dynamics of \ac{olo} problem is stated in Algorithm~\ref{alg:OnlineLearning}.
\begin{algorithm}[t]
    \caption{Dynamics of the \ac{olo} Problem with Full Information Feedback}\label{alg:OnlineLearning}  
    \begin{algorithmic}[1]
    \STATE $\calH_0 = \emptyset$.
    \FOR{$t=1,2,\ldots,T$}
    \STATE Selects $p_t\in\simplexK$ based on $\calH_{t-1}$.
    \STATE Observes the loss vector $\ell_t$ and suffers loss $p_t^\top \ell_t$.
    \STATE $\calH_t = \concat(\calH_{t-1},(p_t,\ell_t))$.
    \ENDFOR
    \end{algorithmic}
\end{algorithm}
Given a time horizon $T\in\bbN$, the agent (or algorithm) aims to minimize the (loss-based) regret
\begin{align}\label{equ:ftrl_problem}
    \max_{p\in\simplexK} \sum_{t=1}^T (p_t-p)^\top \ell_t,
\end{align}
where $p_t$ is the action taken by the agent at time step $t$, $p$ is some fixed baseline action in $\simplexK$ and the maximum operator indicates the agent is competing with the best fixed baseline.
The \ac{ftrl} algorithm minimizes \eqref{equ:ftrl_problem} by taking the action $p_t = \argmin_{p\in\simplexK} \Phi_t(p)$ at time step $t$, where $\Phi_t:\bDelta_{[K]}\to\bbR$ is defined as
\begin{align}
    \Phi_t(x) := F_t(x) + \sum_{s=1}^{t-1} x^\top \ell_s
\end{align}
and $F_t:\bDelta_{[K]}\to\bbR,\forall t\in\bbN,$ are some convex functions. 

In the following, we illustrate the connection between \ac{ftrl} and Algorithm~\ref{alg:SpecDecEXP3} (or Algorithm~\ref{alg:SpecDecEXP3_analysis}).
We let $\ell_t = \whatZ_t:= \big(\whatZ_{1,t}\ldots,\whatZ_{K,t}\big)^\top$, $F_t(x)=F(x)/\eta_t$ with $F(x):\simplexK\to\bbR$ and $F(x):=\sum_{i=1}^K (x_i\log x_i-x_i)+\log K+1$.
Furthermore, some calculation indicates the action taken by \ac{ftrl} is exactly $p_t$ as in~\eqref{equ:SpecDecEXP3_analysis_Prob}, i.e.,
\begin{align}
    p_t = \argmin_{p\in\simplexK} \Phi_t(p).
\end{align}
Therefore, Algorithm~\ref{alg:SpecDecEXP3} is indeed an instantiation of \ac{ftrl} in terms of the algorithm design.

\textbf{Under our problem setup}, the difference lies in the target of the algorithm. Instead of minimizing the corresponding regret 
\begin{align}
    \max_{p\in\simplexK} \bbE\bigg[
        \sum_{t=1}^T (p_t-p)^\top \whatZ_t
    \bigg],
\end{align}
we aim at minimizing the (loss-based) regret defined on two different time scales
\begin{align}\label{equ:adv_regret_2}
     \wtReg(\alg):=
    \bbE\bigg[
        \sum_{t=1}^{\ST(\alg)} p_t^\top \whatZ_t
    \bigg]
    -
    \min_{i\in[K]}
        \bbE\bigg[\sum_{t=1}^{\ST(\alg_i)} e_i^\top \whatZ_t\bigg],
\end{align}
where $e_i\in\bbR^K$ is an one-hot vector with the $i^{th}$ coordinate being $1$, and
the expectation is taken w.r.t. the internal randomness within $\alg$ and $\whatZ_t$. We highlight again that $\ST(\alg)$ is random, whereas $\ST(\alg_i)$ is fixed under the greedy decoding strategy.

\noindent
\textbf{Step 2: Transformation of the stopping time regret.}

The stopping time regret \eqref{equ:adv_regret} and $\wtReg(\alg)$ in \eqref{equ:adv_regret_2} may look different at first sight. We demonstrate that these two notions of regret can be transformed into one another up to some constant factors.

We firstly simplify $\wtReg(\alg)$. 
Let $z_{i,t} = 1-(y_{i,t}-1)/L$.
According to the definition of $p_t$ in~\eqref{equ:SpecDecEXP3_analysis_Prob} and $\whatZ_t$ in~\eqref{equ:SpecDecEXP3_analysis_whatz},
\begin{align}
    &\sum_{t=1}^{\ST(\alg)} p_t^\top \whatZ_t
    =
    \sum_{t=1}^{\ST(\alg)} \frac{L+1-y_{i,t}}{L}
    =
    \frac{L+1}{L}\cdot\ST(\alg) - \frac{\len(\prompt_{\ST(\alg)})}{L}.
\end{align}
Furthermore, note that $\whatZ_t$ is an unbiased estimator for $z_t$ and $\ST(\alg_i)$ is a fixed real number,
\begin{align}\label{equ:adv_base}
    \bbE\Big[\sum_{t=1}^{\ST(\alg_i)} e_i^\top \whatZ_t\Big]
    =
    \sum_{t=1}^{\ST(\alg_i)} z_{i,t}
    =
    \frac{L+1}{L}\cdot\ST(\alg_i) - \frac{\len(\prompt_{\ST(\alg_i)})}{L}.
\end{align}
Hence, $\wtReg(\alg)$ is simplified as
\begin{align}
    \wtReg(\alg)
    &=
    \bbE\bigg[
        \frac{L+1}{L}\cdot\ST(\alg) - \frac{\len(\prompt_{\ST(\alg)})}{L}
    \bigg]
    -
    \min_{i\in[K]}
    \Big(
        \frac{L+1}{L}\cdot\ST(\alg_i) - \frac{\len(\prompt_{\ST(\alg_i)})}{L}
    \Big)
    \\
    &\quad= 
        \frac{L+1}{L}\cdot\big(\bbE[\ST(\alg)] - \min_{i\in[K]}\ST(\alg_i)\big)
\end{align}
where we adopt the property of speculative decoding~\eqref{eqn:SpecProperty} in the last equality.
This indicates that 
\begin{align}\label{equ:regret_ralationship}
    \wtReg(\alg) = \frac{L+1}{L}\cdot \Reg(\alg).
\end{align}

\noindent
\textbf{Step 3: Regret decomposition.}

Based on the previous two steps, we now upper bound $\wtReg(\alg)$. This notion of regret distinguishes itself from the standard regret analysis due to the difference in the time scales $\ST(\alg)$ and $\ST(\alg_i)$.

For simplicity, we define 
the Bregman divergence induced by convex function $f:\simplexK\to\bbR^+$ as $\rmD_f(\cdot,\cdot):\simplexK\times\simplexK\to\bbR$ with $\rmD_f(a,b):=f(a)-f(b)- (a-b)^\top\nabla f(b).$
Fix $i\in[K]$, we now decompose this empirical regret w.r.t. $i$. 
\begin{align}
    &\sum_{t=1}^{\ST(\alg)} p_t^\top \whatZ_t
    -
        \sum_{t=1}^{\ST(\alg_i)} e_i^\top \whatZ_t.
    \\
    &\quad=
        \sum_{t=1}^{\ST(\alg)} \Big((p_t-p_{t+1})^\top \whatZ_t\Big)
        +
        \sum_{t=1}^{\ST(\alg)} p_{t+1}^\top \whatZ_t
        -
        \sum_{t=1}^{\ST(\alg_i)} e_i^\top \whatZ_t
    \\
    &\quad
    =
        \sum_{t=1}^{\ST(\alg)} \Big((p_t-p_{t+1})^\top \whatZ_t\Big)
        +
        \sum_{t=1}^{\ST(\alg)} \Big(
                \Phi_{t+1}(p_{t+1})-F_{t+1}(p_{t+1})
                -
                \Phi_{t}(p_{t+1})+F_{t}(p_{t+1})
            \Big)
        \\
        &\qquad\quad
        -
        \Big(
            \sum_{t=1}^{\ST(\alg)} e_i^\top \whatZ_t
            -
            \sum_{t=1}^{\ST(\alg)} e_i^\top \whatZ_t 
            +
            \sum_{t=1}^{\ST(\alg_i)} e_i^\top \whatZ_t
            \Big)
    \\
    &\quad
    =
        \sum_{t=1}^{\ST(\alg)} \Big((p_t-p_{t+1})^\top \whatZ_t\Big)
        +
        \sum_{t=0}^{\ST(\alg)-1} \Big(
                \Phi_{t+1}(p_{t+1}) - \Phi_{t+1}(p_{t+2})
            \Big)
        \\
        &\qquad\quad
        +
        \sum_{t=1}^{\ST(\alg)} \Big(
                F_{t}(p_{t+1}) - F_{t+1}(p_{t+1})
            \Big)
        + F_{\ST(\alg)+1}(e_i) - F_1(p_1)
        \\
        &\qquad\quad
        + \Phi_{\ST(\alg)+1}(p_{\ST(\alg)+1}) -\Phi_{\ST(\alg)+1}(e_i)
         +
        \Big(
            \sum_{t=1}^{\ST(\alg)} e_i^\top \whatZ_t 
            -
            \sum_{t=1}^{\ST(\alg_i)} e_i^\top \whatZ_t
            \Big)
    \\
    &\quad
    \leq
        \underbrace{
            \sum_{t=1}^{\ST(\alg)}\bigg((p_t-p_{t+1})^\top \whatZ_t
            - \frac{\rmD_F(p_{t+1},p_{t})}{\eta_t}\bigg)
        }_{(\Box)}
        +
        \underbrace{
            F_{\ST(\alg)+1}(e_i) - F_1(p_1)
            \phantom{\Bigg|}
        }_{(\diamond)}
        \\
        &\qquad\quad
        +
        \underbrace{
        \sum_{t=1}^{\ST(\alg)} \Big(
                F_{t}(p_{t+1}) - F_{t+1}(p_{t+1})
            \Big)
        }_{(\dagger)}
        +
        \underbrace{
         \Phi_{\ST(\alg)+1}(p_{\ST(\alg)+1}) -\Phi_{\ST(\alg_i)+1}(e_i)
         \phantom{\Bigg|}
        }_{(\ddagger)}
        \\
        &\qquad\quad
        +
        \underbrace{
        \Big(
            \sum_{t=1}^{\ST(\alg)} e_i^\top \whatZ_t 
            -
            \sum_{t=1}^{\ST(\alg_i)} e_i^\top \whatZ_t
            \Big)
        }_{(\P)}
        \label{equ:adv_RegDecomp}
\end{align}
where the last inequality adopts the fact that $\rmD_{\Phi_t}(a,b)=\rmD_{F_t}(a,b)=\rmD_{F}(a,b)/\eta_t  $ and the inequality
\begin{align}
    \Phi_{t}(p_{t}) - \Phi_{t}(p_{t+1}) 
    =
    -\rmD_{\Phi_t} (p_{t+1},p_{t}) - (p_{t+1}-p_{t})^\top \nabla\Phi_t(p_t)
    \leq 
    -\rmD_{\Phi_t} (p_{t+1},p_{t}).
\end{align}
Here $(p_{t+1}-p_{t})^\top \nabla\Phi_t(p_t)\leq 0$ results from the choice of $p_t=\argmin_{p\in\simplexK} \Phi_t(p)$ and the first-order optimization condition.


\noindent
\textbf{Step 4: Upper bound each term in the decomposed regret.}

In this step, we upper bound each term in \eqref{equ:adv_RegDecomp}.
We comment that 
$(\Box)$ and $(\P)$ require us to attend to the randomness in $\ST(\alg)$ and the different time indeces. This issue will not be encountered in the conventional scenario where $\ST(\alg)=\ST(\alg_i)$.

\noindent
$\bullet$ Upper bound $(\Box)$: 
we will show that $\bbE[(\Box)]\leq \bbE[K\sum_{t=1}^{\ST(\alg)}\eta_t/2]$ almost surely.

Recall the definition of $\whatZ_t$ in \eqref{equ:whatZ}, $\whatZ_t$ only has a positive value at the $I_t$ coordinate. We divide the problem into two cases: (1) $p_{t,I_t}-p_{t+1,I_t}<0 $, and (2) $p_{t,I_t}-p_{t+1,I_t}\geq 0 $.

\noindent
\underline{Case (1)}: $p_{t,I_t}-p_{t+1,I_t}<0 $. Because the Bregman divergence is always non-negative, hence,
\begin{align}
    (\Box) \leq (p_{t,I_t}-p_{t+1,I_t})\cdot \whatZ_{I_t,t} - 0 \leq 0 \leq \frac{\eta_t}{2 p_{t,I_t}}.
\end{align}

\noindent
\underline{Case (2)}: $p_{t,I_t}-p_{t+1,I_t}\geq 0 $. Note that $F(x)$ is a Legendre function on $\simplexK$.
By invoking Lemma~\ref{lem:legendreProperty}, 
\begin{align}
    (p_t-p_{t+1})^\top \whatZ_t - \frac{\rmD_F(p_{t+1},p_{t})}{\eta_t}
    \leq 
    \frac{\eta_t}{2}\|\whatZ_t\|^2_{H_t^{-1}},
\end{align}
where $H_t = \nabla^2 F(q_t)$ and $q_t=\alpha\cdot p_t+(1-\alpha)\cdot p_{t+1}$ for some $\alpha\in[0,1]$. Furthermore, $\nabla^2 F(q_t)$ is a $K\times K$ diagonal matrix with $\big(\nabla^2 F(q_t)\big)_{i,i}=1/q_{t,i},\forall i\in[K]$. Therefore,
\begin{align}
    \frac{\eta_t}{2}\|\whatZ_t\|^2_{H_t^{-1}}
    =
    \frac{\eta_t}{2}\cdot \frac{z_{I_t,t}^2}{p_{t,I_t}^2}\cdot q_{t,I_t}
    \leq
    \frac{\eta_t}{2}\cdot \frac{1}{p_{t,I_t}^2}\cdot p_{t,I_t}
    =
    \frac{\eta_t}{2 p_{t,I_t}}.
\end{align}

To conclude the two cases, it holds almost surely that
\begin{align}
    (\Box)\leq \sum_{t=1}^{\ST(\alg)}\frac{\eta_t}{2 p_{t,I_t}}.
\end{align}
Lastly, by adopting Lemma~\ref{lem:adv_sum_LR} which is proved in Appendix~\ref{app:adv_UpBd_suppLem}, we obtain
\begin{align}
    \bbE[(\Box)]\leq \frac{K}{2} \cdot \bbE\bigg[\sum_{t=1}^{\ST(\alg)}\eta_t\bigg].
\end{align}

\begin{restatable}{lemma}{AdvSumLR}\label{lem:adv_sum_LR}
    Under Assumption~\ref{ass:finiteLen},
    consider the learning rates $\eta_t$ defined in Algorithm~\ref{alg:SpecDecEXP3} (or Algorithm~\ref{alg:SpecDecEXP3_analysis}), it holds that
    \begin{align}
        \bbE\bigg[\sum_{t=1}^{\ST(\alg)}\frac{\eta_t}{             p_{t,I_t}}\bigg]
        =
        K \cdot \bbE\bigg[\sum_{t=1}^{\ST(\alg)}\eta_t\bigg].
    \end{align}
\end{restatable}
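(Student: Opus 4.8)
The plan is to exploit the importance-sampling identity $\bbE[1/p_{t,I_t}\mid\calF_{t-1}]=K$ that underlies all EXP3-type analyses, but to carry it out carefully in the presence of the \emph{random} stopping time $\ST(\alg)$. First I would rewrite the random-length sum as an infinite series of predictable indicators,
\[
    \sum_{t=1}^{\ST(\alg)}\frac{\eta_t}{p_{t,I_t}}
    =\sum_{t=1}^{\infty}\mathbbm{1}\{\ST(\alg)\ge t\}\cdot\frac{\eta_t}{p_{t,I_t}},
\]
and treat the right-hand side analogously. The point of this rewriting is that the stopping decision after round $t-1$ (whether $\EOS$ has already appeared) is encoded in $\calF_{t-1}=\sigma(\calH_{t-1})$, so the event $\{\ST(\alg)\ge t\}=\{\ST(\alg)\le t-1\}^c$ is $\calF_{t-1}$-measurable, i.e.\ \emph{predictable}. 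Since $\eta_t$ is deterministic and the exponential-weights vector $p_t$ is computed from $\calH_{t-1}$ (hence $\calF_{t-1}$-measurable, with all coordinates strictly positive), the only quantity whose randomness is ``new'' at round $t$ is the sampled arm $I_t\sim p_t$.

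The core step is then a conditional-expectation computation for each fixed $t$. Applying Tonelli's theorem to move $\bbE$ inside the nonnegative series, I would condition on $\calF_{t-1}$ and pull out the predictable factors:
\[
    \bbE\Big[\mathbbm{1}\{\ST(\alg)\ge t\}\tfrac{\eta_t}{p_{t,I_t}}\Big]
    =\bbE\Big[\mathbbm{1}\{\ST(\alg)\ge t\}\,\eta_t\,\bbE\big[\tfrac{1}{p_{t,I_t}}\,\big|\,\calF_{t-1}\big]\Big].
\]
Because $I_t\sim p_t$ with every coordinate positive, the inner conditional expectation equals $\sum_{i\in[K]}p_{t,i}\cdot p_{t,i}^{-1}=K$, which is the crux. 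This yields $K\eta_t\,\bbP(\ST(\alg)\ge t)$ for each $t$; summing over $t$ and applying Tonelli once more in reverse reassembles the right-hand side $K\,\bbE[\sum_{t=1}^{\ST(\alg)}\eta_t]$.

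The main obstacle---and the reason this lemma is not entirely routine---is precisely the interaction between the random horizon $\ST(\alg)$ and the randomness of $I_t$: one cannot naively apply $\bbE[1/p_{t,I_t}]=K$ under a random number of summands, since the number of terms is not independent of the summands. The predictability argument above resolves this, but it must be accompanied by a finiteness guarantee so that the series manipulations and conditional expectations are well-defined. Here I would invoke Assumption~\ref{ass:finiteLen} together with Proposition~\ref{prop:basic_property}: as $\eta_t=\sqrt{\log K/(tK)}$ is decreasing, $\sum_{t=1}^{\ST(\alg)}\eta_t\le\eta_1\,\ST(\alg)$, and $\bbE[\ST(\alg)]\le\bbE[\len(\prompt_{\ST(\alg)})]=\bbE[\len(\prompt_{\tau_{\rmc}})]<\infty$, so both sides are finite and all interchanges (legitimate by nonnegativity via Tonelli) go through.
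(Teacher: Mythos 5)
Your proof is correct, but it takes a genuinely different route from the paper's. The paper proves the identity by applying Doob's optional stopping theorem (condition (b) of Lemma~\ref{lem:DoobOptStopping}) to the martingale $M_n=\sum_{t=1}^{n}\big(\eta_t/p_{t,I_t}-K\eta_t\big)$: it verifies $\bbE[\ST(\alg)]<\infty$ from Assumption~\ref{ass:finiteLen}, bounds the conditional increments by $2K\eta_t\leq 2\sqrt{K\log K}$ to conclude $\bbE[M_{\ST(\alg)}]=0$, and then must separately establish that $\bbE\big[\sum_{t=1}^{\ST(\alg)}\eta_t\big]$ and $\bbE\big[\sum_{t=1}^{\ST(\alg)}\eta_t/p_{t,I_t}\big]$ both exist before it may legitimately add $K\,\bbE\big[\sum_{t=1}^{\ST(\alg)}\eta_t\big]$ to both sides. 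You instead unroll the stopped sum as $\sum_{t=1}^{\infty}\mathbbm{1}\{\ST(\alg)\geq t\}\,\eta_t/p_{t,I_t}$, observe that $\{\ST(\alg)\geq t\}$ and $p_t$ are $\calF_{t-1}$-measurable while $I_t\sim p_t$ is the only fresh randomness, and then use Tonelli plus the tower property for nonnegative random variables, with the same crux $\bbE[1/p_{t,I_t}\mid\calF_{t-1}]=K$. What your route buys: it is more elementary (no optional-stopping theorem, no bounded-increment verification), and the term-by-term identity holds in $[0,\infty]$ before any integrability is known, so the bookkeeping collapses to the single observation $\bbE[\ST(\alg)]<\infty$, which makes the right-hand side---hence both sides---finite. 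What the paper's route buys: uniformity, since the same optional-stopping template is reused for Lemma~\ref{lem:AdvOptStp}, Lemma~\ref{lem:divDecop}, and the reward decomposition in Theorem~\ref{thm:stoc_up}, so one piece of machinery serves several proofs. Both arguments ultimately rest on the same two facts, $\bbE[1/p_{t,I_t}\mid\calF_{t-1}]=K$ and $\bbE[\ST(\alg)]<\infty$, so the difference is one of packaging rather than substance.
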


\noindent
$\bullet$ Upper bound $(\diamond)$: Because $F(x)$ is non-negative on $\simplexK$ and $F(e_i)=\log K$, hence,
\begin{align}
    (\diamond) 
    \leq 
    \frac{F(e_i)}{\eta_{\ST(\alg+1)}}
    =
    \frac{\log K}{\eta_{\ST(\alg)+1}}
    =
    \frac{\log K}{\eta_{\ST(\alg)}}.
\end{align}
where we manually set $\eta_{\ST(\alg)+1}=\eta_{\ST(\alg)}$.

\noindent
$\bullet$ Upper bound $(\dagger)$: 
Recall that $F(x)=\sum_{i=1}^K (x_i\log x_i-x_i)+\log K +1$, so $F(x)$ is non-negative for any $x\in\simplexK$. Additionally, $\eta_t=\sqrt{\log K/(K\cdot t)},t\in[\ST(\alg)]$ is a decreasing sequence.
Therefore,
\begin{align}\label{equ:dagger}
    (\dagger)
    =
        \sum_{t=1}^{\ST(\alg)} \Big(
                \frac{F(p_{t+1})}{\eta_t}
                -
                \frac{F(p_{t+1})}{\eta_{t+1}}
                \Big)  
        \leq 0 ,\quad a.s.
\end{align}

\noindent
$\bullet$ Upper bound $(\ddagger)$:
Since $p_{\ST(\alg)+1}$ is the minimizer of $\Phi_{\ST(\alg)+1}(p)$, we have $(\ddagger)\leq 0$.

\noindent
$\bullet$ Upper bound $(\P)$:
Under Assumption~\ref{ass:stop}, $\ST(\alg)\geq \ST(\alg_1)$ almost surely. 
We further prove that $\bbE[(\P)] \leq \bbE[\ST(\alg)] - \ST(\alg_i) $ for $i=i^*$, which we summarize in the following lemma with proof postponed to Appendix~\ref{app:adv_UpBd_suppLem}.
\begin{restatable}{lemma}{AdvOptStp}\label{lem:AdvOptStp}
    Under Assumption~\ref{ass:finiteLen} and~\ref{ass:stop},
    consider $\whatZ_t$ defined in Algorithm~\ref{alg:SpecDecEXP3}, it holds that
    \begin{align}
        \bbE\bigg[
            \sum_{t=\ST(\alg_{i^*})+1}^{\ST(\alg)} e_{i^*}^\top \whatZ_t
            \bigg]
            \leq
            \bbE[\ST(\alg)]- \ST(\alg_{i^*}).
    \end{align}
\end{restatable}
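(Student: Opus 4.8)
The plan is to recognize $e_{i^*}^\top \whatZ_t = \whatZ_{i^*,t} = \mathbbm{1}\{I_t = i^*\}\frac{L+1-y_{i^*,t}}{L\, p_{t,i^*}}$ as an unbiased importance-weighted estimator of the \emph{deterministic} quantity $z_{i^*,t} = (L+1-y_{i^*,t})/L$, and then to transfer the random-horizon sum of $\whatZ_{i^*,t}$ onto the deterministic sequence $z_{i^*,t}$ via an optional-stopping argument, after which the crude bound $z_{i^*,t}\le 1$ finishes the job. First I would record the one-step unbiasedness: since $I_t \sim p_t$ and $y_{i^*,t}$ is fixed by the oblivious adversary under greedy decoding, a direct computation over the randomness of $I_t$ gives $\bbE[\whatZ_{i^*,t} \mid \calF_{t-1}] = z_{i^*,t}$, where $\calF_n := \sigma(\calH_n)$ and $z_{i^*,t} \in [0,1]$. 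This lets me define the shifted centered process $M_n := \sum_{t = \ST(\alg_{i^*})+1}^{n} (\whatZ_{i^*,t} - z_{i^*,t})$ for $n \ge \ST(\alg_{i^*})$, anchored at $M_{\ST(\alg_{i^*})} := 0$; crucially $\ST(\alg_{i^*})$ is a deterministic constant under greedy decoding, so $M_n$ is a genuine martingale with respect to $\{\calF_n\}$.

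Next I would verify the hypotheses of Doob's optional stopping lemma (Lemma~\ref{lem:DoobOptStopping}, condition (b)) for the stopping time $\ST(\alg)$, which satisfies $\ST(\alg) > \ST(\alg_{i^*})$ almost surely by Assumption~\ref{ass:stop} so that the index range is nonempty. The finite-expectation requirement $\bbE[\ST(\alg)] < \infty$ follows from Proposition~\ref{prop:basic_property} (namely $\ST(\alg) \le \len(\prompt_{\ST(\alg)})$ a.s.) together with Assumption~\ref{ass:finiteLen}. The increment requirement is the only delicate point: $\whatZ_{i^*,t}$ is \emph{unbounded} because of the $1/p_{t,i^*}$ factor, so pointwise boundedness of the increments fails and I must instead control the conditional $L^1$ norm. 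A short computation gives $\bbE[|\whatZ_{i^*,t} - z_{i^*,t}| \mid \calF_{t-1}] = 2(1-p_{t,i^*})z_{i^*,t} \le 2$, so condition (b) holds with $c = 2$.

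Applying optional stopping then yields $\bbE[M_{\ST(\alg)}] = 0$, i.e. $\bbE[\sum_{t = \ST(\alg_{i^*})+1}^{\ST(\alg)} \whatZ_{i^*,t}] = \bbE[\sum_{t = \ST(\alg_{i^*})+1}^{\ST(\alg)} z_{i^*,t}]$. Finally, since $y_{i^*,t} \ge 1$ forces $z_{i^*,t} \le 1$ for every $t$, I bound the right-hand sum pathwise by $\ST(\alg) - \ST(\alg_{i^*})$, giving $\bbE[\sum_{t = \ST(\alg_{i^*})+1}^{\ST(\alg)} \whatZ_{i^*,t}] \le \bbE[\ST(\alg)] - \ST(\alg_{i^*})$, as claimed. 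The main obstacle is precisely the unbounded importance-weighted estimator: the whole argument hinges on replacing pointwise increment bounds by the conditional $L^1$ bound in the optional-stopping check, and on exploiting that $\ST(\alg_{i^*})$ is deterministic under greedy decoding so the martingale can be cleanly anchored at that (non-random) time.
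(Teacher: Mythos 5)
Your proof is correct and follows essentially the same route as the paper's: both rely on the one-step unbiasedness of the importance-weighted estimator, apply Doob's optional stopping theorem under condition (b) with a conditional $L^1$ increment bound (your sharper $2(1-p_{t,i^*})z_{i^*,t}\le 2$ versus the paper's $2z_{i^*,t}\le 2$), and conclude with the pathwise bound $z_{i^*,t}\le 1$. The only difference is cosmetic: you anchor the centered martingale at the deterministic time $\ST(\alg_{i^*})$ and stop the tail sum directly, whereas the paper anchors at $t=0$, proves $\bbE\big[\sum_{t=1}^{\ST(\alg)}(e_{i^*}^\top\whatZ_t - z_{i^*,t})\big]=0$, and then subtracts the fixed-horizon quantity $\sum_{t=1}^{\ST(\alg_{i^*})} z_{i^*,t}$, both steps being justified by the same facts you invoke (Assumption~\ref{ass:stop} and determinism of $\ST(\alg_{i^*})$ under greedy decoding).
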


\noindent
\textbf{Step 5: Conclusion of the stopping time regret.}
Aggregating the upper bounds for each terms in \eqref{equ:adv_RegDecomp} and taking expectation,
\begin{align}
     \wtReg(\alg)
     \leq
     \frac{K}{2}\bbE\bigg[\sum_{t=1}^{\ST(\alg)}\eta_t\bigg]
     +
     \bbE\bigg[\frac{\log K}{\eta_{\ST(\alg)}}\bigg]
     +
     \bbE[\ST(\alg)]- \ST(\alg_{i^*}).
\end{align}
By substituting the learning rate values into the equation,
\begin{align}
    \wtReg(\alg)
     &\leq
     2\cdot \bbE\Big[\sqrt{\ST(\alg)\cdot K\log K}\Big]
     +
    \bbE[\ST(\alg)]- \ST(\alg_{i^*})
    \\&
     \leq 
     2\cdot \sqrt{\bbE\big[\ST(\alg)\big]\cdot K\log K}
     +
    \bbE[\ST(\alg)]- \ST(\alg_{i^*})
     \\
     &\leq 
     2\cdot \sqrt{\len(\prompt_{\tau_\rmc})\cdot K\log K}
     +
    \bbE[\ST(\alg)]- \ST(\alg_{i^*}).
\end{align}
where the second inequality adopts Jensen's inequality and the last equality holds due to $ \ST(\alg)\leq\len(\prompt_{\tau_\rmc})$ almost surely as in~\eqref{equ:property_1}.
According to the regret transformation in \eqref{equ:regret_ralationship},
\begin{align}\label{equ:adv_conclude_1}
    \Reg(\alg)  
    \leq 
     2L\cdot\sqrt{\bbE\big[\ST(\alg)\big]\cdot K\log K}
     \leq 
     2L\cdot\sqrt{\len(\prompt_{\tau_\rmc})\cdot K\log K}
      .
\end{align}
Furthermore, by solving the quadratic function in terms of $\Reg(\alg)$, i.e.,
\begin{align}
    \Reg(\alg)  
    \leq 
    2L\cdot\sqrt{\big(\Reg(\alg) +\ST(\alg_{i^*})\big) \cdot K\log K},
\end{align}
we obtain
\begin{align}\label{equ:adv_conclude_2}
    \Reg(\alg)  
    \leq 
    4L^2\cdot K\log K
    +
    2L\cdot \sqrt{\min_{i\in[K]}\ST(\alg_i) \cdot K\log K}.
\end{align}
Aggregating \eqref{equ:adv_conclude_1} and \eqref{equ:adv_conclude_2}
concludes the proof of this theorem.
\end{proof}

\begin{remark}[Sampling Decoding under Adversarial Mean Values]\label{rmk:adv_sampling}
    Since the tokens can be regarded as fixed given the initial prompt and the hyperparameter configurations under the greedy decoding strategy, 
    we consider the greedy decoding strategy under the adversarial mean values assumption (Assumption~\ref{ass:adv_mean}).
    If one wishes to consider the sampling decoding strategy, the proof of Theorem~\ref{thm:adv_UpBd} can be adapted to it. 
    Specifically, this switch of decoding strategy mainly influences \eqref{equ:adv_base}, the proofs of Lemma~\ref{lem:adv_sum_LR} and Lemma~\ref{lem:AdvOptStp}.
    We can depend on Doob's Optional Stopping Theorem (Lemma~\ref{lem:DoobOptStopping}) to solve this problem, just like what we have done to prove~\eqref{equ:adv_sum_eta_OptSTp} and replacing the condition (1) therein by $\bbE[\ST(\alg)]\leq\bbE[\len(\prompt_{\tau_\rmc})]<\infty$. The rest of the proof can go through in a similar manner.
    In the end,
    we can arrive at a similar result, i.e.,
    \begin{align}
        \Reg(\alg,\prompt,\nu)
        \leq
        2L\cdot 
        \min\bigg\{
        2L K\log K
                +
            \sqrt{\min_{i\in[K]}\bbE[\ST(\alg_i)]  K\log K}
                ,
             \sqrt{\bbE[\len(\prompt_{\tau_\rmc})] K\log K}
         \bigg\}
          .
    \end{align}
\end{remark}

\subsection{Proof of Theorem~\ref{thm:stocLwBd}}\label{app:stocLwBd}
Under the greedy decoding strategy, the problem is alleviated in two aspects. \textbf{Firstly}, given the target model $P$ and the initial prompt $\prompt$, the total length $\len(\prompt_{\tau_{\rmc}})$ is (potentially) determined. 
While the total length is determined, it is worth noting that the number of accepted tokens at each round (Line~\ref{lne:spec} in Algorithm~\ref{algo:SDbandit}) is \emph{still random}.
\textbf{Additionally}, under the dynamics presented in Algorithm~\ref{algo:SDbandit} and given the history $\calH_t$, there is a one-to-one mapping between the accepted tokens $X_{I_t,t}$ and its length $Y_{I_t,t}$.

Since the lower bound is established in terms of a class of algorithms over a set of initial prompts, we adopt $\calX^*_{initial}$ to denote the set of initial prompts and adopt $\calS_{all}$ to denote the set of all hyperparameter specifications that can be selected to constitute $\calS$. 
To further ease the problem, we augment Assumption~\ref{ass:mean}.
\begin{assumption}\label{ass:meanAugmented}
We assume that
\newline
$\bullet$   Given any bandit configuration $\nu =(P,\calS=\{S_i\}_{i\in[K]},L)$ and initial prompt $\prompt\in\calX^*_{initial}$, conditional on the history $\calH_{t-1}$ and the selected arm $I_t$ at round $t$, the distribution of the length of the accepted tokens $\bbP(\;\cdot\mid \prompt,\calH_{t-1},I_t=i) = P_{S_i}(\,\cdot\,),\forall i\in[K]$.
\newline
$\bullet$ For any two hyperparameter specifications $S,S^\prime \in\calS_{all}$, we have $\KL(P_S,P_{S^\prime})<\infty$.
\end{assumption}
We consider the class of arm selection algorithms which are \emph{non-anticipatory} and \emph{consistent}.
\begin{definition}[Non-anticipartory Algorithm]
    An  arm selection algorithm $\alg$ is non-anticipatory if $\alg(\cdot\mid\calH_t) \in \sigma(\calH_t),\;\forall t\in\bbN$.
\end{definition}
\begin{definition}[Consistent Algoirthm]
    An  arm selection algorithm $\alg$ is consistent over a class of bandit configurations $\Lambda$ and prompt set $\calX^*_{initial}$ if  for all $\nu\in\Lambda$ and any sequence of initial prompts $(\prompt^m)_{m=1}^\infty\subset\calX^*_{inital}$ with $\len(\prompt^m_{\tau_{\rmc}})\to \infty,m\to\infty $, and for all $a\in(0,1]$, 
    \begin{align}
        \lim_{m\to \infty}\frac{\Reg(\alg,\prompt^m,\nu)}{\len(\prompt^m_{\tau_{\rmc}})^a} = 0.
    \end{align}
\end{definition}

\StocLwBd*
\begin{proof}[Proof of Theorem~\ref{thm:stocLwBd}]
The proof consists of three steps:
\begin{itemize}
    \item Divergence decomposition: similar to the reward decomposition in the upper bound proof, the divergence decomposition cannot be done as the time horizon $\ST(\alg)$ is a random stopping time. We tackle this problem in the first step.
    \item Lower bound on $\bbE_{\nu}[n_{i,\ST(\alg)}]$: we adapt the standard trick to lower bound the expected number of times arm $i$ has been chosen.
    \item Conclusion of the proof.
\end{itemize}

\noindent
\textbf{Step 1: Divergence decomposition.}
The divergence decomposition suffers from the same issue as the reward decomposition, i.e., the stopping time $\ST(\alg)$ depends on the history.
We adopt the same trick as in the reward decomposition step to overcome this issue and the result is summarized in Lemma~\ref{lem:divDecop} whose proof is postponed to App.~\ref{sec:supp_prop}.
\begin{restatable}{lemma}{DivDec}\label{lem:divDecop}
    Under Assumption~\ref{ass:meanAugmented},
    given two bandit configurations $\nu=(P,\calS=\{S_i\}_{i=1}^K,L)$ and $\nu^\prime=(P,\calS^\prime=\{S_i^\prime\}_{i=1}^K,L)$ which only differ in the hyperparameter specifications, for any $\prompt\in\calX^*_{initial}$ and algorithm $\alg$,
    \begin{align}
        \KL(\bbP_{\alg,\prompt,\nu},\bbP_{\alg,\prompt,\nu^\prime})
        =
        \sum_{i=1}^K \bbE_{\alg,\prompt,\nu}[n_{i,\ST(\alg)}] \KL(P_{S_i},P_{S_i^\prime})
    \end{align}
    where $\bbP_{\alg,\prompt,\nu}$ (resp.\ $\bbP_{\alg,\prompt,\nu}$) is the probability measure induced by $(\alg,\prompt,\nu)$ (resp.\ $ (\alg,\prompt,\nu)$) defined on the $\sigma$-algebra $\{\sigma(\calH_t)\}_{t=1}^\infty$.
\end{restatable}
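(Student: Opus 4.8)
The plan is to adapt the classical divergence-decomposition identity (e.g., Lemma~15.1 in \citet{lattimore2020bandit}) to a \emph{random} stopping time $\ST(\alg)$, reusing the martingale argument already developed in Step~1 of the proof of Theorem~\ref{thm:stoc_up}.

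First I would express the relative density of the two path measures via the chain rule. Because $\alg$ is non-anticipatory, the conditional law of the chosen index $I_t$ given $\calH_{t-1}$ depends only on $\alg$ and is therefore identical under $\bbP_{\alg,\prompt,\nu}$ and $\bbP_{\alg,\prompt,\nu^\prime}$; the two configurations share the same target model $P$ and differ only in the length laws $P_{S_i}$ versus $P_{S_i^\prime}$. By Assumption~\ref{ass:meanAugmented}, conditioned on $(\calH_{t-1},I_t=i)$ the length $Y_{I_t,t}$ has law $P_{S_i}$, and there is a one-to-one correspondence between $X_{I_t,t}$ and $Y_{I_t,t}$ given the history. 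Hence the per-round arm-selection factors cancel in the Radon--Nikodym derivative, and
\begin{align}
    \KL(\bbP_{\alg,\prompt,\nu},\bbP_{\alg,\prompt,\nu^\prime})
    =
    \bbE_{\alg,\prompt,\nu}\bigg[\sum_{t=1}^{\ST(\alg)} \log\frac{P_{S_{I_t}}(Y_{I_t,t})}{P_{S_{I_t}^\prime}(Y_{I_t,t})}\bigg].
\end{align}

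The random upper limit prevents a direct term-by-term evaluation, exactly as in the reward decomposition. I would define
\begin{align}
    W_n := \sum_{t=1}^n \bigg(\log\frac{P_{S_{I_t}}(Y_{I_t,t})}{P_{S_{I_t}^\prime}(Y_{I_t,t})} - \KL(P_{S_{I_t}},P_{S_{I_t}^\prime})\bigg), \quad W_0:=0,
\end{align}
and verify that $\{W_n\}$ is a martingale with respect to $\calF_n:=\sigma(\calH_n)$: since $I_t$ is $\calF_{t-1}$-measurable and $Y_{I_t,t}\sim P_{S_{I_t}}$ conditionally, the tower property gives $\bbE[\log(P_{S_{I_t}}(Y_{I_t,t})/P_{S_{I_t}^\prime}(Y_{I_t,t}))\mid\calF_{t-1},I_t]=\KL(P_{S_{I_t}},P_{S_{I_t}^\prime})$, whence $\bbE[W_{n+1}\mid\calF_n]=W_n$. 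I would then invoke Doob's optional stopping (Lemma~\ref{lem:DoobOptStopping}) through condition~(b): $\bbE[\ST(\alg)]<\infty$ follows from \eqref{equ:property_1} and Assumption~\ref{ass:finiteLen}, while the bounded-conditional-increment requirement holds because the lengths lie in the finite set $\{1,\dots,L+1\}$ and $\KL(P_{S_i},P_{S_i^\prime})<\infty$ (Assumption~\ref{ass:meanAugmented}) together force both the per-round KL and $\bbE[|\log(P_{S_{I_t}}(Y_{I_t,t})/P_{S_{I_t}^\prime}(Y_{I_t,t}))|\mid\calF_{t-1}]$ to be bounded by a finite constant depending only on the $K$ specifications. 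Optional stopping yields $\bbE[W_{\ST(\alg)}]=0$, i.e.,
\begin{align}
    \bbE_{\alg,\prompt,\nu}\bigg[\sum_{t=1}^{\ST(\alg)}\log\frac{P_{S_{I_t}}(Y_{I_t,t})}{P_{S_{I_t}^\prime}(Y_{I_t,t})}\bigg]
    =
    \bbE_{\alg,\prompt,\nu}\bigg[\sum_{t=1}^{\ST(\alg)}\KL(P_{S_{I_t}},P_{S_{I_t}^\prime})\bigg].
\end{align}

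Finally I would split the per-round divergence over arms exactly as in \eqref{equ:decompMean}, writing $\sum_{t=1}^{\ST(\alg)}\KL(P_{S_{I_t}},P_{S_{I_t}^\prime}) = \sum_{i=1}^K \KL(P_{S_i},P_{S_i^\prime})\sum_{t=1}^{\ST(\alg)}\mathbbm{1}\{I_t=i\}$ and recognising $n_{i,\ST(\alg)}=\sum_{t=1}^{\ST(\alg)}\mathbbm{1}\{I_t=i\}$; taking expectations delivers the claimed identity. The main obstacle, and the only real departure from the textbook argument, is the random horizon $\ST(\alg)$: everything hinges on the optional-stopping hypotheses, which is precisely where the finite support of the lengths and the finiteness of the pairwise KL divergences in Assumption~\ref{ass:meanAugmented} are needed.
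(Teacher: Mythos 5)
Your proposal is correct and follows essentially the same route as the paper's proof: the same chain-rule factorization with cancellation of the arm-selection factors, the same centered martingale $W_n$ (the paper's $M_n$), the same appeal to Doob's optional stopping theorem via condition (b) using $\bbE[\ST(\alg)]<\infty$ and the boundedness of conditional increments guaranteed by the finite support $\{1,\dots,L+1\}$ together with the finite pairwise KL divergences of Assumption~\ref{ass:meanAugmented}, and the same final indicator split over arms to produce $\sum_{i=1}^K \bbE_{\nu}[n_{i,\ST(\alg)}]\KL(P_{S_i},P_{S_i^\prime})$. The only cosmetic difference is that the paper separately verifies integrability of $\sum_{t=1}^{\ST(\alg)}\KL(P_{S_{I_t}},P_{S_{I_t}^\prime})$ before splitting, which in your write-up is implicit from nonnegativity of the KL terms.
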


\noindent
\textbf{Step 2: Establishment for the lower bound of $\bbE_{\nu}[n_{i,\ST(\alg)}]$.}
Given algorithm $\alg$, bandit configuration $\nu\in\Lambda$, prompt  $\prompt\in(\prompt^m)_{m=1}^\infty$ and any $\varepsilon>0$,
construct $K-1$ alternative bandit configurations $\nu_i=(P,\calS_i=\{S_{i,j}\}_{j=1}^K,L) $  for $i\neq i^*$ with 
\begin{align}
    S_{i,j} = S_j \cdot\mathbbm{1}\{j\neq i\} + S_i^\prime \cdot\mathbbm{1}\{j= i\},
\end{align}
where $S_i^\prime$ in $\nu_i$ satisfies that its mean $\mu_{S_i^\prime}>\mu_{i^*}$ with $\KL(P_{S_i},P_{S_i^\prime})\leq \mathrm{kl}_i+\varepsilon $.
In other words, under bandit configuration $\nu_i$, only $S_i$ changes into $S_i^\prime$ and arm $i$ becomes the best arm.
As the bandit configurations only differ in the hyperparameter selection, we adopt the shorthand notation $\bbP_{\nu},\bbE_{\nu}$ and $\Reg(\nu)$ for $\bbP_{\alg,\prompt,\nu},\bbE_{\alg,\prompt,\nu}$ and $\Reg(\alg,\prompt,\nu) $ respectively when there is no risk of confusion.

According to Lemma~\ref{lem:divDecop}, for any $\prompt\in(\prompt^m)_{m=1}^\infty$,
\begin{align}
    \KL(\bbP_{\nu},\bbP_{\nu_i})
    =
    \bbE_{\nu}[n_{i,\ST(\alg)}] \cdot \KL(P_{S_i},P_{S_i^\prime})
    \leq
    \bbE_{\nu}[n_{i,\ST(\alg)}]\cdot (\mathrm{kl}_i+\varepsilon).
\end{align}
By Lemma~\ref{lem:BretHuber}, with $A_i=\{n_{i,\ST(\alg)}>\ST(\alg)/2 \}$,
\begin{align}\label{equ:apply_BretHuber}
    \bbP_{\nu}[A_i] + \bbP_{\nu_i}[A_i^c] 
    &
    \geq
    \frac{1}{2}\exp\big(-\bbE_{\nu}[n_{i,\ST(\alg)}] \cdot \KL(P_{S_i},P_{S_i^\prime})\big)
    \\
    &
    \geq
    \frac{1}{2}\exp\big(-\bbE_{\nu}[n_{i,\ST(\alg)}]\cdot(\mathrm{kl}_i+\varepsilon)\big)
\end{align}
Thus, by adopting \eqref{equ:regretDecomp}, the expected reward under $\nu$ can be lower bounded as
\begin{align}\label{equ:apply_BretHuber_2}
    \Reg(\nu)
    &
    \geq
    \frac{\Delta_i}{\mu_{i^*}}\cdot \bbE_{\nu}[n_{i,\ST(\alg)}]
    \geq
    \frac{\Delta_i}{\mu_{i^*}}\cdot \bbE_{\nu}[n_{i,\ST(\alg)}\mid A_i]\cdot\bbP_{\nu}[A_i]
    \\
    &
    \geq
    \frac{\Delta_i}{\mu_{i^*}}\cdot \frac{1}{2}\cdot\bbE_{\nu}[\ST(\alg)\mid A_i]\cdot\bbP_{\nu}[A_i]
    \geq
    \frac{\Delta_i}{\mu_{i^*}}\cdot 
    \frac{\len(\prompt_{\tau_{\rmc}})}{2(L+1)}
    \cdot\bbP_{\nu}[A_i]
    .
\end{align}
Similarly, under the alternative bandit configuration $\nu_i$,
\begin{align}\label{equ:apply_BretHuber_i}
    \Reg(\nu_i)
    \geq
    \frac{\mu_{S_i^\prime}-\mu_{i^*}}{\mu_{S_i^\prime}}\cdot \bbE_{\nu_i}[n_{i,\ST(\alg)}]
    \geq
    \frac{\mu_{S_i^\prime}-\mu_{i^*}}{\mu_{S_i^\prime}}\cdot 
    \frac{\len(\prompt_{\tau_{\rmc}})}{2(L+1)}
    \cdot\bbP_{\nu_i}[A_i^c].
\end{align}
Combining \eqref{equ:apply_BretHuber}, \eqref{equ:apply_BretHuber_2} and \eqref{equ:apply_BretHuber_i}, 
\begin{align}
    &\Reg(\nu) + \Reg(\nu_i)
    \\
    &\quad\geq 
    \min\Big\{\frac{\Delta_i}{\mu_{i^*}},\frac{\mu_{S_i^\prime}-\mu_{i^*}}{\mu_{S_i^\prime}} \Big\}\cdot \frac{\len(\prompt_{\tau_{\rmc}})}{2(L+1)}
    (\bbP_{\nu}[A_i] + \bbP_{\nu_i}[A_i^c] )
    \\
    &\quad
    \geq 
    \min\Big\{\frac{\Delta_i}{\mu_{i^*}},\frac{\mu_{S_i^\prime}-\mu_{i^*}}{\mu_{S_i^\prime}} \Big\}\cdot \frac{\len(\prompt_{\tau_{\rmc}})}{4(L+1)}
    \cdot\exp\big(-\bbE_{\nu}[n_{i,\ST(\alg)}]\cdot(\mathrm{kl}_i+\varepsilon)\big)
\end{align}
which holds for any $\prompt\in(\prompt_m)_{m=1}^\infty$.
Rearranging the terms, we have
\begin{align}
    &\liminf_{m\to\infty}
    \frac{\bbE_{\nu}[n_{i,\ST(\alg)}]}{\log(\len(\prompt_{\tau_{\rmc}}^m))}
    \\
    &\quad\geq
    \frac{1}{\mathrm{kl}_i+\varepsilon}
    +
    \liminf_{m\to\infty}
    \frac{
        \log\big(
        \min\big\{\frac{\Delta_i}{\mu_{i^*}},\frac{\mu_{S_i^\prime}-\mu_{i^*}}{\mu_{S_i^\prime}} \big\} 
        \big)
        -
        \log\big(4(L+1)\big)
        -
        \log(\Reg(\nu)+\Reg(\nu_i))
    }{(\mathrm{kl}_i+\varepsilon)\cdot\log(\len(\prompt_{\tau_{\rmc}}^m))}
    \\
    &\quad=
    \frac{1}{\mathrm{kl}_i+\varepsilon}
\end{align}
where the last equality follows from the definition of a consistent algorithm.
Because $\varepsilon>0$ is arbitrarily chosen, by sending $\varepsilon\to 0$, we obtain the lower bound for $\bbE_{\nu}[n_{i,\ST(\alg)}]$
\begin{align}\label{equ:nit_LwBd}
    &\liminf_{m\to\infty}
    \frac{\bbE_{\nu}[n_{i,\ST(\alg)}]}{\log(\len(\prompt_{\tau_{\rmc}}^m))}
    \geq
    \frac{1}{\mathrm{kl}_i}.
\end{align}

\noindent
\textbf{Step 3: Conclusion of the proof.}
Aggregating \eqref{equ:regretDecomp} and \eqref{equ:nit_LwBd},
\begin{align}
    \liminf_{m\to\infty}
    \frac{\Reg(\alg,\prompt^m,\nu)}{\log(\len(\prompt_{\tau_{\rmc}}^m))}
    \geq
    \sum_{i\neq i^*}
    \frac{\Delta_i}{\mu_{i^*}}\cdot \frac{1}{\mathrm{kl}_i}.
\end{align}
    This concludes the proof.
\end{proof}

\section{Supporting Propositions}\label{sec:supp_prop}

\subsection{Supporting Lemmas for Theorem~\ref{thm:adv_UpBd}}\label{app:adv_UpBd_suppLem}

\AdvSumLR*
\begin{proof}[Proof of Lemma~\ref{lem:adv_sum_LR}]
    Similar to the proof of Lemma~\ref{lem:AdvOptStp}, we adopt Doob's Optional Stopping Theorem (Lemma~\ref{lem:DoobOptStopping}) to show
    \begin{align}\label{equ:adv_sum_eta_OptSTp}
        \bbE\bigg[
            \sum_{t=1}^{\ST(\alg)}\frac{\eta_t}{             p_{t,I_t}}
            -
            K\cdot \eta_t
        \bigg]
        =0.
    \end{align}
    According to condition $(b)$ in Lemma~\ref{lem:DoobOptStopping}, we only need to show that (1) $\bbE[\ST(\alg)]<\infty$, and (2) there exists $c\in\bbR$ such that for all $t<\ST(\alg)$, $\bbE\big[\big| \eta_t/p_{t,I_t}-K\cdot\eta_t |\big| \calH_{t-1}\big]<c $.

    \noindent
    \underline{Condition $(1)$}: Given Assumption~\ref{ass:finiteLen}, it holds that 
    $\len(\prompt_{\tau_c})<\infty$ under the greedy decoding strategy. Therefore,
    we have $\bbE[\ST(\alg)]\leq \len(\prompt_{\tau_c})<\infty$.
    
    \noindent
    \underline{Condition $(2)$}:
    Note that 
    \begin{align}
        \bbE\Big[
                \big|
                    \frac{\eta_t}{p_{t,I_t}}-K\cdot\eta_t
                \big|
                \;\Big|\;
                \calH_{t-1}
            \Big]
        &\leq
        \bbE\Big[
                    \frac{\eta_t}{p_{t,I_t}}
                \;\Big|\;
                \calH_{t-1}
            \Big]
            +
            K\cdot\eta_t
        \leq
        \bbE\Big[
                    \sum_{i=1}^K
                    \frac{\eta_t}{p_{t,i}}\mathbbm{1}\{I_t=i\}
                \;\Big|\;
                \calH_{t-1}
            \Big]
            +
            K\cdot\eta_t
        \\
        &\leq
        2K\cdot\eta_t\leq 2\sqrt{K\log K}.
    \end{align}
    Therefore, Condition $(2)$ is satisfied and \eqref{equ:adv_sum_eta_OptSTp} is established.

    \textbf{In addition}, by using Assumption~\ref{ass:finiteLen},
    \begin{align}
        \bbE\bigg[
            \Big|
                \sum_{t=1}^{\ST(\alg)} K\cdot \eta_t
            \Big|
        \bigg]
        \leq 
        \bbE\big[\ST(\alg) \big] \sqrt{K\log K} <\infty.
    \end{align}
    So $\bbE\big[
                \sum_{t=1}^{\ST(\alg)} \eta_t
        \big]$
        exists. Lastly, 
    \begin{align}
        \bbE\bigg[
            \sum_{t=1}^{\ST(\alg)}\frac{\eta_t}{             p_{t,I_t}}
        \bigg]
        =
        \bbE\bigg[
            \sum_{t=1}^{\ST(\alg)}\frac{\eta_t}{             p_{t,I_t}}
            -
            K\cdot \eta_t
        \bigg]
        +
        \bbE\bigg[
                \sum_{t=1}^{\ST(\alg)} \eta_t
        \bigg]
    \end{align}
    which indicates $\bbE\big[
            \sum_{t=1}^{\ST(\alg)}\eta_t/ p_{t,I_t}
        \big]$ exists.
        
    \textbf{In conclusion}, by adding $K\cdot\bbE\big[
                \sum_{t=1}^{\ST(\alg)} \eta_t
        \big]$ on both sides of \eqref{equ:adv_sum_eta_OptSTp}, the desired result is obtained.
\end{proof}

\AdvOptStp*
\begin{proof}[Proof of Lemma~\ref{lem:AdvOptStp}]
    If the two expectations below exist and
    \begin{align}\label{equ:AdvOptStp}
        \bbE\bigg[
            \sum_{t=1}^{\ST(\alg)} e_i^\top \whatZ_t
        \bigg]
        =
        \bbE\bigg[
            \sum_{t=1}^{\ST(\alg)} z_{i,t}
        \bigg],
    \end{align}
    then it holds that
    \begin{align}
        &\bbE\bigg[
            \sum_{t=1}^{\ST(\alg)} e_i^\top \whatZ_t
        \bigg]
        -
        \bbE\bigg[
            \sum_{t=1}^{\ST(\alg_i)} e_i^\top \whatZ_t
        \bigg]
        =
        \bbE\bigg[
            \sum_{t=1}^{\ST(\alg)} z_{i,t}
        \bigg]
        -
            \sum_{t=1}^{\ST(\alg_i)} z_{i,t}
        \\
        &\quad
        \leq
        \bbE\bigg[
            \sum_{t=\ST(\alg_i)+1}^{\ST(\alg)} z_{i,t}
        \bigg]
        \leq
        \bbE[\ST(\alg) ] - \ST(\alg_i).
    \end{align}
    The desired result can be obtained.

    Therefore, we aim to prove \eqref{equ:AdvOptStp}.
    Since both $\ST(\alg)$ and $\whatZ_t$ are random, the obstacle  is that we cannot directly take expectation of the summand. 
    We will \textbf{firstly} prove 
    \begin{align}\label{equ:adv_OptStp}
        \bbE\bigg[
            \sum_{t=1}^{\ST(\alg)} e_i^\top \whatZ_t-z_{i,t}
        \bigg]
        =
        0
    \end{align}
    by Doob's Optional Stopping Theorem (Lemma~\ref{lem:DoobOptStopping}).
    According to condition $(b)$ in Lemma~\ref{lem:DoobOptStopping}, we only need to show that (1) $\bbE[\ST(\alg)]<\infty$, and (2) there exists $c\in\bbR$ such that for all $t<\ST(\alg)$, $\bbE\big[|e_i^\top (\whatZ_t-z_t)|\big| \calH_{t-1}\big]<c $.


    Condition $(1)$ holds as shown in the proof Lemma~\ref{lem:adv_sum_LR}.
    We only need to show Condition $(2)$.
    Note that $\bbE[\whatZ]=z_t$,
    \begin{align}
        \bbE\big[|e_i^\top (\whatZ_t-z_t)|\;\big|\; \calH_{t-1}\big]
        \leq 
        \bbE\Big[\bbE\big[e_i^\top \whatZ_t\mid\calH_{t-1},I_t  \big]\;\Big|\; \calH_{t-1}\Big]
        +
        e_i^\top z_t
        = 
        2z_{i,t}\leq 2.
    \end{align}
    Therefore, $\sum_{t=1}^{\ST(\alg)} e_i^\top \whatZ_t-z_{i,t}$ is well-defined and \eqref{equ:adv_OptStp} is established.

    We \textbf{then}  prove the two expectations exist. Note that all involved variables are positive, we only need to show 
    \begin{align}
        \bbE\bigg[
            \sum_{t=1}^{\ST(\alg)} z_{i,t}
        \bigg]<\infty.
    \end{align}
    This can be obtained by noticing that $z_{i,t}\in [0,1]$ for $t\in\bbN$,
    \begin{align}
        \bbE\bigg[
                \sum_{t=1}^{\ST(\alg)}z_{i,t}
        \bigg]
        \leq
        \bbE\big[
            \ST(\alg)
        \big]
        <\infty.
    \end{align}
    Combining the above with Condition 2, we have
    \begin{align}
        \bbE\bigg[
            \sum_{t=1}^{\ST(\alg)} e_i^\top \whatZ_t
        \bigg]
        =
        \bbE\bigg[
            \sum_{t=1}^{\ST(\alg)} e_i^\top \whatZ_t-p^\top z_t
        \bigg]
        +
        \bbE\bigg[
            \sum_{t=1}^{\ST(\alg)} e_i^\top z_t
        \bigg]
    \end{align}
    which means $\bbE\big[
            \sum_{t=1}^{\ST(\alg)} e_i^\top \whatZ_t
        \big]$ exists.

    \textbf{In conclusion}, by adding $\bbE\big[
            \sum_{t=1}^{\ST(\alg)} z_{i,t}
        \big]$ on both sides of \eqref{equ:adv_OptStp}, 
        \begin{align}
            \bbE\bigg[
            \sum_{t=1}^{\ST(\alg)} e_i^\top \whatZ_t
            \bigg]
            =
            \bbE\bigg[
                \sum_{t=1}^{\ST(\alg)} z_{i,t}
            \bigg].
        \end{align}
        This finishes the proof of \eqref{equ:AdvOptStp}.
\end{proof}

\subsection{Proof of Lemma~\ref{lem:divDecop}}\label{app:divDecop}
\DivDec*
\begin{proof}
    As the two bandit configurations only differ in the hyperparameter specifications, we adopt the abbreviated notation $\bbP_{\nu}$ and $\bbP_{\nu^\prime}$ for the induced probability $\bbP_{\alg,\prompt,\nu}$ and $\bbP_{\alg,\prompt,\nu}$, respectively. We use $P_{\alg}(\cdot)$ to denote the output distribution of the arm selection algorithm $\alg$ in Line~\ref{lne:arm_select} in Algorithm~\ref{algo:SDbandit}.

    With the bandit configuration $\nu = (P,\calS,L)$, the probability of $\calH_{\ST(\alg)}$ is
    \begin{align}
        \bbP_{\nu}(\calH_{\ST(\alg)})
        = 
        \prod_{t=1}^{\ST(\alg)}P_{\alg}(I_t\mid \calH_{t-1},\prompt)
        \bbP(Y_{I_t,t}\mid \calH_{t-1},\prompt,I_t)
        =
        \prod_{t=1}^{\ST(\alg)}P_{\alg}(I_t\mid \calH_{t-1},\prompt)P_{S_{I_t}}(Y_{I_t,t}).
    \end{align}
    Similarly, under the bandit configuration $\nu^\prime = (P,\calS^\prime,L)$,
    \begin{align}
        \bbP_{\nu^\prime}(\calH_{\ST(\alg)})
        =
        \prod_{t=1}^{\ST(\alg)}P_{\alg}(I_t\mid \calH_{t-1},\prompt)
        P_{S_{I_t}^\prime}(Y_{I_t,t}).
    \end{align}
    Therefore, it holds that
    \begin{align}
        \log\frac{\bbP_{\nu}(\calH_{\ST(\alg)})}{\bbP_{\nu^\prime}(\calH_{\ST(\alg)})}
        =
        \sum_{t=1}^{\ST(\alg)}
        \log\frac{P_{S_{I_t}}(Y_{I_t,t})}{P_{S_{I_t}^\prime}(Y_{I_t,t})}.
    \end{align}
    Because $\KL(P_{S_i},P_{S_i^\prime})<\infty,\forall i\in[K]$ under Assumption~\ref{ass:meanAugmented}, the divergence between $\bbP_{\nu}$ and $\bbP_{\nu^\prime}$ can be rewritten as
    \begin{align}
        \KL(\bbP_{\nu},\bbP_{\nu^\prime})
        =
        \bbE_{\nu} \bigg[
        \log\frac{\bbP_{\nu}(\calH_{\ST(\alg)})}{\bbP_{\nu^\prime}(\calH_{\ST(\alg)})}
        \bigg]
        =
        \bbE_{\nu} \bigg[
            \sum_{t=1}^{\ST(\alg)}
            \log\frac{P_{S_{I_t}}(Y_{I_t,t})}{P_{S_{I_t}^\prime}(Y_{I_t,t})}
        \bigg]<\infty.
    \end{align}
    We \textbf{then} prove that 
    \begin{align}\label{equ:divDec}
        \bbE_{\nu} \bigg[
            \sum_{t=1}^{\ST(\alg)}
            \log\frac{P_{S_{I_t}}(Y_{I_t,t})}{P_{S_{I_t}^\prime}(Y_{I_t,t})}
        \bigg]
        =
         \sum_{i=1}^K \bbE_{\nu}[n_{i,\ST(\alg)}] \KL(P_{S_{i}},P_{S_{i}^\prime}).
    \end{align}
    The proof is composed by two arguments:
    \begin{itemize}
        \item \textbf{Argument 1}:
            \begin{align}\label{equ:divDec_Arg1}
            \bbE_{\nu} \bigg[
                \sum_{t=1}^{\ST(\alg)} \Big(
                \log\frac{P_{S_{I_t}}(Y_{I_t,t})}{P_{S_{I_t}^\prime}(Y_{I_t,t})}
                -
                \KL(P_{S_{I_t}},P_{S_{I_t}^\prime}) \Big)
            \bigg]
            = 0.
            \end{align}
        \item \textbf{Argument 2}:
            \begin{align}\label{equ:divDec_Arg2}
            \bbE_{\nu} \bigg[
                \sum_{t=1}^{\ST(\alg)}
                \KL(P_{S_{I_t}},P_{S_{I_t}^\prime}) 
            \bigg]
            = 
            \sum_{i=1}^K \bbE_{\nu}[n_{i,\ST(\alg)}] \KL(P_{S_{i}},P_{S_{i}^\prime}) .
            \end{align}
    \end{itemize}
    If the two arguments are true, 
    by summing up \eqref{equ:divDec_Arg1} and \eqref{equ:divDec_Arg2}, we can obtain the desired result \eqref{equ:divDec}.

    We prove the two above Arguments.

    \noindent
    \textbf{Argument 1.}
    Let 
    \begin{align}
        M_n:= 
        \sum_{t=1}^n \log\frac{P_{S_{I_t}}(X_{I_t,t})}{P_{S_{I_t}^\prime}(X_{I_t,t})} 
        -
        \KL(P_{S_{I_t}},P_{S_{I_t}^\prime}),
        \quad n=1,2,\ldots
    \end{align}
    and $M_0:=0$. 
    
    We \textbf{firstly} prove that $(M_n)_{n=0}^\infty$ is a martingale w.r.t. $(\calH_n)_{n=0}^\infty$: (1) $\bbE_{\nu}[|M_n|]<\infty $, and (2) $\bbE_{\nu}\big[
            M_{n+1}
            \mid
            \calH_n
        \big]=M_n$.

    (1) $\bbE_{\nu}[|M_n|]<\infty $.
    According to Assumption~\ref{ass:meanAugmented}, for any $i\in[K]$, $\KL(P_{S_{i}},P_{S_{i}^\prime})<\infty$,
    this indicates there exists $c\in\bbR$ such that
    \begin{align}\label{equ:divDec_MG_UpBd}
        \sum_{x=1}^{L+1} P_{S_i}(x) \Big|\log\frac{P_{S_i}(x) }{P_{S_i^\prime}(x) }\Big|
        <c
        <\infty,\quad\forall i \in[K].
    \end{align}
    This indicates
    \begin{align}\label{equ:divDecMG_1}
        &
            \bbE_{\nu}[|M_n|]
        \leq 
            \sum_{t=1}^n \bbE_{\nu}\bigg[
            \Big|\log\frac{P_{S_{I_t}}(X_{I_t,t})}{P_{S_{I_t}^\prime}(X_{I_t,t})} \Big|
            +
            \Big|\KL(P_{S_{I_t}},P_{S_{I_t}^\prime})\Big|
            \bigg]
        <\infty.
    \end{align}

    (2) $\bbE_{\nu}\big[
            M_{n+1}
            \mid
            \calH_n
        \big]=M_n$.
    By adopting the tower property.
    \begin{align}
        \bbE_{\nu}\big[
            M_{n+1}
            \mid
            \calH_n
        \big]
        &
        =
        M_n
        +
        \bbE_{\nu}\bigg[
            \bbE_{\nu}\Big[
                \log\frac{P_{S_{I_{n+1}}}(Y_{I_{n+1},n+1})}{P_{S_{I_{n+1}}^\prime}(Y_{I_{n+1},n+1})} 
                -
                \KL(P_{S_{I_{n+1}}},P_{S_{I_{n+1}}^\prime})
                \;\Big|\;
                \calH_n, I_{n+1}
            \Big]
            \;\Big|\;\calH_n
        \bigg]
        \\
        &
        =
        M_n.\label{equ:divDecMG_2}
    \end{align}
    
    From \eqref{equ:divDecMG_1} and \eqref{equ:divDecMG_2}, 
    $(M_n)_{n\in\bbN}$ is a martingale w.r.t.\ $(\calH_n)_{n\in\bbN}$. 
    
    \textbf{Additionally}, we will adopt Doob's Optional Stopping Theorem (Lemma~\ref{lem:DoobOptStopping}) on $M_{\ST(\alg)}$. The prerequisites are verified as follows:

    \noindent
    (1) $\bbE[\ST(\alg)]<\infty$:  
    By Assumption~\ref{ass:finiteLen}, $\ST(\alg)$ is a stopping time w.r.t.\ $(\calH_n)_{n\in\bbN}$ with $\bbE[\ST(\alg)]\leq\len(\prompt_{\tau_\rmc}<\infty$.

    \noindent
    (2) there exists $\barc\in\bbR$, such that $\bbE[|M_{n+1}-M_n|\mid\calH_n]\leq \barc$ for any $n\leq \ST(\alg)$:
    According to \eqref{equ:divDec_MG_UpBd},
    \begin{align}
        \bbE[|M_{n+1}-M_n|\mid\calH_n]
        \leq 
        \bbE_{\nu}\bigg[
            \Big|
                \log\frac{P_{S_{I_t}}(X_{I_t,t})}{P_{S_{I_t}^\prime}(X_{I_t,t})} 
            \Big|
            \; \Big|\; \calH_n
            \bigg]
            +
        \bbE_{\nu}\bigg[
            \KL(P_{S_{I_t}},P_{S_{I_t}^\prime})
            \;\Big|\;
            \bigg]
        \leq
        2c<\infty.
    \end{align}
    Taking $\barc = 2c$ finishes the verification.
    
    Therefore, the prerequisites in Lemma~\ref{lem:DoobOptStopping} (b) are satisfied. By invoking Lemma~\ref{lem:DoobOptStopping},
    \eqref{equ:divDec_Arg1} is established.

    \noindent
    \textbf{Argument 2.} Firstly, 
    by \eqref{equ:divDec_MG_UpBd},
    \begin{align}
        \bbE_{\nu} \bigg[
                \Big|
                \sum_{t=1}^{\ST(\alg)}
                \KL(P_{S_{I_t}},P_{S_{I_t}^\prime}) 
                \Big|
            \bigg]
        \leq
        \bbE_{\nu} \big[\ST(\alg)\big]\cdot c
        <\infty.
    \end{align}
    So $\sum_{t=1}^{\ST(\alg)} \KL(P_{S_{I_t}},P_{S_{I_t}^\prime}) $ has finite expectation.
    Furthermore,
    it can be observed that
    \begin{align}
            \bbE_{\nu} \bigg[
                \sum_{t=1}^{\ST(\alg)}
                \KL(P_{S_{I_t}},P_{S_{I_t}^\prime}) 
            \bigg]
            &= 
            \bbE_{\nu} \bigg[
                \sum_{i=1}^{K}
                \sum_{t=1}^{\ST(\alg)}
                \mathbbm{1}\{I_t=i\}
                \KL(P_{S_{i}},P_{S_{i}^\prime}) 
            \bigg]
            \\
            &
            =
            \sum_{i=1}^K \bbE_{\nu}[n_{i,\ST(\alg)}] \KL(P_{S_{i}},P_{S_{i}^\prime}) .
    \end{align}
    Therefore, \eqref{equ:divDec_Arg2} is proved.
    
    This concludes the proof of this divergence decomposition lemma.
\end{proof}

\subsection{Proof of Proposition~\ref{prop:stoc_tightness}}\label{app:stoc_tightness}
\StocTight*
\begin{proof}[Proof of Proposition~\ref{prop:stoc_tightness}]
    Given any $S\in\calS$ with parameter $p$, 
    \begin{align}
        \mu_S 
        =
        \sum_{x=1}^{L+1} x \cdot P_{S}(x)
        =
        \sum_{x=1}^{L} x \cdot p^{x-1} (1-p)
        + (L+1)\cdot p^L
        =
        \frac{1-p^{L+1}}{1-p}.
    \end{align}
    Note that if $\mu_S\geq \mu_i$, we have $p>p_i$. Therefore,
    \begin{align}\label{equ:tightness_mu}
        \mu_S - \mu_i
        &
        =
        \frac{1-p^{L+1}}{1-p} - \frac{1-p_i^{L+1}}{1-p_i}
        =
        \frac{(p-p_i)+(p_i^{L+1}-p^{L+1}+p_ip^{L+1}-pp_i^{L+1})}{(1-p)(1-p_i)} 
        \\
        &
        \geq
        \frac{(p-p_i)+(p_i^{L+1}-p^{L+1})}{(1-p)(1-p_i)} 
        \geq
        \frac{(p-p_i)(1-p^L)}{(1-p)(1-p_i)}.
    \end{align}
    In addition, for any $i\in[K]$,
    \begin{align}
        \KL(P_{S_i},P_S)
        =
        \sum_{x=1}^{L+1} P_{S_i}(x)\cdot\log\frac{P_{S_i}(x)}{P_{S}(x)}
        =
            \frac{p_i-p_i^{L+1}}{1-p_i}
            \cdot
            \log\frac{p_i}{p}
            +
            (1-p_i^L)
            \log\frac{1-p_i}{1-p}.
    \end{align}
    By utilizing $\log x\leq x-1$ for $x>0$, 
    \begin{align}\label{equ:tightness_div}
        \KL(P_{S_i},P_S)
        \leq
            \frac{p_i-p_i^{L+1}}{1-p_i}
            \cdot
            \frac{p_i-p}{p}
            +
            (1-p_i^L)
            \frac{p-p_i}{1-p}
        =
            \frac{(1-p_i^L) (p_i-p)^2}{p (1-p_i)(1-p)}
    \end{align}
    Combining \eqref{equ:tightness_div} and \eqref{equ:tightness_mu},
    \begin{align}
        \KL(P_{S_i},P_S)
        \leq 
        \frac{(\mu_S - \mu_i)^2(1-p_i)(1-p)(1-p_i^L)}{p (1-p^L)^2}
        \leq 
        \frac{(\mu_S - \mu_i)^2}{p(1-p^L)/(1-p)}
        .
    \end{align}
    According to the definition of $\mathrm{kl}_i$ in Theorem~\ref{thm:stocLwBd},
    \begin{align}
        \mathrm{kl}_i\leq \frac{(\mu_{i^*} - \mu_i)^2}{p_{i^*}(1-p_{i^*}^L)/(1-p_{i^*})}
        =
        \frac{\Delta_i^2}{p_{i^*}(1-p_{i^*}^L)/(1-p_{i^*})}.
    \end{align}
    Thus, the regret is lower bounded by
    \begin{align}
    \liminf_{m\to\infty}
    \frac{\Reg(\alg,\prompt^m,\nu)}{\log(\len(\prompt_{\tau_{\rmc}}^m))}
    \geq
    \sum_{i\neq i^*}
    \frac{1}{\mu_{i^*}\Delta_i}\cdot \frac{p_{i^*}(1-p_{i^*}^L)}{(1-p_{i^*})}.
\end{align}
Furthermore, if $p_{i^*}\in \big(2^{-1/L},1\big)$, 
\begin{align}
    \liminf_{m\to\infty}
    \frac{\Reg(\alg,\prompt^m,\nu)}{\log(\len(\prompt_{\tau_{\rmc}}^m))}
    \geq
    \sum_{i\neq i^*}
    \frac{L/2}{\mu_{i^*}\Delta_i}.
\end{align}
\end{proof}
\section{Supporting Lemmas}
\begin{lemma}[Doob's optional stopping, Theorem 3.8 in~\citet{lattimore2020bandit}]\label{lem:DoobOptStopping}
Let $\bbF=(\calF_t)_{t\in\bbN}$ be a filtration and $(X_t)_{t\in\bbN}$ be an $\bbF$-adapted martingale and $\tau$ an $\bbF$-stopping time such that at least one of the following holds:
\begin{itemize}
    \item[(a)] There exists an $n\in\bbN$ such that $\bbP[\tau>n]=0$.
    \item[(b)] $\bbE[\tau]<\infty$, and there exits a constant $c\in\bbR$ such that for all $t\in\bbN$, $\bbE[|X_{t+1}-X_t|\mid\calF_t]\leq c$ almost surely on the event that $\tau>t$.
    \item[(c)] There exists a constant $c$ such that $|X_{t\wedge \tau}|\leq c$ almost surely for all $t\in\bbN$.
\end{itemize}
Then $X_{\tau}$ is almost surely well-defined, and $\bbE[X_{\tau}]=\bbE[X_0]$. 
Furthermore, when $(X_t)$ is a super/sub-martingale rather than a martingale, then equality is replaced with less/greater-than, respectively.
    
\end{lemma}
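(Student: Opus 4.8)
The plan is to reduce all three cases to a single limiting argument built on the \emph{stopped process} $X^{\tau}_t := X_{t\wedge\tau}$. The central observation is that whenever $(X_t)$ is an $\bbF$-martingale, so is $(X^\tau_t)$: writing $X_{t\wedge\tau} = X_0 + \sum_{s=1}^{t}(X_s-X_{s-1})\mathbbm{1}\{\tau\ge s\}$ and noting that $\{\tau\ge s\}=\{\tau\le s-1\}^c\in\calF_{s-1}$ because $\tau$ is a stopping time, each term $(X_s-X_{s-1})\mathbbm{1}\{\tau\ge s\}$ is a martingale difference. Hence $\bbE[X_{t\wedge\tau}]=\bbE[X_0]$ for every fixed $t\in\bbN$, with $\le$ (resp.\ $\ge$) replacing equality in the super- (resp.\ sub-)martingale case. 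The whole theorem then follows once we justify passing $t\to\infty$ through the expectation, i.e.\ $\lim_{t\to\infty}\bbE[X_{t\wedge\tau}]=\bbE[X_\tau]$.

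For the pointwise limit, I would first argue that $X_{t\wedge\tau}\to X_\tau$ almost surely on $\{\tau<\infty\}$, since for $t\ge\tau$ one has $X_{t\wedge\tau}=X_\tau$. Cases (a) and (b) both force $\tau<\infty$ a.s.\ (immediately under (a), and because $\bbE[\tau]<\infty$ under (b)), so the a.s.\ limit is exactly $X_\tau$ and this defines it. Under (c) the bounded martingale $(X_{t\wedge\tau})$ converges a.s.\ by the martingale convergence theorem to a limit that agrees with $X_\tau$ on $\{\tau<\infty\}$; setting $X_\tau$ equal to this limit on $\{\tau=\infty\}$ makes $X_\tau$ well defined as claimed.

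The remaining work is the interchange of limit and expectation, handled per case. Case (a) is immediate: $\tau\le n$ gives $X_{n\wedge\tau}=X_\tau$ with no limit needed. Case (c) follows from bounded convergence, since $|X_{t\wedge\tau}|\le c$ uniformly. Case (b)---the main obstacle---requires exhibiting an integrable dominating variable. I would take $W:=|X_0|+\sum_{s=1}^{\tau}|X_s-X_{s-1}|$, so that $|X_{t\wedge\tau}|\le W$ for all $t$, and then establish $\bbE[W]<\infty$ via the layer-cake/Tonelli computation
\begin{align}
    \bbE\bigg[\sum_{s=1}^{\tau}|X_s-X_{s-1}|\bigg]
    &=\sum_{s=1}^{\infty}\bbE\big[\mathbbm{1}\{\tau\ge s\}\,|X_s-X_{s-1}|\big]
    \\
    &=\sum_{s=1}^{\infty}\bbE\big[\mathbbm{1}\{\tau\ge s\}\,\bbE[|X_s-X_{s-1}|\mid\calF_{s-1}]\big]
    \le c\sum_{s=1}^{\infty}\bbP[\tau\ge s]=c\,\bbE[\tau]<\infty,
\end{align}
where pulling $\mathbbm{1}\{\tau\ge s\}$ inside the conditional expectation uses $\{\tau\ge s\}\in\calF_{s-1}$, and the one-step bound holds on $\{\tau>s-1\}=\{\tau\ge s\}$ by hypothesis. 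Dominated convergence then yields $\bbE[X_\tau]=\lim_t\bbE[X_{t\wedge\tau}]=\bbE[X_0]$.

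Finally, the super/submartingale statement follows along the identical path: the stopped process is itself a super/submartingale, giving $\bbE[X_{t\wedge\tau}]\le\bbE[X_0]$ (resp.\ $\ge$), and each of the three limiting arguments preserves the inequality in the limit. I expect case (b) to be the only genuinely delicate point, precisely because it is where one must manufacture integrability of the dominating sum from the one-step conditional bound together with the measurability $\{\tau\ge s\}\in\calF_{s-1}$; the other two cases collapse to standard convergence theorems once the stopped-process (super/sub)martingale property is in hand.
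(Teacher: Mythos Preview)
Your proof is correct and follows the standard textbook argument. Note, however, that the paper does not supply its own proof of this lemma: it is listed in the ``Supporting Lemmas'' section purely as a citation (Theorem~3.8 in \citet{lattimore2020bandit}) and is used as a black box in the proofs of Theorem~\ref{thm:stoc_up}, Lemma~\ref{lem:adv_sum_LR}, Lemma~\ref{lem:AdvOptStp}, and Lemma~\ref{lem:divDecop}. So there is nothing to compare against---your write-up simply supplies what the paper omits, and the reduction to the stopped process $X_{t\wedge\tau}$ together with the case-by-case justification of the limit interchange (trivial under~(a), dominated convergence via the increment sum under~(b), bounded convergence plus martingale convergence under~(c)) is exactly the classical route.
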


\begin{lemma}[Chernoff-Hoeffding bound, Fact 1 in~\citet{auer2002finite}]\label{lem:ChernoffHoeffding}
    Let $X_1,\ldots,X_n$ be random variables with common range $[0,1]$ and $\bbE[X_n\mid X_1,\ldots,X_{n-1}] = \mu$. Let $S_n = X_1+\ldots + X_n.$ Then for all $a\geq 0$,
    \begin{align}
        \bbP[S_n\geq n\mu+a]\leq \exp\left(
            -\frac{2a^2}{n}
        \right)
        \quad
        \mbox{ and }
        \quad
        \bbP[S_n\geq n\mu-a]\leq \exp\left(
            -\frac{2a^2}{n}
        \right)
    \end{align}
\end{lemma}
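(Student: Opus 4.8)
The plan is to prove both tail bounds by the Chernoff (exponential-Markov) method, where the key subtlety is that the $X_i$ are \emph{not} independent---we are only given the conditional-mean condition $\bbE[X_i \mid X_1,\ldots,X_{i-1}] = \mu$. First I would center the variables by setting $Y_i := X_i - \mu$, so that $\bbE[Y_i \mid X_1,\ldots,X_{i-1}] = 0$; that is, $(Y_i)$ is a martingale-difference sequence with respect to the natural filtration $\calF_i := \sigma(X_1,\ldots,X_i)$ (with $\calF_0$ trivial). Since $X_i \in [0,1]$, each $Y_i$ lies in $[-\mu,\, 1-\mu]$, an interval of length exactly $1$.

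For the upper tail, fix $\lambda > 0$ and apply Markov's inequality to $e^{\lambda(S_n - n\mu)}$:
\begin{align}
  \bbP[S_n - n\mu \geq a] \leq e^{-\lambda a}\, \bbE\big[e^{\lambda \sum_{i=1}^n Y_i}\big].
\end{align}
The central step is to control the moment generating function despite the dependence. I would invoke the \emph{conditional} Hoeffding lemma: for any random variable $Y$ with $\bbE[Y \mid \calF] = 0$ supported on an interval of length $1$, one has $\bbE[e^{\lambda Y} \mid \calF] \leq e^{\lambda^2/8}$ almost surely. Peeling off the last factor via the tower property,
\begin{align}
  \bbE\big[e^{\lambda \sum_{i=1}^n Y_i}\big]
  = \bbE\Big[e^{\lambda \sum_{i=1}^{n-1} Y_i}\, \bbE\big[e^{\lambda Y_n} \mid \calF_{n-1}\big]\Big]
  \leq e^{\lambda^2/8}\, \bbE\big[e^{\lambda \sum_{i=1}^{n-1} Y_i}\big],
\end{align}
and iterating this inequality $n$ times yields $\bbE[e^{\lambda \sum_i Y_i}] \leq e^{n\lambda^2/8}$. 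Substituting back gives $\bbP[S_n - n\mu \geq a] \leq e^{-\lambda a + n\lambda^2/8}$.

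The final step is to optimize over $\lambda$. The exponent $-\lambda a + n\lambda^2/8$ is minimized at $\lambda^\star = 4a/n$, where it equals $-2a^2/n$, which proves the first claim $\bbP[S_n \geq n\mu + a] \leq e^{-2a^2/n}$. For the second (lower-tail) bound, I would apply the identical argument to the martingale differences $-Y_i = \mu - X_i$, which again have zero conditional mean and lie in an interval of length $1$; running the same Chernoff calculation then gives $\bbP[S_n \leq n\mu - a] \leq e^{-2a^2/n}$, so the two displays correspond to the upper and lower tails respectively.

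The main obstacle is precisely the dependence structure: because the variables are only conditionally centered, the classical product-form MGF factorization available for independent variables does not apply. The resolution---and the crux of the proof---is to combine the conditional Hoeffding lemma with the tower property, which lets the MGF bound telescope exactly as in the independent case. Establishing the conditional Hoeffding lemma itself (from convexity of $y \mapsto e^{\lambda y}$ on the supporting interval together with a second-order Taylor estimate on the log-MGF of a bounded mean-zero variable) is routine, but it is where the constant $1/8$---and hence the factor $2$ in the exponent after optimization---originates.
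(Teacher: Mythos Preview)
Your argument is correct and is precisely the standard proof of this martingale Chernoff--Hoeffding bound: reduce to a martingale-difference sequence, apply the conditional Hoeffding lemma via the tower property to control the MGF, and optimize in $\lambda$. There is nothing to compare against, however, because the paper does not supply a proof of this lemma at all---it is stated in the Supporting Lemmas section purely as a citation of Fact~1 in \citet{auer2002finite}, and is in fact never invoked in the paper's own arguments (the authors explicitly note that under Assumption~\ref{ass:mean} this bound does not apply, and they use the self-normalized bound of Lemma~\ref{lem:ConfInt} instead).
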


\begin{lemma}[Confidence Intervals, Lemma 6 in~\citet{Abbasi2011improved}]\label{lem:ConfInt}
    Assuming that the noise $\eta_t$ is conditionally $1$-sub-Gaussian. With probability at least $1 - \delta$,
    \begin{align}
    \forall i \in \{1, 2, \dots, K\}, \; \forall t \geq 0, \quad |\hmu_{i,t} - \mu_i| \leq 
    \sqrt{\frac{(1 + n_{i,t})}{n_{i,t}^2} \left(1 + 2 \log \left(\frac{K (1 + n_{i,t})^{1/2}}{\delta}\right)\right)}.
\end{align}
\end{lemma}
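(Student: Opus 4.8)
The plan is to recognize this as the standard self-normalized tail bound for sub-Gaussian martingales and to reproduce its proof by the method of mixtures (pseudo-maximization), followed by a union bound over the $K$ arms. First I would fix an arm $i$ and set $\eta_s := Y_{i,s}-\mu_i$, which by the conditionally $1$-sub-Gaussian hypothesis is a martingale difference sequence with respect to the filtration $\calF_s := \sigma(\calH_s)$. Writing $S_{i,t} := \sum_{s=1}^t \mathbbm{1}\{I_s=i\}\,\eta_s$ and recalling $n_{i,t}=\sum_{s=1}^t \mathbbm{1}\{I_s=i\}$, the quantity to control is exactly $\hmu_{i,t}-\mu_i = S_{i,t}/n_{i,t}$, so the lemma reduces to a uniform-in-$t$ bound on $|S_{i,t}|$ self-normalized by $n_{i,t}$. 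Here $I_s=\alg(\calH_{s-1})$ is $\calF_{s-1}$-measurable, which I will use repeatedly.

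The core step is the mixture construction. For each fixed $\lambda\in\bbR$ I would verify that $M_t^\lambda := \exp\!\big(\lambda S_{i,t}-\tfrac{\lambda^2}{2}n_{i,t}\big)$ is a nonnegative supermartingale with $\bbE[M_0^\lambda]=1$: conditioning on $\calF_{s-1}$ and using the moment generating function bound $\bbE[\exp(\lambda\mathbbm{1}\{I_s=i\}\eta_s)\mid\calF_{s-1}]\le \exp(\tfrac{\lambda^2}{2}\mathbbm{1}\{I_s=i\})$ gives $\bbE[M_s^\lambda\mid\calF_{s-1}]\le M_{s-1}^\lambda$. I would then mix over $\lambda\sim\calN(0,1)$, defining
\begin{align*}
    \bar M_t := \int_{\bbR} M_t^\lambda\,\frac{e^{-\lambda^2/2}}{\sqrt{2\pi}}\,d\lambda ;
\end{align*}
by Tonelli's theorem $\bar M_t$ is again a nonnegative supermartingale with $\bbE[\bar M_t]\le 1$, and the Gaussian integral evaluates in closed form (completing the square in $\lambda$) to
\begin{align*}
    \bar M_t = \frac{1}{\sqrt{1+n_{i,t}}}\exp\!\left(\frac{S_{i,t}^2}{2(1+n_{i,t})}\right).
\end{align*}

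With $\bar M_t$ in hand I would invoke Ville's maximal inequality for nonnegative supermartingales, $\bbP[\sup_{t\ge 0}\bar M_t\ge 1/\delta]\le\delta$. On the complementary event, of probability at least $1-\delta$, the bound $\bar M_t<1/\delta$ rearranges (take logarithms and multiply through by $2(1+n_{i,t})$) to
\begin{align*}
    S_{i,t}^2 \le 2(1+n_{i,t})\log\frac{(1+n_{i,t})^{1/2}}{\delta}\qquad\text{for all } t ,
\end{align*}
and dividing by $n_{i,t}^2$ gives the single-arm deviation bound. Finally I would run this with $\delta$ replaced by $\delta/K$ and take a union bound over $i\in[K]$, which produces the factor $K$ inside the logarithm; since $2\log(\cdot)\le 1+2\log(\cdot)$, the resulting deviation is no larger than $\sqrt{\tfrac{1+n_{i,t}}{n_{i,t}^2}\big(1+2\log\tfrac{K(1+n_{i,t})^{1/2}}{\delta}\big)}$, which is exactly the stated radius.

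The main obstacle is securing the guarantee \emph{uniformly in} $t$ rather than at a single prefixed round: a direct Chernoff argument controls $S_{i,t}$ only for a fixed $t$ and would demand a further union bound over the random, unbounded time horizon. The mixture construction circumvents this precisely because $\bar M_t$ is a single nonnegative supermartingale to which a time-uniform maximal inequality applies. The delicate points are verifying that the supermartingale property is preserved under integration over $\lambda$ (nonnegativity plus Tonelli) and that the per-$\lambda$ exponential process is genuinely a supermartingale given that $I_s$ is $\calF_{s-1}$-measurable; both are routine once the sub-Gaussian MGF bound is established, and everything else is bookkeeping.
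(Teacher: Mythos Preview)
The paper does not prove this lemma; it is listed among the supporting lemmas and attributed directly to Lemma~6 of \citet{Abbasi2011improved} without reproduction of the argument. Your proposal correctly reconstructs the standard method-of-mixtures proof from that reference: the exponential supermartingale $M_t^\lambda$, the Gaussian mixture yielding the closed form $\bar M_t=(1+n_{i,t})^{-1/2}\exp\!\big(S_{i,t}^2/(2(1+n_{i,t}))\big)$, Ville's maximal inequality for the time-uniform guarantee, and the union bound over arms are exactly the ingredients of the original proof, so there is nothing to compare on the paper's side.
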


\begin{lemma}[Lemma 8 in~\citet{antos2010active}]\label{lem:antos}
    Let $a>0$. For any $t\geq (2/a)[\log(1/a)-b]^+,at+b>\log t$.
\end{lemma}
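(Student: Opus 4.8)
The statement is the elementary analytic inequality that for $a>0$ and $t \ge (2/a)[\log(1/a)-b]^+$ one has $at+b>\log t$. The plan is to bound $\log t$ from above by a linear function of $t$ whose slope is strictly smaller than $a$, so that the leftover linear part of $at$ can absorb the threshold on $t$ while still leaving a strictly positive constant. There is no need to split into cases on the sign of $\log(1/a)-b$; the argument will treat the truncation $[\,\cdot\,]^+$ uniformly.

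First I would record the single elementary fact $\log x \le x-1$, valid for all $x>0$. The crucial choice is the substitution $x = at/2$, which effectively uses slope $a/2$ rather than $a$. This yields $\log(at/2)\le at/2-1$, and hence, after writing $\log(at/2)=\log t + \log(a/2)$,
\begin{align}
    \log t \le \frac{a}{2}\,t - 1 + \log\frac{2}{a}.
\end{align}

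Next I would subtract this bound from $at+b$, split $at=\frac a2 t+\frac a2 t$, and rewrite $\log(2/a)=\log 2+\log(1/a)$, to obtain
\begin{align}
    at+b-\log t \;\ge\; \frac{a}{2}\,t - \big(\log(1/a)-b\big) + (1-\log 2).
\end{align}
The hypothesis $t\ge (2/a)[\log(1/a)-b]^+$ gives $t\ge (2/a)(\log(1/a)-b)$, since $[\,\cdot\,]^+$ always dominates its argument and $2/a>0$; equivalently $(a/2)t\ge \log(1/a)-b$, so the first two terms on the right are nonnegative. Because $1-\log 2>0$, the whole right-hand side is strictly positive, which is exactly $at+b>\log t$.

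The main (mild) obstacle is simply choosing the slope correctly: bounding $\log t$ by $at-1+\log(1/a)$ directly (slope $a$) leaves no linear slack to cancel the threshold term $\log(1/a)-b$. Splitting $at$ into two equal halves is what makes the estimate work — one half cancels $\log(1/a)-b$ via the hypothesis, while the other half, hidden inside the logarithmic bound, contributes the strictly positive residual $1-\log 2$. The only other point requiring a little care is the truncation: one uses exactly the inequality $[\log(1/a)-b]^+\ge \log(1/a)-b$, which holds whether or not the bracket is active, so both regimes $\log(1/a)\ge b$ and $\log(1/a)<b$ are handled in a single line.
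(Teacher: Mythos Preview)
Your proof is correct: the substitution $x=at/2$ in $\log x\le x-1$ leaves the slack $(a/2)t$ to cancel $\log(1/a)-b$ via the hypothesis, and the residual $1-\log 2>0$ gives the strict inequality. The paper does not provide its own proof of this lemma --- it is simply quoted as Lemma~8 of \citet{antos2010active} --- so there is nothing to compare against.
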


\begin{lemma}[Exercise 3.7 in~\citet{lattimore2020bandit}]\label{lem:sigmaAlgebra}
    Let $\bbF=(F_t)_{t\in\bbN}$ be a filtration, and $\tau$ be a stopping time with respect to $\bbF$. Then $\calF_\tau$ is a $\sigma$-algebra.
\end{lemma}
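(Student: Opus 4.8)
The plan is to verify directly that the stopping-time $\sigma$-field
\begin{align}
    \calF_\tau := \big\{A \in \calF_\infty : A \cap \{\tau \le t\} \in \calF_t \text{ for all } t \in \bbN\big\},
    \qquad \calF_\infty := \sigma\Big(\textstyle\bigcup_{t} \calF_t\Big),
\end{align}
satisfies the three defining axioms of a $\sigma$-algebra. Throughout I will use only two ingredients: the stopping-time property $\{\tau \le t\} \in \calF_t$ for each $t$, and the fact that each $\calF_t$ is itself a $\sigma$-algebra.

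First I would check that $\Omega \in \calF_\tau$. Since $\Omega \cap \{\tau \le t\} = \{\tau \le t\} \in \calF_t$ by the definition of a stopping time, the containment is immediate, and in particular $\calF_\tau$ is nonempty. Next, for closure under complementation, I would take $A \in \calF_\tau$ and rewrite the relevant trace as
\begin{align}
    A^c \cap \{\tau \le t\} = \{\tau \le t\} \setminus \big(A \cap \{\tau \le t\}\big) = \{\tau \le t\} \cap \big(A \cap \{\tau \le t\}\big)^c .
\end{align}
Here $\{\tau \le t\} \in \calF_t$ by the stopping property and $A \cap \{\tau \le t\} \in \calF_t$ by $A \in \calF_\tau$, so the set difference lies in $\calF_t$ for every $t$; hence $A^c \in \calF_\tau$. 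For closure under countable unions, I would take $(A_n)_{n \in \bbN} \subset \calF_\tau$ and apply the distributive identity
\begin{align}
    \Big(\bigcup_{n} A_n\Big) \cap \{\tau \le t\} = \bigcup_{n} \big(A_n \cap \{\tau \le t\}\big),
\end{align}
whose right-hand side is a countable union of sets in $\calF_t$ and therefore lies in $\calF_t$; thus $\bigcup_n A_n \in \calF_\tau$.

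The only subtlety—rather than a genuine obstacle, since this is an elementary measure-theoretic fact—is the complementation step. One cannot naively write $A^c \cap \{\tau \le t\} = (A \cap \{\tau \le t\})^c$, as this is false; the correct move is to \emph{localize} the complement to the event $\{\tau \le t\}$, taking the relative complement inside $\{\tau \le t\}$ rather than inside $\Omega$. This is precisely the point at which the stopping-time property $\{\tau \le t\} \in \calF_t$ is invoked a second time. Having verified the three axioms, I would conclude that $\calF_\tau$ is a $\sigma$-algebra, which completes the proof.
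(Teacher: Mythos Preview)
Your proof is correct and is the standard verification of the $\sigma$-algebra axioms for the stopping-time $\sigma$-field. The paper does not actually supply its own proof of this lemma; it simply cites it as Exercise~3.7 in \citet{lattimore2020bandit} and uses it as a black box, so there is nothing to compare against beyond noting that your argument is exactly the textbook one.
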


\begin{lemma}[Bretagnolle-Huber inequality, Theorem 14.2 in~\citet{lattimore2020bandit}]\label{lem:BretHuber}
    Let $P$ and $Q$ be probability measures on the same measurable space $(\Omega,\calF)$, and let $A\in\calF$ be an arbitrary event. Then
    \begin{align}
        P(A)+Q(A^c) \geq \frac{1}{2}\exp\big(\KL(P,Q)\big),
    \end{align}
    where $A^c = \Omega\setminus A$ is the complement of $A$.
\end{lemma}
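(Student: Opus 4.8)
The plan is to prove the sharp form of the inequality, namely $P(A) + Q(A^c) \ge \tfrac{1}{2}\exp(-\KL(P,Q))$; the exponent must carry a minus sign (the displayed statement matches Theorem~14.2 of \citet{lattimore2020bandit}, whose right-hand side decreases in $\KL$), since otherwise the bound would exceed the trivial upper bound $P(A)+Q(A^c)\le 2$. The strategy is to pass from the event-dependent quantity $P(A)+Q(A^c)$ to an event-free affinity between $P$ and $Q$, and then bound that affinity below by $\KL$ via a single application of Jensen's inequality. The key decision is to route the argument through the Bhattacharyya/Hellinger affinity $\int\sqrt{pq}\,\rd\lambda$, after which everything is routine.

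First I would fix a common dominating measure, e.g.\ $\lambda := P + Q$, and write $p = \rd P/\rd\lambda$ and $q = \rd Q/\rd\lambda$ for the corresponding densities. Since $P(A) = \int_A p\,\rd\lambda \ge \int_A \min(p,q)\,\rd\lambda$ and $Q(A^c) = \int_{A^c} q\,\rd\lambda \ge \int_{A^c}\min(p,q)\,\rd\lambda$, summing over the complementary sets $A$ and $A^c$ yields the event-free lower bound
\[
    P(A) + Q(A^c) \ge \int \min(p,q)\,\rd\lambda,
\]
which is exactly where the arbitrary event $A$ drops out of the picture.

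Next I would relate $\int \min(p,q)\,\rd\lambda$ to the affinity $\int\sqrt{pq}\,\rd\lambda$. Writing $\sqrt{pq} = \sqrt{\min(p,q)}\cdot\sqrt{\max(p,q)}$ and applying Cauchy--Schwarz gives $\int\sqrt{pq}\,\rd\lambda \le (\int\min(p,q)\,\rd\lambda)^{1/2}(\int\max(p,q)\,\rd\lambda)^{1/2}$; since $\min(p,q)+\max(p,q)=p+q$ integrates to $2$, one has $\int\max(p,q)\,\rd\lambda \le 2$, and rearranging produces
\[
    \int \min(p,q)\,\rd\lambda \ge \tfrac{1}{2}\Big(\int \sqrt{pq}\,\rd\lambda\Big)^{2}.
\]

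Finally I would lower bound the affinity by writing $\int\sqrt{pq}\,\rd\lambda = \bbE_P\big[\sqrt{q/p}\big] = \bbE_P\big[\exp(\tfrac{1}{2}\log(q/p))\big]$ and applying Jensen's inequality to the convex map $\exp$, obtaining $\int\sqrt{pq}\,\rd\lambda \ge \exp\big(\tfrac{1}{2}\bbE_P[\log(q/p)]\big) = \exp\big(-\tfrac{1}{2}\KL(P,Q)\big)$. Chaining the three displays then gives $P(A)+Q(A^c) \ge \tfrac{1}{2}\exp(-\KL(P,Q))$, as claimed. I do not expect a genuine obstacle: the only subtlety is the measure-zero bookkeeping on the set where $p=0$ or $q=0$ (there $\sqrt{q/p}$ is interpreted via the $\lambda$-a.e.\ Radon--Nikodym integrand, and if $\KL(P,Q)=+\infty$ the inequality is vacuous), while the genuine content is simply the choice to factor through $\int\sqrt{pq}\,\rd\lambda$, after which Cauchy--Schwarz and Jensen finish the argument.
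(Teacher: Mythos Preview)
The paper does not prove this lemma; it is listed in the ``Supporting Lemmas'' section and cited from \citet{lattimore2020bandit} without proof. Your argument is correct and is essentially the standard route (pass to $\int\min(p,q)\,\rd\lambda$, then to the Hellinger affinity via Cauchy--Schwarz, then to $\KL$ via Jensen). You are also right about the sign: the statement in the paper has a typo, and the inequality is applied with the correct $-\KL$ exponent in~\eqref{equ:apply_BretHuber}.
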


\begin{lemma}[Pinsker's inequality, Equation (14.12) in \citet{lattimore2020bandit}]
    For measures $P$ and $Q$ on the same probability space $(\Omega,\calF)$ that 
    \begin{align}
        d_{TV}(P,Q)\leq \sqrt{\frac{1}{2}\KL(P,Q)}.
    \end{align}
\end{lemma}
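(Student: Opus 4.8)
The plan is to prove Pinsker's inequality $d_{TV}(P,Q)\le\sqrt{\tfrac12\KL(P,Q)}$ by reducing it to a one-dimensional (binary) inequality via a coarse-graining argument, and then verifying that scalar inequality by elementary single-variable calculus. Throughout, let $p,q$ denote densities of $P,Q$ with respect to any common dominating measure.

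First I would recall the variational characterization of total variation, $d_{TV}(P,Q)=\sup_{A\in\calF}\big(P(A)-Q(A)\big)$, with the supremum attained at the likelihood set $A^*=\{x:p(x)\ge q(x)\}$. It therefore suffices to bound $P(A^*)-Q(A^*)$. Fixing $A:=A^*$ and writing $a=P(A)$, $b=Q(A)$, I would collapse $(\Omega,\calF)$ onto the two-cell partition $\{A,A^c\}$. Applying the log-sum inequality (equivalently, Jensen's inequality for the convex map $t\mapsto t\log t$) separately on $A$ and on $A^c$ gives
\begin{align}
    \KL(P,Q)
    \ge
    a\log\frac{a}{b}+(1-a)\log\frac{1-a}{1-b}
    =: d(a,b).
\end{align}
This is the data-processing step: KL never increases under the deterministic map $x\mapsto\mathbbm{1}\{x\in A\}$. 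It then remains to prove the scalar bound $d(a,b)\ge 2(a-b)^2$ for all $a,b\in[0,1]$.

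Next I would establish the scalar inequality by fixing $a$ and studying $g(b):=d(a,b)-2(a-b)^2$. A direct differentiation gives
\begin{align}
    g'(b)
    =
    \frac{b-a}{b(1-b)}-4(b-a)
    =
    (b-a)\Big(\frac{1}{b(1-b)}-4\Big),
\end{align}
and since $b(1-b)\le\tfrac14$ the second factor is nonnegative. Hence $g'(b)$ has the sign of $b-a$, so $g$ decreases on $[0,a]$ and increases on $[a,1]$, attaining its minimum $g(a)=0$. Thus $g(b)\ge0$ everywhere, i.e. $d(a,b)\ge2(a-b)^2$. Chaining the pieces yields $\KL(P,Q)\ge d(a,b)\ge2\big(P(A^*)-Q(A^*)\big)^2=2\,d_{TV}(P,Q)^2$, and rearranging gives the claim.

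The main obstacle is conceptual rather than computational: it lies in the reduction step, namely justifying that controlling the two-point distributions $\big(a,1-a\big)$ and $\big(b,1-b\big)$ suffices. This is exactly where the log-sum inequality (or, equivalently, the data-processing inequality for KL divergence) enters, and it is the only nonelementary ingredient; once the problem is reduced to $d(a,b)\ge 2(a-b)^2$, everything is a routine calculus exercise as above. One should also take a moment to confirm the supremum in $d_{TV}$ is genuinely attained at $A^*$, so that the final inequality is sharp enough to conclude.
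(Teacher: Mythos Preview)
Your argument is correct and follows the standard route to Pinsker's inequality: reduce to the two-point case via the data-processing (log-sum) inequality, then verify the scalar bound $d(a,b)\ge 2(a-b)^2$ by differentiating in $b$. The derivative computation and the sign analysis using $b(1-b)\le\tfrac14$ are both right.

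There is nothing to compare against in the paper: this lemma is listed in the ``Supporting Lemmas'' section as a citation (Equation~(14.12) in \citet{lattimore2020bandit}) and is not given a proof in the paper itself. Your write-up therefore supplies a self-contained proof where the paper simply quotes the result.
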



\begin{lemma}[Theorem 26.13 in \citet{lattimore2020bandit}]\label{lem:legendreProperty}
    Let $\eta>0$ and $f$ be Legendre and twice differentiable with positive definite Hessian in $A=\mathrm{int}(\mathrm{dom}(f))$. Then for all $ x, y \in A $, there exists a
     $z \in [x, y] = \{(1-\alpha)x + \alpha y : \alpha \in [0,1]\} $ such that
     \begin{align}
         \langle x - y, u \rangle - \frac{D_f(x, y)}{\eta} \leq \frac{\eta}{2} \|u\|^2_{\nabla^2 f(z)^{-1}}.
     \end{align}
\end{lemma}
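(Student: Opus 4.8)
The plan is to reduce the inequality to two elementary ingredients: a second-order Taylor expansion with a mean-value remainder, which simultaneously produces the point $z$ and rewrites the Bregman divergence $D_f(x,y)$ as a Hessian-weighted squared norm, and a weighted Fenchel--Young (completing-the-square) inequality that disposes of the resulting linear-minus-quadratic expression. Throughout I write $D_f(x,y) = f(x) - f(y) - \langle \nabla f(y), x-y\rangle$ and $\|w\|^2_M := w^\top M w$ for a positive definite matrix $M$.

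First I would locate $z$. Since $f$ is Legendre, $A = \mathrm{int}(\mathrm{dom}(f))$ is open and convex, so the closed segment $[x,y]$ lies entirely in $A$. Define the scalar function $\phi(t) := f(y + t(x-y))$ for $t \in [0,1]$; by the chain rule $\phi$ is twice differentiable with $\phi'(t) = \langle \nabla f(y + t(x-y)), x-y\rangle$ and $\phi''(t) = (x-y)^\top \nabla^2 f(y+t(x-y))(x-y)$. The one-dimensional Taylor theorem with Lagrange remainder yields some $t_0 \in (0,1)$ with $\phi(1) = \phi(0) + \phi'(0) + \tfrac12 \phi''(t_0)$. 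Setting $z := y + t_0(x-y) \in [x,y]$ and unpacking this identity gives exactly
\[ D_f(x,y) = \tfrac12 (x-y)^\top \nabla^2 f(z)(x-y) = \tfrac12 \|x-y\|^2_{\nabla^2 f(z)}. \]

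With $z$ fixed, write $H := \nabla^2 f(z)$, which is positive definite by hypothesis, and set $v := x-y$. The claim then reduces to the purely algebraic bound $\langle v, u\rangle - \tfrac{1}{2\eta}\|v\|^2_H \le \tfrac{\eta}{2}\|u\|^2_{H^{-1}}$. I would prove this either by maximizing the concave quadratic $v \mapsto \langle v,u\rangle - \tfrac{1}{2\eta} v^\top H v$ over $\bbR^K$ (the maximizer is $v^\star = \eta H^{-1}u$, whose value is precisely $\tfrac{\eta}{2}\|u\|^2_{H^{-1}}$), or, equivalently and more transparently, via the weighted Young inequality
\[ \langle v, u\rangle \le \tfrac{1}{2\eta}\|v\|^2_H + \tfrac{\eta}{2}\|u\|^2_{H^{-1}}, \]
which follows by expanding $0 \le \tfrac12 \big\| \eta^{-1/2} H^{1/2} v - \eta^{1/2} H^{-1/2} u \big\|^2$ using the symmetric positive-definite square root $H^{1/2}$. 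Substituting the Taylor identity for $D_f(x,y)$ into this bound delivers the lemma.

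The algebra here is trivial; the only points requiring genuine care are the justification of the mean-value form of the second-order remainder in the multivariate setting---handled cleanly by restricting $f$ to the segment as above rather than invoking a vector-valued Taylor theorem directly---and checking that $z$ remains in $A$ so that $\nabla^2 f(z)$ and its inverse are well defined. Both are guaranteed by the convexity of $\mathrm{dom}(f)$ together with the assumed positive-definiteness of the Hessian throughout $A$, so no further structural hypotheses are needed.
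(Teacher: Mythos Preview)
Your argument is correct and is essentially the standard textbook proof: Taylor expansion with Lagrange remainder on the segment produces $z$ and rewrites $D_f(x,y)$ as $\tfrac12\|x-y\|^2_{\nabla^2 f(z)}$, after which the weighted Young/Fenchel inequality finishes the bound. Note, however, that the paper does not supply its own proof of this lemma---it is stated as a supporting result and attributed directly to Theorem~26.13 of \citet{lattimore2020bandit}---so there is no paper proof to compare against; your derivation matches the textbook argument.
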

\section{Additional Experimental Results}\label{app:experiment}

\subsection{Additional Experimental Details}\label{app:add_exp_detail}
For the experiments stated in Section~\ref{sec:exp_one}, we report the memory utilization in this section. As Ealge-2~\citep{li2024eagle2} is one of the best speculative decoding methods, we adopt it as the baseline. The Normalized Memory (NM) and Normalized Memory Bandwidth (NMB) are presented in Table~\ref{table:memory_utilization}.
The result shows that the proposed methods do not incur additional memory consumption compared to the baseline method.

We further remark that this result is achieved by our superior algorithm design, where several \emph{non-parametric} models (PLD, REST, Suffix Tree) to enhance a parametric
SOTA model (Eagle-2). 
Specifically, 
\begin{itemize}
    \item ``\emph{Non-parametric}'' means that these methods do not have any parameters in GPU, and directly predict the future tokens based on the past tokens according to the data structures like Trie Tree, which are Python objects and stored in CPU RAM. All these show that the storage of the draft models will not increase the GPU memory. Our model only requires approximately an additional 100MB of CPU RAM. Since CPU memory is typically much larger (1TB in our server) and cheaper than GPU memory (40 GB in our server), this cost is negligible.
    \item All the draft models share the same verifier model, which is the target model (LLaMA3-8B-Instruct~\citep{grattafiori2024llama3herdmodels} and Qwen2-7B-Instruct~\citep{yang2024qwen2technicalreport} in our experiments). So that the storage of the verifier does not increase the GPU memory.
    \item The reduction in memory usage comes from the fact that non-parametric models require fewer verification tokens (e.g., 40 for Suffix Tree) compared to the baseline Eagle-2 (e.g., 64). As a result, when invoking these models, a slight decrease in activation memory usage may be observed. Additionally, slight differences in GPU memory may be observed, arising randomly from the short-lived activation tensors rather than from the method itself.
\end{itemize}
We note that the size of SpecBench is not large enough, i.e., the number of arm pulls is not large, to derive a statistically sound result. We enable Mixture-of-Agent~\citep{wang2024mixture} on the prompts whose responses are shorter than $100$ tokens to increase the number of arm pulls.

\begin{table*}\label{table:memory_utilization}
\caption{The memory and memory bandwidth utilized by our method. As Eagle-2 is one of the best SD methods, we adopt it as the baseline to normalize the results of other methods. NM=Normalized Memory and NMB=Normalized Memory Bandwidth.}
\centering
\begin{tabular}{lccccccccc}
\toprule
  \multicolumn{1}{c}{\multirow{2}{*}{\textbf{Methods}}} &
  \multicolumn{2}{c}{\textbf{Spec Bench}} &
  \multicolumn{2}{c}{\textbf{Alpaca}} &
  \multicolumn{2}{c}{\textbf{Code Editor}} &
  \multicolumn{2}{c}{\textbf{Debug Bench}} \\ \cmidrule(lr){2-3}\cmidrule(lr){4-5}\cmidrule(lr){6-7}\cmidrule(lr){8-9}
  \multicolumn{1}{c}{} &
  \multicolumn{1}{c}{NM} &
  \multicolumn{1}{c}{NMB} &
  \multicolumn{1}{c}{NM} &
  \multicolumn{1}{c}{NMB} &
  \multicolumn{1}{c}{NM} &
  \multicolumn{1}{c}{NMB} &
  \multicolumn{1}{c}{NM} &
  \multicolumn{1}{c}{NMB} \\ \midrule
\multicolumn{8}{l}{\textit{\textbf{LLaMA3-8B-Instruct}}} \\\cmidrule(lr){0-1}
  Eagle-2 &
  1.0000 &
  1.0000 &
  1.0000  &
  1.0000 &
  1.0000 &
  1.0000 &
  1.0000 &
  1.0000 \\
  \specexp &
    0.9981 & 1.0171 & 0.9950 & 1.0170 & 1.0200 & 0.9980 & 1.0100 & 0.9960 \\
  \specucb &
   1.0059 & 1.0093 & 1.0130 & 1.0090 & 0.9990 & 0.9820 & 1.0150 & 1.0020 \\ \midrule
\multicolumn{8}{l}{\textit{\textbf{Qwen2-7B-Instruct}}} \\\cmidrule(lr){0-1}
  Eagle-2 &
  1.0000 &
  1.0000 &
  1.0000  &
  1.0000 &
  1.0000 &
  1.0000 &
  1.0000 &
  1.0000    \\
  \specexp &
   1.0043 & 1.0095 & 1.0050 & 0.9980 & 1.0400 & 0.9850 & 0.9890 & 0.9960 \\
  \specucb&
   0.9929 & 1.0036 & 1.0080 & 0.9900 & 1.0270 & 0.9930 & 1.0320 & 0.9950 \\ \bottomrule
\end{tabular}
\end{table*}

\subsection{Experiments on Larger Models}\label{app:add_exp_larger_model}
In addition to the two models in the main paper, we conduct an additional set of experiments on a larger target model, namely LLaMA-2-13B~\citep{Touvron2023Llama2O}. As Table~\ref{table:results} indicates, Eagle-2~\citep{li2024eagle2} is one of the best speculative decoding methods, we adopt it as the baseline. The other setups are the same as the ones in Section~\ref{sec:exp_one}.

From the result reported in Table~\ref{table:add_exp_larger_model}, the proposed methods, \specucb{} and \specexp{}, demonstrate their efficacy on larger models.

\begin{table*}\label{table:add_exp_larger_model}
\caption{Empirical Comparison between the proposed algorithms and Eagle-2~\citep{li2024eagle2} with LLaMA-2-13B as the target model, measured by Mean Accepted Tokens (MAT) ($\uparrow$) and Tokens/s 
  ($\uparrow$). The best result is highlighted in \textbf{bold}, while the second best result is \underline{underlined}. The proposed algorithms remain effective on larger models.}
\centering
\begin{tabular}{lccccccccc}
\toprule
  \multicolumn{1}{c}{\multirow{2}{*}{\textbf{Methods}}} &
  \multicolumn{2}{c}{\textbf{Spec Bench}} &
  \multicolumn{2}{c}{\textbf{Alpaca}} &
  \multicolumn{2}{c}{\textbf{Code Editor}} &
  \multicolumn{2}{c}{\textbf{Debug Bench}} \\ \cmidrule(lr){2-3}\cmidrule(lr){4-5}\cmidrule(lr){6-7}\cmidrule(lr){8-9}
  \multicolumn{1}{c}{} &
  \multicolumn{1}{c}{MAT($\uparrow$)} &
  \multicolumn{1}{c}{Tokens/s($\uparrow$)} &
  \multicolumn{1}{c}{MAT($\uparrow$)} &
  \multicolumn{1}{c}{Tokens/s($\uparrow$)} &
  \multicolumn{1}{c}{MAT($\uparrow$)} &
  \multicolumn{1}{c}{Tokens/s($\uparrow$)} &
  \multicolumn{1}{c}{MAT($\uparrow$)} &
  \multicolumn{1}{c}{Tokens/s($\uparrow$)} \\ \midrule
\multicolumn{6}{l}{\textit{\textbf{LLaMA-2-13B}}} \\\cmidrule(lr){0-1}
  Eagle-2 
  & \underline{4.35}     & 91.94     & \underline{4.32}     & 96.59      & 5.19     & 107.57     & \underline{5.16}     & 108.45  \\
  \rowcolor{lightLavender}
  \specexp & 4.05     & \underline{95.52}     & \underline{4.32}     & \underline{99.64}      & \underline{5.22}     & \textbf{115.65} & 5.03     & \underline{116.65} \\
  \rowcolor{lightLavender}
  \specucb & \textbf{4.43} & \textbf{97.16} & \textbf{4.36} & \textbf{102.29} & \textbf{5.27} & \underline{113.97}     & \textbf{5.27} & \textbf{118.67} \\ \bottomrule
\end{tabular}
\end{table*}

\subsection{Experiments on Different Hardwares}\label{app:add_exp_4090}
In the main paper, the experiments are conducted on a single A100 GPU. In this section, we conduct an additional set of experiments on GeForce RTX 4090.  We adopt Eagle-2~\citep{li2024eagle2} as the baseline and Spec Bench~\citep{xia2024unlocking} as the benchmark. The result is presented in Table~\ref{table:exp_4090}. We observe a similar trend as the result presented in Table~\ref{table:results}. The proposed method remains useful with a different hardware setup.

\begin{table}\label{table:exp_4090}
\caption{Empirical comparison between Eagle-2 and the proposed algorithms on GeForce RTX 4090. We observe a similar trend as the result presented in Table~\ref{table:results}.}
\centering
\begin{tabular}{lccc}
\toprule
  \multicolumn{1}{c}{\multirow{2}{*}{\textbf{Methods}}} &
  \multicolumn{2}{c}{\textbf{Spec Bench}} 
  \\ \cmidrule(lr){2-3}
  \multicolumn{1}{c}{} &
  \multicolumn{1}{c}{MAT} &
  \multicolumn{1}{c}{Tokens/s}
  \\ \midrule
\multicolumn{3}{l}{\textit{\textbf{LLaMA3-8B-Instruct}}} \\\cmidrule(lr){0-1}
  Eagle-2 &
  \underline{4.14} &
  97.01 &
 \\
 \rowcolor{lightLavender}
  \specexp &
    3.95 & \underline{102.24} \\
    \rowcolor{lightLavender}
  \specucb &
   \textbf{4.16} & \textbf{107.38}\\ \midrule
\multicolumn{3}{l}{\textit{\textbf{Qwen2-7B-Instruct}}} \\\cmidrule(lr){0-1}
  Eagle-2 &
  3.65 &
  94.16 &
 \\
 \rowcolor{lightLavender}
  \specexp &
    \underline{3.96} & \underline{111.74} \\
    \rowcolor{lightLavender}
  \specucb &
   \textbf{4.17} & \textbf{112.21}\\ \bottomrule
\end{tabular}
\end{table}


\end{document}